\newcommand*\samethanks[1][\value{footnote}]{\footnotemark[#1]}
\theoremstyle{plain}
\newtheorem{theorem}{Theorem}[section]
\newtheorem{proposition}[theorem]{Proposition}
\newtheorem{lemma}[theorem]{Lemma}
\theoremstyle{definition}
\newtheorem{assumption}{Assumption}
\theoremstyle{remark}
\def \R {\mathbb{R}}
\def \x {\mathbf{x}}
\def \E {\mathbb{E}}
\def \x {\mathbf{x}}
\def \xih {\widehat{\xi}}
\title{Online Allocation Problem with Two-sided Resource Constraints}
\author[1,2]{Qixin Zhang \thanks{Equal contribution.}}
\author[1]{Wenbing Ye \samethanks}
\author[1]{Zaiyi Chen \samethanks}
\author[1]{Haoyuan Hu}
\author[3]{Enhong Chen}
\author[2]{Yu Yang}
\affil[1]{ Cainiao Network, Hang Zhou, China}
\affil[2]{School of Data Science City University of Hong Kong, Kowloon Hong Kong, China}
\affil[3]{University of Science and Technology of China}
\begin{document}

\maketitle

\begin{abstract}
In this paper, we investigate the online allocation problem of maximizing the overall revenue subject to both lower and upper bound constraints. Compared to the extensively studied online problems with only resource upper bounds, the two-sided constraints affect the prospects of resource consumption more severely. As a result, only limited violations of constraints or pessimistic competitive bounds could be guaranteed. To tackle the challenge, we define a measure of feasibility $\xi^*$ to evaluate the hardness of this problem, and estimate this measurement by an optimization routine with theoretical guarantees. We propose an online algorithm adopting a constructive framework, where we initialize a threshold price vector using the estimation, then dynamically update the price vector and use it for decision-making at each step. It can be shown that the proposed algorithm is $\big(1-O(\frac{\varepsilon}{\xi^*-\varepsilon})\big)$ or $\big(1-O(\frac{\varepsilon}{\xi^*-\sqrt{\varepsilon}})\big)$ competitive with high probability for $\xi^*$ known or unknown respectively. To the best of our knowledge, this is the first result establishing a nearly optimal competitive algorithm for solving two-sided constrained online allocation problems with a high probability of feasibility.

\end{abstract}

\section{Introduction}
\label{sec: intro}
Online resource allocation is a prominent paradigm for sequential decision making during a finite horizon 
subject to the resource constraints, increasingly attracting the wide attention of researchers and practitioners in theoretical computer science \citep{mehta2007adwords,devanur2012online,devanur2019near}, operations research \citep{agrawal2014dynamic,li2021online} and machine learning communities \citep{balseiro2020dual,li2020simple}. In these settings, the requests arrive online and 
we need to serve each request via one of the available channels, which consumes a certain amount of resources and generates a corresponding service charge. The objective of the decision maker is to maximize the cumulative revenue subject to the resource capacity constraints. Such problem frequently appears in many applications including online advertising \citep{mehta2007adwords,buchbinder2007online}, online combinatorial auctions \citep{chawla2010multi}, online linear programming \citep{agrawal2014dynamic,buchbinder2009online}, online routing \citep{buchbinder2006improved}, online multi-leg flight seats and hotel rooms allocation \citep{talluri2004theory}, etc.

The aforementioned online resource allocation framework only considers the capacity (upper bound) constraints for resources. As a measure of fairness for resources expenditure, the requirements for guaranteeing a certain amount of resource allocation 
play important roles in real-world applications \citep{haitao2007fairness,zhang2020request} 
ranging from contractual obligations and group-level fairness to load balance. We give several examples in \cref{appendix:a} for completeness. Recently, some attempts come out to alleviate the difficulties introduced by lower bound constraints. For instance, \citet{lobos2021joint} propose an online mirror descent method to address this new online allocation problem with $O(\sqrt{T})$ asymptotic regret as well as $O(\sqrt{T})$ violation of lower bounds in expectation, where $T$ is the number of requests. Meanwhile, \citet{balseiro2021regularized} consider a more general regularized setting to satisfy fairness requirements by non-separable penalties, which also leads to $O(\sqrt{T})$ asymptotic regret by wisely choosing 
the penalties. All these studies consider the lower bound requirements as \textsl{soft} threshold, i.e., there is no guarantee on the satisfaction of lower bound constraints, which remains an open problem for online resource allocation.



In this paper, we investigate the online resource allocation problem under stochastic setting, where requests are drawn 
independently from some \textsl{unknown} distribution, and arrive sequentially. The goal is to maximize the total revenue subject to the two-sided resource constraints. In the beginning, the decision maker is endowed with a limited and unreplenishable amount of resources, and agrees a minimum consumption on each resource. The decision maker can access the information of the current request and requests that have been processed before, but she is unable to get the information of future requests until their arrivals. Once observing an online request, the decision is made irrevocably to assign one available channel to serve the request, where we assume there is an oracle that determines the revenue and the amount of consumed resources depending on (request, channel) pair for convenience. A formal definition can be found in \cref{sec:preliminaries}. 

To address the challenges brought by two-sided resource constraints, we first define a problem dependent quantity $\gamma$ to represent i). the maximum fraction of revenue or resource consumption per request, ii). the pessimism about the satisfaction of lower bound constraints. 
Generally, large $\gamma$ will rule out good competitive ratio across different settings \citep{mehta2007adwords,buchbinder2007online,agrawal2014dynamic,devanur2019near}. Meanwhile, we find that the margins between lower and upper bounds directly affect the hardness of the online allocation problem in stochastic setting. Inspired by the Slater's condition and its applications \citep{slater2014lagrange,boyd2004convex}, we define a measure of feasibility $\xi^*$ to evaluate the hardness and present an estimator with sufficient accuracy. In the end, the proposed algorithm integrates several estimators to compensate undiscovered information.
Our contribution can be summarized as follows:
\begin{enumerate}
    \item Under the gradually improved assumptions, we propose three algorithms that return $1-O(\frac{\varepsilon}{\xi^*-\varepsilon})$ competitive solutions \textsl{satisfying} the two-sided constraints w.h.p., if $\gamma$ is at most $O\left(\frac{\varepsilon^2}{\ln(K/\varepsilon)}\right)$, where $\xi^*\gg \varepsilon$ is the measure of feasibility and $K$ is the number of resources. To the best of our knowledge, this is the first result establishing a competitive algorithm for solving two-sided constrained online allocation problems \textsl{feasibly}, which is nearly optimal according to  \citep{devanur2019near}. 
    \item To tackle the unknown parameter $\xi^*$, we propose an optimization routine in \cref{alg:M}. Through merging the estimate method into the previous framework, a new algorithm is proposed in \cref{alg:5} with a solution achieving $1-O(\frac{\varepsilon}{\xi^* - \sqrt{\varepsilon}})$ competitive ratio when $\xi^*$ is an unknown and problem dependent constant.
    \item Our analytical tools can be used to strengthen the existing models \citep{mehta2007adwords,devanur2019near,lobos2021joint} for online resource allocation problems. 
\end{enumerate}

\section{Related Works}
\label[section]{sec:related work}
Online allocation problems have been extensively studied in theoretical computer science and operations research communities. In this section, we overview the related literature.

When the incoming requests are adversarially chosen, there is a stream of literature investigating online allocation problems. \citet{mehta2007adwords} and \citet{buchbinder2007online} first study the AdWords problem, a special case of online allocation, and provide an algorithm that obtains a $(1-1/e)$ approximation to the offline optimal allocation, which is optimal under the adversarial input model. However, the adversarial assumption may be too pessimistic about the requests. 
To consider another application scenarios, \citet{devanur2009adwords} propose the random permutation model, where an adversary first selects a sequence of requests which are then presented to the decision maker in random order. This model is more general than the stochastic i.i.d. setting in which requests are drawn independently and at random from an unknown distribution. In this new stochastic model, \citet{devanur2009adwords} revisit the AdWords problem and present a dual training algorithm with two phases: a training phase where data is used to estimate the dual variables by solving a linear program and an exploitation phase where actions are taken using the estimated dual variables. Their algorithm is guaranteed to obtain a $1-o(1)$ competitive ratio, which is problem dependent. \citet{36635} show that this training-based algorithm could resolve more general linear online allocation problems. Pushing these ideas one step further, \citet{agrawal2014dynamic} consider primal and dual  algorithm that dynamically updates dual variables by periodically solving a linear program using the data collected so far. 
Meanwhile, \citet{kesselheim2014primal} take the same policy and only consider renewing the primal variables. Recently, \citet{devanur2019near} take other innovative techniques to geometrically update the price vector via some decreasing potential function derived from probability inequalities. These algorithms also obtain $1-o(1)$ approximation guarantees under some mild assumptions. While the algorithms described above usually require solving large linear problems periodically, there is a recent line of work seeking simple algorithms that does not need to solve a large linear programming. \citet{balseiro2020dual} study a simple dual mirror descent algorithm for online allocation problems with concave reward functions and stochastic inputs, which attains $O(\sqrt{KT})$ regret, where $K$ and $T$ are the number of resources and requests respectively, i.e., updating dual variables via mirror descent algorithm
and avoids solving large auxiliary linear programming. Simultaneously, \citet{li2020simple} present a similar fast algorithm that updates the dual variable via projected gradient descent in every round  for linear rewards. It is worth noting that all of these literature only consider the online allocation with capacity constraints. 

\section{Preliminaries and Assumptions}\label{sec:preliminaries}
In this section, we introduce some essential concepts, assumptions, and concentration inequalities that are adopted in our framework.

\subsection{Two-sided Resource Allocation Framework}\label{subsec:framework}

We consider the following framework for offline resource allocation problems. Let $\mathcal{K}$ be the set of $K$ resources. There are $J$ different types of requests. Each request $j\in\mathcal{J}$ ($|\mathcal{J}|=J$) could be served via some channel $i\in\mathcal{I}$, which will consume $a_{ijk}$ amount of resource $k\in\mathcal{K}$ and generate $w_{ij}$ amount of revenue. 
For each resource $k\in\mathcal{K}$, $L_{k}$ and $U_{k}$ denote the lower bound requirement and capacity, respectively. The objective of the two-sided resource allocation is to maximize the revenue subject to the two-sided resource constraints. The following is the offline integer linear programming for resource allocation where the entire sequence of $T$ requests is given in advance:
\begin{equation}\label{lp:1} 
    \begin{aligned}
        W_{R}=&\max_{x}\ \sum_{i\in\mathcal{I},j\in[T]} w_{ij}x_{ij}\\
        s.t.\ &L_{k}\le\sum_{i\in\mathcal{I},j\in[T]} a_{ijk}x_{ij}\le U_{k}, \forall k\in\mathcal{K}\\
        &\sum_{i\in\mathcal{I}} x_{ij}\le 1, \forall j\in[T]\\
        &x_{ij}\in\{0,1\}, \forall i\in\mathcal{I}, j\in[T]
    \end{aligned}
\end{equation}
where we denote the sample set by $[T]$. It is important to distinguish the collection of all kinds of requests $\mathcal{J}$ and the sample set $[T]$. When $\mathcal{J}$ is used in the LP problem,  the (expected) number of requests of each type is required. As a common relaxation in online resource allocation literature \citep{agrawal2014dynamic,devanur2019near,li2020simple,balseiro2020dual}, not picking any channel is permitted in  ILP~\eqref{lp:1}. We denote the no picking channel option by $\perp\in\mathcal{I}$, where $a_{\perp jk}=0$ and $w_{\perp j}=0$ for all $j\in\mathcal{J}$ and $k\in\mathcal{K}$.

\subsubsection{Online Two-sided Resource Allocation}
To facilitate the analysis in following sections, we hereby explain the detailed assumptions for two-sided online resource allocation problem.

For better exhibition, we denote the following expected LP problem with lower bound $L_{k}+\beta T\bar{a}_{k}$ for every resource $k\in\mathcal{K}$ by $E(\beta)$ , i.e., 
   \begin{equation}\label{equ:E_beta}
    \begin{aligned}
      W_{\beta}=&\max_x\ \sum_{i\in\mathcal{I},j\in\mathcal{J}} Tp_{j}w_{ij}x_{ij}\\
        s.t.\ &L_{k}+\beta T\bar{a}_{k}\le\sum_{i\in\mathcal{I},j\in\mathcal{J}} Tp_{j}a_{ijk}x_{ij}\le U_{k}, \forall k\in\mathcal{K}\\
        &\sum_{i\in\mathcal{I}} x_{ij}\le 1, \forall j\in\mathcal{J}\\
        &x_{ij}\ge 0, \forall i\in\mathcal{I}, j\in\mathcal{J}    
    \end{aligned}
  \end{equation}where $\beta\in [0, \xi^*]$ is a deviation parameter and $W_{\beta}$ is the optimal value for problem $E(\beta)$. Meanwhile, we denote the optimal solution for $E(\beta)$ as $\{x(\beta)_{ij}^{*},\forall i\in\mathcal{I}, j\in\mathcal{J}\}$. The motivation for adding $\beta T \bar{\alpha}_k$ to the lower bound is to \emph{measure} how the gap between the lower and upper bounds affects the difficulty of the problem. More insights can be found in the proof of \cref{thm:1}, where we solve a factor-revealing LP problem and then build the connection between $\xi^*$ (in \cref{assumption:3}) and the competitive ratio.

\begin{assumption}\label{assumption:1}
The requests arrive sequentially and are independently drawn from some unknown distribution $\mathcal{P}:\mathcal{J}\rightarrow [0,1]$, where $\mathcal{P}(j)$ denotes the arrival probability of request $j\in\mathcal{J}$ and we denote $p_{j}=\mathcal{P}(j)$, $\forall j\in\mathcal{J}$.    
\end{assumption}

Under \cref{assumption:1}, if we take the same policy for every request $j\in \mathcal{J}$ in ILP~\eqref{lp:1}, it can be verified that $\mathbb{E}\left[\sum_{i\in\mathcal{I},j\in[T]} a_{ijk}x_{ij}\right]=\sum_{i\in\mathcal{I},j\in\mathcal{J}} Tp_{j}a_{ijk}x_{ij}$ and $\mathbb{E}\left[\sum_{i\in\mathcal{I},j\in[T]} w_{ij}x_{ij}\right]=\sum_{i\in\mathcal{I},j\in\mathcal{J}} Tp_{j}w_{ij}x_{ij}$. Therefore, we could view the LP~\eqref{equ:E_beta} with $\beta=0$ as a relaxed version of the expectation of ILP~\eqref{lp:1}. Moreover, $W_0$ is an upper bound of the expectation of $W_{R}$.

\begin{restatable}{lemma}{objlemma}\label{lemma:1} $W_0\ge\mathbb{E}\left[W_{R}\right]$.
\end{restatable}
The proof of the above lemma is deferred to \cref{appendix:lem1}. The competitive ratio of an algorithm is defined as the ratio of the expected cumulative revenue of the algorithm to $W_0$.

\begin{assumption}\label{assumption:2} 
In stochastic settings, we make the following reasonable assumptions.
\begin{enumerate}  
    \item[i.] When the algorithm is initialized, we know the lower and upper bound requirements $L_{k}$ and $U_{k}$ regarding every resource $k\in\mathcal{K}$ and the number of requests $T$.
    \item[ii.] Without loss of generality, the revenues $w_{ij}$ and consumption of resources $a_{ijk}$ are finite, non-negative and revealed when each request arrives, $\forall i\in\mathcal{I}, j\in\mathcal{J}, k\in\mathcal{K}$. Moreover, we know $\bar{w}= \sup_{i\in\mathcal{I},j\in\mathcal{J}} w_{ij}$ and $\bar{a}_{k}=\sup_{i\in\mathcal{I},j\in\mathcal{J}} a_{ijk},\forall k\in\mathcal{K}$.
\end{enumerate}
\end{assumption} 
The above assumptions are widely adopted in literature. We also make an assumption on the margin of every resource constraint.
\begin{assumption}\label{assumption:3}(\textrm{Strong feasible condition}) 

There exists a $\xi>0$ making the linear constraints of the following problem feasible,
   \begin{equation}\label{equ:5}
    \begin{aligned}
        &\xi^* = \max_{\xi\geq 0,x} \xi\\
        s.t.\ &L_{k}+\xi  T\bar{a}_{k}\le\sum_{i\in\mathcal{I},j\in\mathcal{J}} Tp_{j}a_{ijk}x_{ij}\le U_{k}, \forall k\in\mathcal{K}\\
        &\sum_{i\in\mathcal{I}} x_{ij}\le 1, \forall j\in\mathcal{J}\\
        &x_{ij}\ge 0, \forall i\in\mathcal{I}, j\in\mathcal{J}.
    \end{aligned}
\end{equation}
We call $\xi^*$ the {\bf measure of feasibility}, and 
assume $\xi^*\gg \varepsilon$ for simplicity, where $\varepsilon>0$ is an error parameter. 
\end{assumption}

Due to limited space, we present some frequently used concentration inequalities to predigest the theoretical analysis in \cref{appendix:concentration}.

\section{Competitive Algorithms for Online Resource Allocation with Two-sided Constraints}\label{sec:competitive alg}
In this section, we propose a series of online algorithms for resource allocation with two-sided constraints by progressively weakening the following assumptions.

\ref{sec:known distribution} and \ref{sec:sen}. With known distribution;

\ref{sec:optimal}. With known optimal objective;

\ref{sec:unknow}. Completely unknown distribution.


\subsection{With Known Distribution}\label{sec:known distribution}
In this section, we assume that we have the complete knowledge of the distribution $\mathcal{P}$. We first propose a high-level overview of our algorithm and outcomes as follows.

\emph{\textbf{High-level Overview:}}
\begin{enumerate}
    \item With the knowledge of $\mathcal{P}$, we could directly solve the expected problem $E(\tau)$ and obtain the optimal solution $\{x(\tau)_{ij}^{*},\forall i\in\mathcal{I}, j\in\mathcal{J}\}$, where the deviation parameter $\tau=\frac{\varepsilon}{1-\varepsilon}$.  For each fixed request $j\in\mathcal{J}$, if $x(\tau)_{i_{1}j}^{*}\ge x(\tau)_{i_{2}j}^{*}$, we tend to assign this type of requests to channel $i_1$ rather than $i_{2}$. Motivated by this intuition, we design the Algorithm $\widetilde{P}$ in \cref{alg:1}, which assigns request $j\in\mathcal{J}$ to channel $i\in\mathcal{I}$ with probability $(1-\varepsilon)x(\tau)_{ij}^*$.
    \item According to the Algorithm $\widetilde{P}$, if we define the r.v. $X_{\cdot k}^{\widetilde{P}}=\sum_{i\in\mathcal{I}}a_{i\cdot k}x_{i\cdot k}$ for resource $k$ consumed by one request sampled from distribution $\mathcal{P}$ and r.v. $Y^{\widetilde{P}}$ for the revenue, it is easy to obtain that $(1-\varepsilon)\frac{L_{k}}{T}+\varepsilon \bar{a}_{k}\le\mathbb{E}(X_{k}^{\widetilde{P}})\le(1-\varepsilon)\frac{U_{k}}{T}$ and $\E[Y^{\widetilde{P}}]=(1-\varepsilon)\frac{W_{\tau}}{T}$. Because the expectation of resource consumption is restricted to the interval $[\frac{L_{k}}{T}, \frac{U_{k}}{T}]$, in \cref{thm:1} we could prove that the Algorithm $\widetilde{P}$ generates a feasible solution with high probability for online resource problem with two-sided constraints ILP~\eqref{lp:1} via the Bernstein inequalities in Lemma~\ref{lemma:2}. This is the reason for enlarging the lower bound $L_{k}$ by $\tau T\bar{a}_{k}$ and scaling the solution with a factor $1-\varepsilon$. Meanwhile, we also verify that the accumulative revenue will be no less than $(1-2\varepsilon)W_{\tau}$ w.p. $1-\varepsilon$ in Lemma~\ref{thm:2}.
    \item  Since we have lifted the lower resource constraints from $L_{k}$ to $L_{k}+\tau T\bar{a}_{k}$ in Algorithm $\widetilde{P}$, it would cause the change of baseline when analyzing the accumulative revenues. We have to derive the relationship between $W_{\tau}$ and $W_{0}$ ($W_0$). By taking a sensitive analysis in \cref{sec:sen}, we obtain $W_{\tau}\ge\big(1-\frac{\tau}{\xi^*}\big)W_0$ in \cref{lemma:3} under the \cref{assumption:3}, where $\xi^*$ is the measure of feasibility. Finally, we could prove Theorem~\ref{thm:1}. 
\end{enumerate}

\begin{algorithm}[t]\small
	\caption{Algorithm $\widetilde{P}$ }\label{alg:framework}
	\begin{algorithmic}[1]\label{alg:1}
	\STATE {\bf Input:} $\tau=\frac{\varepsilon}{1-\varepsilon}$, $\mathcal{P}$
    \STATE {\bf Output:} $\{x_{ij}\}_{i\in\mathcal{I},j\in[T]}$
	    \STATE $\{x(\tau)_{ij}^*\}_{i\in\mathcal{I},j\in\mathcal{J}}= \arg\max_{\{\x\}} E(\tau)$.
		\STATE When a type $j\in\mathcal{J}$ request comes, we assign this request to channel $i$ with probability $(1-\varepsilon)x(\tau)_{ij}^*$. That is, If assigning the type $j$ request to channel $i$, we set $x_{ij}=1$, otherwise $x_{ij}=0$.
 	\end{algorithmic}
\end{algorithm}

Similar to \citep{devanur2019near}, we first consider the competitive ratio that \cref{alg:1} could achieve for a surrogate LP problem $E(\tau)$ with optimal objective $W_\tau$ according to Definition \eqref{equ:E_beta}. 
\begin{restatable}{lemma}{thmkowndis}\label{thm:2}
 Under \cref{assumption:1}-\ref{assumption:3}, if $\forall\varepsilon> 0$ and $\gamma=O(\frac{\varepsilon^2}{\ln(K/\varepsilon)})$, \cref{alg:1} achieves an objective value at least $(1-2\varepsilon)W_{\tau}$ and satisfies the constraints w.p. $1-\varepsilon$.
\end{restatable}
The proof is deferred to \cref{appendix:thm1}.
From \cref{thm:2}, we have verified the cumulative revenue is at least $(1-2\varepsilon)W_{\tau}$, w.p. $1-\varepsilon$. Thus, in order to compare the revenue with $W_0$, we should derive the relationship between $W_{\tau}$ and $W_0$. However, due to the deviation $\tau T\bar{a}_{k}$, it is hard to directly obtain this relationship. We will tackle this sensitive problem in the next subsection.

\subsection{Sensitive Analysis}
\label{sec:sen}
In this subsection, we demonstrate the relationship between $W_{\tau}$ and $W_0$. Before introducing the details, we first investigate the difference between the problem $E(\tau)$ and the problem $E(0)$. Specifically, we enlarge the lower bound for every resource $k\in\mathcal{K}$ by an extra amount of $\tau T\bar{a}_{k}$, and then investigate the effects of these changes.

Through a sensitive analysis, we finally found that $\xi^*$ controls the decline ratio of the $E(\tau)$ objective function value. The result can be summarized by follows.
\begin{restatable}{theorem}{factorlemma}\label{lemma:3}
Under the strong feasible condition in \cref{assumption:3}, the optimal objective of $E(\tau)$ satisfies that
\begin{displaymath}
W_{\tau}\ge\bigg(1-\frac{\tau}{\xi^*}\bigg)W_0,
\end{displaymath}
where $\tau=\frac{\varepsilon}{1-\varepsilon}$.
\end{restatable}
We prove \cref{lemma:3} from a geometrical perspective.
\begin{proof}
Let  
$x_{ij}^{'} =  (1 - \frac{\tau}{\xi^{*}}) x(0)_{ij}^{*} + \frac{\tau}{\xi^{*}}x(\xi^{*})_{ij}^{*}$, $\forall i\in \mathcal{I}$ and $j\in\mathcal{J}$, then $x_{ij}^{'}$
is a convex combination between the optimal solutions of $E(0)$ and $E(\xi^{*})$. 
 According to the constraint set of problem $E(0)$ and $E(\xi)$, we can verify that $x_{ij}^{'}$ is non-negative and satisfies 
    $\sum_{i\in\mathcal{I},j\in\mathcal{J}} Tp_{j}a_{ijk}x_{ij}\le U_{k}, \forall k\in\mathcal{K}$,
        $\sum_{i\in\mathcal{I}} x_{ij}\le 1, \forall j\in\mathcal{J}$.

Meanwhile, 
\begin{equation}\label{equ:2combination}
    \begin{aligned}
        &\sum_{i\in\mathcal{I},j\in\mathcal{J}} Tp_{j}a_{ijk}x_{ij}^{'}\\
        &\ge \sum_{i\in\mathcal{I},j\in\mathcal{J}} Tp_{j}a_{ijk}\left((1 - \frac{\tau}{\xi^{*}}) x(0)_{ij}^{*} + \frac{\tau}{\xi^{*}}x(\xi^{*})_{ij}^{*}\right)\\
        &\ge (1 - \frac{\tau}{\xi^{*}})L_k + \frac{\tau}{\xi^{*}}(L_k + \xi^{*} T \bar{a}_k)\\
        &\ge L_k + \tau T \bar{a}_k, \forall k \in \mathcal{K}.
    \end{aligned}
\end{equation}
Thus $x_{ij}^{'}$ is feasible to $E(\tau)$, and 
$$W_{\tau} \ge \sum_{i\in\mathcal{I},j\in\mathcal{J}} Tp_{j}w_{ij}x_{ij}^{'} \ge (1- \frac{\tau}{\xi^{*}})W_0. $$
This completes the proof of \cref{lemma:3}.
\end{proof}

In practical problems, the influence of $\xi^*$ on competitive ratio may be far better than the worst case bound in \cref{lemma:3}, since constraints usually represent different resource requirements and only affect a part of requests. It is worth mentioning that our initial proof of \cref{lemma:3} was based on analyzing a factor-revealing fractional linear programming problem motivated by \citep{jain2003greedy}. Although the above geometric proof of \cref{lemma:3} is simpler, the factor-revealing perspective tells that $\xi^*$ represents the severity of mutual interference between the lower and upper bounds. Its effect will eventually be reflected in the competitive ratio. We put this analysis in \cref{appendix::factor_proof} due to the space limit and wish to provide more insights.


Combining \cref{thm:2} with \cref{lemma:3}, we can show that the cumulative revenue obtained by Algorithm $\widetilde{P}$ is larger than $(1-2\varepsilon)W_{\tau}\ge(1-2\varepsilon)(1-\frac{\tau}{\xi^*})W_0\ge\big(1-(2+\frac{1}{\xi^*})\varepsilon\big)W_0$. Therefore, we have the following theorem.
\begin{restatable}{theorem}{thmknowndis}\label{thm:1} 
Under \cref{assumption:1}-\ref{assumption:3}, if $\varepsilon> 0$, $\tau=\frac{\varepsilon}{1-\varepsilon}$ and  $\gamma=\max\big(\frac{\bar{a}_{k}}{U_{k}}, \frac{\bar{a}_{k}}{T\bar{a}_{k}-L_{k}}, \frac{\bar{w}}{W_{\tau}} \big)=O\big(\frac{\varepsilon^2}{\ln(K/\varepsilon)}\big)$, \cref{alg:framework} achieves an objective value of at least $\big(1-(2+\frac{1}{\xi^*})\varepsilon\big)W_0$ and satisfies the constraints w.p. $1-\varepsilon$.
\end{restatable}
Although $\widetilde{P}$ is an impractical algorithm owing to the complete knowledge of distribution $\mathcal{P}$, it builds a bridge to the desired competitive ratio. In the following subsections, we will release the unavailable knowledge by replacing $\widetilde{P}$ in a constructive way.

\subsection{With Known $W_{\tau}$} 
\label{sec:optimal}
In this section, we only assume the knowledge of the optimal value $W_{\tau}$ and abandon the assumption of knowing the distribution $\mathcal{P}$. Based on this assumption, we design an algorithm $A$ only using $W_{\tau}$ in \cref{alg:2}, which achieves at least $\big(1-(2+\frac{1}{\xi^*})\varepsilon\big)W_0$ objective and satisfies the constraints w.p. $1-\varepsilon$. 

Intuitively, we hope to construct a new algorithm $A$ through Algorithm $\widetilde{P}$, which performs no worse than $\widetilde{P}$ with high probability. We still use the r.v. $X^{A}_{jk}$ for resource $k$ consumed by the $j$-th request, which is determined by algorithm~$A$, and r.v. $Y^{A}_j$ for the revenue. We first investigate a simple case with only one upper bound $U_k$. Considering the intermediate step in the proof of Bernstein inequality, we can bound the probability of failing to follow this upper bound constraint by
\begin{align*}
    &P\bigg(\sum_{j=1}^TX_{jk}^{\widetilde{P}}\ge U_k\bigg)\\
    &\leq\min_{t>0}\E\Bigg[\exp\bigg(t(\sum_{j=1}^{s}X^{\widetilde{P}}_{jk}-\frac{s}{T}U_{k})+t(\sum_{j=s+1}^{T}X^{\widetilde{P}}_{jk}\\
    &-\frac{T-s}{T}U_k)\bigg)\Bigg].
\end{align*}
Then we relax the second term $t(\sum_{j=s+1}^{T}X^{\widetilde{P}}_{jk}-\frac{T-s}{T}U_k)$ by \cref{lemma:2}.i. with $t$ given in the proof of \cref{thm:2}, which means we bound the failure probability for requests that still decided by Algorithm $\widetilde{P}$ from $s+1$ to $T$. Although $\widetilde{P}$ is unknown, its failure probability has been bounded explicitly. We construct Algorithm $A$ by induction. In the beginning, 
Algorithm $A$ obtains the information of the first request, and determines $X_{1k}^A$ no worse than the decision made by Algorithm $\widetilde{P}$. To achieve this goal, Algorithm $A$ iterates over all possible channels to minimize the failure probability upper bound (similar to \cref{alg:2}), replacing $X_{1k}^{\widetilde{P}}$ by $X_{1k}^A$ in the above expression. In the induction step, $s>1$, since Algorithm $A$ has decided the first $s-1$ requests, it will make a decision for the $s$-th request no worse than Algorithm $\widetilde{P}$. 


In general, the key to the previous analysis is bounding the probabilities of three \textbf{bad} events: i). $\sum_{j=1}^{T} X^{A}_{jk}\ge U_{k}, \forall k\in\mathcal{K}$; ii). $\sum_{j=1}^{T} X^{A}_{jk}\le L_{k}, \forall k\in\mathcal{K}$; iii). $\sum_{j=1}^{T} Y^{A}_{j}\le (1-2\varepsilon)W_{\tau}$. The joint probability of the events can be considered as the \textsl{failure} probability of the proposed algorithm. We use the Bernstein inequalities to bound the failure probability of the Algorithm $\widetilde{P}$. 
We first design a moment generating function $\mathcal{F}(A^s\widetilde{P}^{T-s})$ for controlling the probability of the three bad events of the hybrid Algorithm $A^{s}\widetilde{P}^{T-s}$, which runs Algorithm $A$ for the first $s$ requests and $\widetilde{P}$ for the rest $T-s$ requests. We let $\mathcal{F}$ enjoy an inspiring 
monotone property for the special hybrid Algorithm $A^{s}\widetilde{P}^{T-s}$, i.e., $\mathcal{F}(A^{s+1}\widetilde{P}^{T-s-1})\le\mathcal{F}(A^{s}\widetilde{P}^{T-s})$ and $\mathcal{F}(\widetilde{P}^{T})\le\varepsilon$. Hence, the $\mathcal{F}(A^{T})\le\mathcal{F}(A^{T-1}\widetilde{P})\le\dots\le\mathcal{F}(\widetilde{P}^{T})\le\varepsilon$. Intuitively, Algorithm $A$ can be viewed as minimizing the upper bound of the probability of bad events. This completes the high-level idea when $W_\tau$ is known.
We present the details of Algorithm $A$ in \cref{alg:2}.


\begin{algorithm}[t]
	\caption{Algorithm $A$}
	\begin{algorithmic}[1]\label{alg:2}
	\STATE {\bf Input:} $\varepsilon$, $W_{\tau}$\\
    \STATE {\bf Output:} $\{x_{ij}\}_{i\in\mathcal{K},j\in[T]}$
    \STATE Set $c_{1k}=\frac{-\ln(1-\varepsilon)}{\bar{a}_k}, \forall k \in \mathcal{K}$ and $c_2 = \frac{-\ln(1-\varepsilon)}{\bar{w}}$
	 \STATE Initiate $\phi_k^0 = 1, \varphi_k^0 = 1, \forall k \in \mathcal{K}$, and $\psi^0 = 1$
	 \FOR{$j = 1,\ldots, T$}
	 \STATE compute the optimal $i^*$ by
	 \begin{equation*}
    \begin{aligned}
    i^{*}=\arg\min_{i\in\mathcal{I}}&\sum_{k\in\mathcal{K}}\phi_k^{j-1}\exp\bigg(c_{1k}\big(a_{ijk} - \frac{U_{k}}{T}\big)\bigg)\\
        &+ \sum_{k\in\mathcal{K}}\varphi_k^{j-1}\exp\bigg(c_{1k}\big(\frac{L_k}{T} - a_{ijk}\big)\bigg) \\
        &+ \psi^{j-1}\exp\left(c_2\big(\frac{(1 - 2\varepsilon)W_{\tau}}{T} - w_{ij}\big)\right)
    \end{aligned}
\end{equation*}
	 \STATE Set  $X^{A}_{jk}=a_{i^{*}jk}$,$Y^{A}_{j}=w_{i^{*}j}$
	 \STATE Update $\phi_{k}^{j}=\phi_{k}^{j-1}\exp\big(c_{1k}(X^{A}_{jk}-\frac{U_k}{T})\big),\forall k \in \mathcal{K}$
	 \STATE Update $\varphi_{k}^{j}=\varphi_{k}^{j-1}\exp\big(c_{1k}(\frac{L_k}{T}-X^{A}_{jk})\big),\forall k \in \mathcal{K}$
	 \STATE Update $\psi^{j}=\psi^{j-1}\exp\big(c_2(\frac{(1-2\varepsilon)W_{\tau}}{T}-Y^{A}_{j})\big)$
	\ENDFOR
	\end{algorithmic}
\end{algorithm}

\begin{restatable}{theorem}{thmknownobj}\label{thm:3} 
Under \cref{assumption:1}-\ref{assumption:3}, if $\varepsilon> 0$, $\tau$ and $\gamma$ are defined as \cref{thm:1}, the \cref{alg:2}  achieves an objective value  at least $\big(1-(2+\frac{1}{\xi^*})\varepsilon\big)W_0$ and satisfies the constraints w.p. $1-\varepsilon$.
\end{restatable}
The proof is deferred to \cref{appendix:thm3}.

\subsection{With Known $\xi^*$ but Unknown Distribution}
\label{sec:unknow}

\begin{algorithm}[t]
\caption{Objective\_Estimator$(\{\mathrm{request}_j\}_{r}, t_{r},\beta)$}
\begin{algorithmic}[1]\label{alg: obj estimator}
    \STATE \textbf{Input:} requests from $r$-th stage $\{\mathrm{request}_j\}_{r}$, request number $t_{r}$, deviation parameter $\beta$.
    \STATE \textbf{Output:} $W^{r}$
    \STATE Solve $E(\beta)$ as 
    \begin{equation}\label{equ:23}
	\begin{aligned}
		 W^{r}=&\max_x\ \sum_{i\in\mathcal{I}}\sum_{j=1}^{t_{r}} w_{ij}x_{ij}\\
        s.t.\ &\frac{t_{r}}{T}(L_{k}+\beta T\bar{a}_{k})\le\sum_{i\in\mathcal{I}}\sum_{j=1}^{t_{r}} a_{ijk}x_{ij} \\
        &\le\frac{t_{r}}{T}U_{k},\forall k\in\mathcal{K}\\
        &\sum_{i\in\mathcal{I}} x_{ij}\le 1, \forall j\in[t_{r}]\\
        &x_{ij}\ge 0, \forall i\in\mathcal{I}, j\in[t_{r}]
	\end{aligned}
\end{equation} 
\end{algorithmic}
\end{algorithm}

In this section, we consider the completely unknown distribution setting. Under this setting, we first divide the incoming $T$ requests into multiple stages and run an inner loop similar to \cref{alg:2} in each stage. The differences between the inner loop and \cref{alg:2} are three-fold: i). choose the deviation parameter $\beta=\varepsilon$ instead of $\tau$, where $\varepsilon$ is an error parameter defined in \cref{thm:4}; ii). pessimistically reduce the estimated objective value from $W^{r-1}$ to $Z^r$ in each stage for boosting the chance of success, where $W^{r-1}$ is estimated by \cref{alg: obj estimator}; iii). consider the relative error of objective and the maximum relative error of constraints as $\varepsilon_{y,r}$ and $\varepsilon_{x,r}$ separately in each stage, instead of using the global error parameter $\varepsilon$. We consider that $\epsilon_{y,r}$ describes the error we care about the most (objective) and $\epsilon_{x,r}$ describes the rest errors (constraints and objective estimates). Distinguishing these error terms allows for more refined results, but 
will lead to the competitive ratio of the same order in our settings. In \cref{alg:3}, we name the proposed algorithm by $A_1$ to facilitate the analysis.

The high-level ideas can be summarized as follows. There are three types of errors that have to be restrained, 1). the error $|Z_r-W_\epsilon|$ from estimating deviant objective value, 2). the error $\epsilon_{x,r}$ from guaranteeing the satisfaction of constraints, and 3). the error $\epsilon_{y,r}$ from Algorithm $A_1$ affected by the randomness in the objective. We use an iterative learn-and-predict strategy to periodically reduce relative errors caused by insufficient samples. Let $\widehat{W}^r$ be the objective obtained by Algorithm $A_1$ in stage $r$. According to the previous sections, if objective $Z_r$ is reachable, we could prove that the loss $\ell_r \coloneqq|\widehat{W}^r - Z^r|$ is upper bounded by $O(\frac{t_r}{T}\epsilon_{y,r}Z_r)$ in each stage. We also need $Z_r$ to be a good estimate for the objective. $Z^r \geq (1-O(\epsilon_{x,r-1}))W_\epsilon$ ensures that the objective is not underestimated, and $Z_r\leq W_\epsilon$ ensures that it is a reachable target and facilitates the proof for $r=0,\ldots,l-1$. Besides the normal stages, a warm-up stage $r=-1$ is added to provide an estimate $Z_{-1}$. Then the cumulative loss $\ell_{-1} + \sum_{r=0}^{l-1}\ell_r$ will be no greater than $O\big(\frac{t_{-1}}{T}W_0 + \sum_r\frac{t_r(\epsilon_{x,r-1}+\epsilon_{y,r})}{T}W_\epsilon\big)$. The algorithm is designed to balance the relative error $\epsilon_{x,r}$, $\epsilon_{y,r}$ and their impact on the requests in each stage.

More precisely, Algorithm $A_{1}$ geometrically divides $T$ requests into $l+1$ stages, where the number of requests are $t_r=\varepsilon 2^{r} T$ for $r=0,\ldots, l-1$ and $t_{-1}=\varepsilon T$. In the initial stage $r=-1$, we use the first $t_{r}=\varepsilon T$ requests to estimate $W_\varepsilon$ and obtain $W^{-1}$, assuming that none of the requests are served in worst case. In stage $r\in\{0,1,\dots, l-1\}$, the requests from stage $r-1$ are used to provide more and more accurate estimate $W^{r-1}$ of $W_{\varepsilon}$. By Union Bound, we set the failure probability as $\delta=\frac{\varepsilon}{3l}$ and reduce the estimate $W^{r-1}$ to $Z^r=\frac{TW^{r-1}}{t_r(1+O(\varepsilon_{x,r-1}))}$, which is promised between $\left[(1-O(\varepsilon_{x,r-1}))W_{\varepsilon}, W_{\varepsilon}\right]$ w.p. $1-2\delta$.  

Then in each stage $r$, define the error parameters for objective and constraints as   $\varepsilon_{y,r}=O\left(\sqrt{\frac{T \ln(K/\delta)}{t_{r}Z_{r}}}\right)$ and $\varepsilon_{x,r}=O\left(\sqrt{\frac{\gamma_{1} T \ln(K/\delta)}{t_{r}}}\right)$ respectively. We construct a surrogate Algorithm $\widetilde{P}_2$ that achieves $\frac{t_{r}}{T}(1-\varepsilon_{y,r})Z^r$ cumulative revenue with the consumption of every resource $k$ between $\left[\frac{t_{r}(1+\varepsilon_{x,r})}{T}\big(L_{k}+(\varepsilon-\frac{\varepsilon_{x,r}}{1+\varepsilon_{x,r}})T\bar{a}_{k}\big), \frac{t_{r}(1+\varepsilon_{x,r})}{T}U_{k}\right]$ with probability at least $1-\delta$. Similar to previous sections, the connection between the inner loop of Algorithm $A_1$ and Algorithm $\widetilde{P}_2$ is built to minimize the upper bound of failure probabilities.
Finally, with probability at least $1-\delta$, the cumulative revenue is at least $\sum_{r=0}^{l-1}\frac{t_{r}Z_{r}}{T}(1-\varepsilon_{y,r})$ and the cumulative consumed resource of $k\in\mathcal{K}$ is between $\sum_{r=0}^{l-1} \frac{t_{r}(1+\varepsilon_{x,r})}{T}(L_{k}+(\varepsilon-\frac{\varepsilon_{x,r}}{1+\varepsilon_{x,r}})T\bar{a}_{k})$ and $\sum_{r=0}^{l-1}\frac{t_{r}(1+\varepsilon_{x,r})}{T}U_{k}$. Letting $\gamma_{1}=O(\frac{\varepsilon^{2}}{\ln(K/\varepsilon)})$, we could keep  $\sum_{r=0}^{l-1}\frac{t_{r}}{T}(1+\varepsilon_{x,r})U_{k}\le U_{k}$ ,$\sum_{r=0}^{l-1} \frac{t_{r}(1+\varepsilon_{x,r})}{T}(L_{k}+(\varepsilon-\frac{\varepsilon_{x,r}}{1+\varepsilon_{x,r}})T\bar{a}_{k})\ge L_{k}$ and $\sum_{r=0}^{l-1}\frac{t_{r}Z_{r}}{T}(1-\varepsilon_{y,r})\ge\big(1-O(\frac{\varepsilon}{\xi^*-\varepsilon})\big)W_0$. 

The most tricky part of Algorithm $A_1$ is estimating $W^r$ in \cref{alg: obj estimator}. Since the lower bound $L_k$ cannot upper bound the mean of $X_{jk}$ as $U_k$ does, it brings a great challenge to the theoretical analysis. To address this issue, we solve a biased LP problem, uplifting $L_k$ by $\epsilon T\bar{a}_k$, at the end of each stage. It helps the satisfaction of the lower bound in the next stage, and the influence on the objective can be determined by \cref{lemma:3}. This completes the high-level overview of the analysis of Algorithm $A_{1}$. The theoretical result of Algorithm $A_1$ is presented in \cref{thm:4} with explicit requirements of parameters.

\begin{algorithm}[t]\small
	\caption{Algorithm $A_{1}$}
    \begin{algorithmic}[1]\label{alg:3}
	\STATE{\bf Input:} $\varepsilon$, $\gamma_1$, $\xi^*$\\
    \STATE{\bf Output:} $\{x_{ij}\}_{i\in\mathcal{K},j\in[T]}$
	\STATE Set $l=\log_{2}(\frac{1}{\varepsilon})$, $t_{r}=\varepsilon2^{r}T$, $t_{-1}=\varepsilon T$ and $\delta=\frac{\varepsilon}{3l}$
	\FOR{$r=0$ to $l$-1}
	\STATE Set $W^{r-1}$ =Objective\_Estimator($\{\mathrm{request}_j\}_{r-1}, t_{r-1},\varepsilon$)
  \STATE Set $Z_{r}=\frac{T W^{r-1}}{(1+(2+\frac{1}{\xi^*-\varepsilon})\varepsilon_{x,r-1})t_{r-1}}$ 
  \STATE Set $\varepsilon_{x,r}=\sqrt{\frac{4T\gamma_1 \ln(\frac{2K + 1}{\delta})}{t_r}}$, $\varepsilon_{y,r}=\sqrt{\frac{4T\ln(\frac{2K+1}{\delta})\bar{w}}{Z_rt_r}}$
  \STATE Set $c_{1k,r} = \frac{\ln(1+\varepsilon_{x,r})}{\bar{a}_k}$ and $c_{2,r} = \frac{\ln(1+\varepsilon_{y,r})}{\bar{w}}$
	\STATE Initialize $\phi_{k}^0=\exp\Big(\frac{-(t_r-1)\varepsilon_{x,r}^2}{4\gamma_{1}T}\Big)$,\\$\varphi_{k}^0=\exp\Big(\frac{-(t_r-1)\varepsilon_{x,r}^2}{4\gamma_{1}T}\Big)$,$\psi^0=\exp\Big(\frac{-(t_r-1)\varepsilon^2_{y,r}Z^r}{4\bar{w}T}\Big)$
	 \FOR{$j=1,\ldots, t_r$}
	 \STATE compute the optimal $i^*$ by
	 \begin{align*}
	  i^*=&\arg\min_{i\in \mathcal{I}}\bigg\{\sum_{k\in\mathcal{K}}\phi_{k}^{j-1}\exp\Big(c_{1k,r}\big(-\frac{(1+\varepsilon_{x,r})U_k}{T}\\&+a_{ijk}\big)\Big)+\sum_{k\in\mathcal{K}}\varphi_{k}^{j-1}\exp\Big(c_{1k,r}\big(\bar{a}_{k}-a_{ijk}-\\&\frac{(1+\varepsilon_{x,r})((1-\varepsilon)T\bar{a}_{k}-L_{k})}{T}\big)\Big)+\\&\psi^{j-1}\exp\Big(c_{2,r}\big(\frac{(1-\varepsilon_{y,r})Z^r}{T} - w_{ij}\big)\Big)\bigg\}
	  \end{align*}
	 \STATE Set $X^{A_1}_{jk}=a_{i^{*}jk}$,$Y^{A_1}_{j}=w_{i^{*}j}$, $Z_{jk}^{A_1} = \bar{a}_k - a_{i^{*}jk}$
	 \STATE Update  
	 \STATE $\phi_{k}^{j}=\phi_{k}^{j-1}\exp\Big(c_{1k,r}(X^{A_1}_{tk}-\frac{(1+\varepsilon_{x,r})U_k}{T})+\frac{\varepsilon^2_{x,r}}{4T\gamma_1}\Big)$,
	 \STATE  $\varphi_{k}^{j}=\varphi_{k}^{j-1}\exp\Big(-c_{1k,r}(\frac{(1+\varepsilon_{x,r})((1-\varepsilon)T\bar{a}_{k}-L_{k})}{T} - Z_{jk}^{A_1})+\frac{\varepsilon^2_{x,r}}{4T\gamma_1}\Big)$,
	 \STATE  $\psi^{j}=\psi^{j-1}\exp\Big(c_{2,r}(\frac{(1-\varepsilon_{y,r})Z_{r}}{T}-Y^{A}_{j})+\frac{\varepsilon^2_{y,r}Z_r}{4T\bar{w}}\Big)$
	\ENDFOR
	\ENDFOR
	\end{algorithmic}
\end{algorithm}

\begin{restatable}{theorem}{thmunknowndis}\label{thm:4}
Under \cref{assumption:1}-\ref{assumption:3}, if $\varepsilon> 0$ $\tau_1=\frac{\sqrt{\varepsilon}}{1-\sqrt{\varepsilon}}$ such that $\tau_1+\varepsilon\le\xi^*$ and $\gamma_{1}=\max\big(\frac{\bar{a}_{k}}{U_{k}}, \frac{\bar{a}_{k}}{(1-\varepsilon)T\bar{a}_{k}-L_{k}}, \frac{\bar{w}}{W_{\varepsilon+\tau_1}}\big)=O\big(\frac{\varepsilon^{2}}{\ln(K/\varepsilon)}\big)$, Algorithm $A_1$ defined in \cref{alg:3} achieves an objective value of at least $\big(1-O(\frac{\varepsilon}{\xi^*-\varepsilon})\big)W_0$ and satisfies the constraints w.p. $1-\varepsilon$.
\end{restatable}

The proof of the theorem is deferred to \cref{appendix:thm4}.

\textbf{Remark}: 
\begin{enumerate}
    \item It can be shown that the problem dependent parameter $\xi^*$ restricts the capacity of Algorithm $A_1$ by affecting $\varepsilon,\;\tau_1$, and $\gamma_1$. This is consistent with our intuition that the problem with two-sided constraints becomes harder if $\xi^*$ decreases.
    \item It is worth noting that the competitive ratio obtained in this paper reflects the high-probability performance of the proposed algorithms under a finite $T$, so it is difficult to compare with the regret bound for dual-mirror-descent methods rigorously due to the different settings. But if $T$ is given and the linear growth of lower and upper bound is assumed, i.e. $L_k$ and $U_k$ are both $O(T)$, $\xi^*$ turns to be a problem-dependent constant, which leads the proposed algorithm towards an $\widetilde{O}(\sqrt{T\ln{(KT)}})$ regret w.h.p., where we hide the potential $\log\log$-term in $\widetilde{O}(\cdot)$. In real-world applications with massive constraints, such as guaranteed advertising delivery \citep{zhang2020request} with hundreds of thousands of ad providers, the proposed algorithm could exceed existing $O(\sqrt{KT})$ averaged regrets \citep{balseiro2020dual,balseiro2021regularized, lobos2021joint} with respect to the number of constraints $K$. More comparisons can also be found in \citep{devanur2019near}. 
    \item Although an LP is solved at the beginning of each stage, according to the proof of \cref{thm:5}, we can use the revenue obtained from previous stage as a good approximation of $W^{r-1}$ before \cref{alg: obj estimator} returning it in practice for $r=1,\ldots, l-1$. Thus, the online property of \cref{alg:3} will not be hurt severely.
\end{enumerate}
 However, sometimes $\xi^*$ is inaccessible in practice when \cref{alg:3} is initialized. We will address the unknown measurement $\xi^*$ in the next section.

\section{Exploring The Measure of Feasibility}\label{sec:feasibility}
As shown in previous sections, the constant $\xi^*$ measures the feasibility of the original problem, and plays a central role in Algorithm $A_{1}$. $\xi^*$ can hardly be obtained in practice. Motivated by \cref{alg: obj estimator}, we propose \cref{alg:M} for estimating $\xi^*$ to provide a more complete analysis.
In this section, we investigate the strong feasible condition, i.e., Assumption~\ref{assumption:3}. Recalling the definition of $E(\tau)$ in \cref{sec:competitive alg}, we can conclude the following results.
\begin{proposition}
 \label{thm:a2}According to the definition of $\xi^{*}$ in LP~\eqref{equ:5}, we have 
\begin{enumerate}
\vspace{-6pt}
    \item When $\xi^{*}\ge 0$, the problem $E(0)$ is feasible. Otherwise, it is infeasible.
    \item When $\xi^{*}>0$ and $0<\xi\le\xi^{*}$, the problem $E(\xi)$ is feasible. We say the problem $E(0)$ satisfies the strong feasible condition with parameter $\xi^*$.
\end{enumerate}
\end{proposition}
We omit the proof of the proposition for simplicity. From \cref{thm:a2}, with the knowledge of $\xi^{*}$, we could easily check the feasibility of $E(0)$. Meanwhile, we are safe to set parameter $0<\xi\leq \xi^*$ if the measure of feasibility $\xi^{*}>0$. Next, we show how to estimate the $\xi^{*}$ from served requests in Algorithm~\ref{alg:M}.
\begin{algorithm}[t]\small
	\caption{Feas\_Estimator$(\{\mathrm{request}_j\}_{r},\; t_{r},\; \gamma_2,\; \delta)$}
	\begin{algorithmic}[1]\label{alg:M}
	\STATE {\bf Input:} requests from $r$-th stage $\{\mathrm{request}_j\}_{r}$, request number $t_r$, relative error $\varepsilon_{x,r}$, problem dependent quantity $\gamma_2$, failure probability $\delta$.
	\STATE {\bf Output:} $\xih=\xi_{\mathrm{max}}-2\varepsilon_{x,r}$
    \STATE Compute $\varepsilon_{x,r}=\sqrt{\frac{4\gamma_{2} T \ln(\frac{K}{\delta})}{t_r}}$
	\STATE Set 
	\begin{equation}\label{equ:47}
	\begin{aligned}
		&\xi_{\mathrm{max}}=\max_{\xi,x}\ \xi\\
        s.t.\; &\frac{t_r}{T}(L_{k}+\xi T\bar{a}_{k})\le\sum_{i\in\mathcal{I}}\sum_{j=t_r+1}^{t_{r+1}} a_{ijk}x_{ij}\le \frac{t_r}{T}U_{k}, \;\forall k\in\mathcal{K}\\
        &\sum_{i\in\mathcal{I}} x_{ij}\le 1, \forall j=t_r+1,\ldots,t_{r+1}\\
        &x_{ij}\ge 0,\; \forall i\in\mathcal{I},\; j=t_r+1,\ldots,t_{r+1}
	\end{aligned}
\end{equation} 	
\end{algorithmic}
\end{algorithm}

\begin{restatable}{theorem}{thmknownth}\label{thm:6}
Under \cref{assumption:1}-\ref{assumption:3}, if  $\gamma_{2}=\max(\frac{\bar{a}_{k}}{U_{k}}, \frac{\bar{a}_{k}}{T\bar{a}_{k}-L_{k}})=O(\frac{\epsilon^{2}}{\ln (K/\varepsilon)})$, \cref{alg:M} with $t_r$ i.i.d. requests outputs $\widehat{\xi} $ such that
\begin{displaymath}
\widehat{\xi} \in[\xi^{*}-4\epsilon_{x,r},\xi^{*}]
\end{displaymath} w.p. $1-2\delta$, where $\epsilon_{x,r}=\sqrt{\frac{4\gamma_{2} T \ln(K/\delta)}{t_{r}}}$.
\end{restatable}

\begin{algorithm}[t]
\setlength{\textfloatsep}{0pt}
	\caption{Algorithm $A_{2}$}
\begin{algorithmic}[1]\label{alg:5}
    \STATE{\bf Input:} $\varepsilon$, $L_k$, $U_k$, $\gamma_1$, $\gamma_2=O\big(\frac{\epsilon^{2}}{\ln (K/\varepsilon)}\big)$, $\bar{a}_{k}$\\
    \STATE{\bf Output:} $\{x_{ij}\}_{i\in\mathcal{K},j\in[T]}$
\STATE Set $l=\log_{2}(\frac{1}{\varepsilon})$, $t_{r}=\varepsilon2^{r}T$, $t_{-1}=\varepsilon T$ and $\delta=\frac{\varepsilon}{3l+2}$
\STATE Set $\xih_{0}$=Feas\_Estimator($\{\mathrm{request}_j\}_{-1},\; t_{-1},\;\gamma_2,\;\delta$)
\STATE Execute the Algorithm $A_{1}(\epsilon,\;\gamma_{1},\;\xih_{0})$
\end{algorithmic}
\setlength{\textfloatsep}{0pt}
\end{algorithm}
The proof is deferred to \cref{appendix:thm6}. 

Now $\widehat{\xi}_{r}$ can be viewed as a good estimate for $\xi^{*}$ from stage 0. Based on the estimation of $\xi^{*}$
, we propose an algorithm $A_{2}$ in \cref{alg:5}. Like the previous methods, we geometrically divide $T$ requests into $l+1$ stages for $r=-1,0,\ldots, l-1$, where the initial stage $r=-1$ and the first stage $r=0$ have $\varepsilon T$ requests. Besides the estimate $W^{-1}$, the estimate $\widehat{\xi}_{0}$ for $\xi^{*}$ is obtained by \cref{alg:M} at the end of the initial stage $r=-1$. Then we consider the new expected problem $E\big(\widehat{\xi}_{0}\big)$ and run Algorithm $A_{1}$ defined in Algorithm~\ref{alg:3} for the rest requests. 

\begin{theorem}\label{thm:7} Under \cref{assumption:1}-\ref{assumption:3}, if $\varepsilon> 0$, $\tau_1=\frac{\sqrt{\varepsilon}}{1-\sqrt{\varepsilon}}$ such that  $\tau_{1}+4\sqrt{\varepsilon}+\varepsilon\le\xi^{*}$ and $\gamma_{1}= \max\big(\frac{\bar{a}_{k}}{U_{k}}, \frac{\bar{a}_{k}}{(1-\epsilon)T\bar{a}_{k}-L_{k}}, \frac{\bar{w}}{W_{\epsilon+\tau_{1}}}\big)=O\big(\frac{\varepsilon^{2}}{\ln (K/\varepsilon)}\big)$,  \cref{alg:5} achieves an objective value of at least $\big(1-O(\frac{\varepsilon}{\xi^{*}-4\sqrt{\varepsilon}-\epsilon})\big)W_0$ and  satisfies the constraints, w.p. $1-\varepsilon$.
\end{theorem}
The proof can be found in \cref{appendix:g}.

Until now, we have dropped the knowledge of the entire distribution $\mathcal{P}$, the objective value $W_\tau$ for problem $E(\tau)$ and the strong feasible constant $\xi^*$ step by step. In practice, the only hyperparameter $\epsilon$ can be well approximated by solving a polynomial approximation of the transcendental equation between $\gamma_1$ and $\epsilon$ in \cref{thm:7}. We regard the proposed \cref{alg:5} as a practical algorithm for two-sided constrained online resource allocation problems. 

\section{Conclusion}
In this paper, we have developed a method for online allocation problems with two-sided resource constraints, which has a wide range of real-world applications. By designing a factor-revealing linear fractional programming, a measure of feasibility $\xi^*$ is defined to facilitate our theoretical analysis. We prove that \cref{alg:3} holds a nearly optimal competitive ratio if the measurement is known and large enough compared with the error parameter, i.e. $\xi^*\gg \varepsilon$. An estimator is also presented in the paper for the unknown $\xi^*$ scenario. We will investigate more efficient extensions of this work in the near future.

\bibliography{all}
\bibliographystyle{unsrtnat}



\newpage
\appendix
\onecolumn

\setlist[enumerate]{itemsep=6pt}
\section{The Examples of Resource Lower Bounds}\label{appendix:a}
In this section, we give three practical instances showing the necessity of lower bound constraints in real-world applications.

\begin{enumerate}
    \item \textbf{Guaranteed Advertising Delivery}: In the online advertising scenario, the advertising publishers will sell the ad impressions in advance with the promise to provide each advertiser an agree-on number of target impressions over a fixed future time period, which is usually written in the contracts. Furthermore, the advertising platform  considers other constraints, such as advertisers' budgets and impressions inventories, and simultaneously maximizes multiple accumulative objectives regarding different interested parties, e.g. Gross Merchandise Volume (GMV) for ad providers and publisher's revenue. This widely used guaranteed delivery advertising model is generally formulated as an online resource allocation problem with two-sided constraints \citep{zhang2020request}.   

    \item[2.1.] \textbf{Fair Channel Constraints}: We consider the online orders assignment in an e-commerce platform, where the platform allocates the orders (or called packages in applications) to different warehouses/logistics providers. It can be shown that when providers are concerned about fairness, the platform can use a simple wholesale price above its marginal cost to coordinate this channel in terms of both achieving the maximum channel profit and attaining the maximum channel utility \citep{haitao2007fairness}. Besides, according to the governmental regulations and contracts between platforms and providers, we usually tend to set lower bounds to the daily accepted orders for special channels, such as new/small-scale providers or those in developing areas.

    \item[2.2.] \textbf{Timeliness Achievement Constraints}: The delivery time for orders is highly related with the customers' shopping experience. Thus, the online shopping platforms also take the time-effectiveness of parcel shipment into account. For every parcel, platforms usually use the timeline achievement rate $r_{u}\in[0,1]$ to denote the probability of arriving at the destination in required $u$ days if we assign this order to one channel, which could be estimated by the historical data and features of order and channel/logistics providers. As we know, long delivery time will impair the consumer’s shopping experience, but reducing delivery time means increasing cost. In order to balance the customers' shopping experience and transportation costs, platforms always set a predefined lower threshold to the timeline achievement rate, which can be modeled as lower bound requirements in order assignment.
\end{enumerate}

\section{Useful Concentrations} \label{appendix:concentration}
In this section, we present the classical concentration inequalities for completeness.

\begin{lemma}\citep{bernstein1946theory}\label{lemma:2}\
\begin{enumerate}
    \item[i.] Suppose that $|X|\le c$ and $\E[X]=0$. For any $t>0$, 
    \begin{displaymath}
     \E\big[\exp(tX)\big]\le \exp\bigg(\frac{\sigma^2}{c^2}\big(e^{tc}-1-ct\big)\bigg)
    \end{displaymath}where $\sigma^2=Var(X)$.
    
    \item[ii.] If $X_1,X_2,\ldots,X_n$ are independent r.v., $\E[X_{i}]=\mu$  and $\mathrm{P}(|X_{i} - \mu|\le c)=1$ ,$\forall i=1,\ldots,n$, then $\forall \varepsilon>0$ following inequality holds
\begin{displaymath}
 \mathrm{P}\bigg(\Big|\frac{\sum_{i=1}^n X_i}{n}-\mu\Big|\ge\varepsilon\bigg)\le2 \exp\bigg(-\frac{n\varepsilon^2}{2\sigma^2+\frac{2c\varepsilon}{3}}\bigg)
\end{displaymath} where $\sigma^2=\frac{\sum_{i=1}^{n}Var(X_i)}{n}$.
\end{enumerate} 
\end{lemma}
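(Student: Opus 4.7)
The plan is to prove (i) from first principles and then derive (ii) from (i) through the standard Chernoff--Cram\'er recipe.

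For part (i), I would expand $\exp(tX)$ as its Taylor series and take expectations term by term. Using $\E[X]=0$ to kill the linear contribution and the bound $|X|\le c$ to estimate $\E[X^k]\le c^{k-2}\E[X^2]=c^{k-2}\sigma^2$ for each $k\ge 2$, the remaining tail sum rearranges into the closed form
\begin{equation*}
\E[\exp(tX)] \;\le\; 1 + \frac{\sigma^2}{c^2}\bigl(e^{tc}-1-tc\bigr),
\end{equation*}
and the elementary inequality $1+u\le e^u$ then upgrades this to the desired exponential bound.

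For part (ii), I would invoke Markov's inequality on $\exp(t(S_n-n\mu))$ with $S_n=\sum_{i=1}^n X_i$ to obtain the Chernoff bound $\mathrm{P}(S_n-n\mu\ge n\varepsilon)\le e^{-tn\varepsilon}\,\E[\exp(t(S_n-n\mu))]$. Independence factorises the moment generating function across the $n$ summands, and part (i) applied to each centred variable $X_i-\mu$ yields
\begin{equation*}
\mathrm{P}\bigl(\tfrac{S_n}{n} - \mu \ge \varepsilon\bigr) \;\le\; \exp\!\Bigl(\tfrac{n\sigma^2}{c^2}(e^{tc}-1-tc) - tn\varepsilon\Bigr).
\end{equation*}
The key analytic step is the auxiliary inequality $e^x-1-x \le \tfrac{x^2}{2(1-x/3)}$ valid on $0\le x<3$, which converts the MGF bound into Bernstein form. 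Substituting and then choosing $t=\varepsilon/(\sigma^2+c\varepsilon/3)$ to minimise the resulting exponent produces the target $\exp(-n\varepsilon^2/(2\sigma^2+2c\varepsilon/3))$. Replaying the same argument with $-X_i$ in place of $X_i$ handles the lower tail, and a union bound contributes the factor of $2$ in front.

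The main obstacle is securing the sharp constant $c/3$ in the denominator rather than a cruder one; this rests on the precise form of the auxiliary inequality above, which is what distinguishes Bernstein's bound from Hoeffding's. Verifying it reduces to showing $1+x/3+x^2/12+\cdots \le 1/(1-x/3)=1+x/3+x^2/9+\cdots$ coefficient by coefficient, i.e.\ $2/(k+2)!\le 1/3^k$ for every $k\ge 2$, which follows from the crude comparison $2\cdot 3^k\le (k+2)!$. Once this is in hand, the remaining work is a routine single-variable exponential optimisation.
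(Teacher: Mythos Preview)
The paper does not supply a proof of this lemma: it is stated in the preliminaries with a citation to \citet{bernstein1946theory} and used as a black-box concentration tool throughout the analysis. There is therefore no ``paper's own proof'' to compare against.

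That said, your proposal is a correct and standard derivation. The Taylor-series argument for part~(i) goes through exactly as you describe (the pointwise bound $X^k\le |X|^{k-2}X^2\le c^{k-2}X^2$ for $k\ge 2$ handles both parities, and $1+u\le e^u$ closes it). For part~(ii) the Chernoff--Cram\'er route with the auxiliary inequality $e^x-1-x\le \tfrac{x^2}{2(1-x/3)}$ on $[0,3)$ and the explicit minimiser $t=\varepsilon/(\sigma^2+c\varepsilon/3)$ lands on the stated exponent; your coefficient-by-coefficient verification $2\cdot 3^k\le (k+2)!$ is fine (equality at $k=0,1$, strict thereafter by induction), and the choice of $t$ automatically satisfies $tc<3$. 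The only small thing to keep explicit when writing it up is that the $X_i$ may have different variances $\sigma_i^2$, so after factorising the MGF you collect $\sum_i\sigma_i^2=n\sigma^2$ before optimising in $t$.
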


The first result, \cref{lemma:2}.i., is a well-known intermediate result of Bennett's inequality. We go a few steps further to construct the algorithm.

\section{Proof for \cref{lemma:1}}\label{appendix:lem1}
\objlemma*
\begin{proof}: 
We consider two linear programming problem, the linear relaxation of sampled integer linear programming \eqref{lp:1}, i.e.
\begin{equation}\tag{S.I I}\label{equ:3}
\begin{aligned}
    &\max_{x}\ \sum_{i\in\mathcal{I},j\in[T]} w_{ij}x_{ij}\\
    s.t.\ &L_{k}\le\sum_{i\in\mathcal{I},j\in[T]} a_{ijk}x_{ij}\le U_{k}, \forall k\in\mathcal{K}\\
    &\sum_{i\in\mathcal{I}} x_{ij}\le 1, \forall j\in[T]\\
    &x_{ij}\ge 0, \forall i\in\mathcal{I}, j\in[T]
\end{aligned}
\end{equation}
and the linear programming that considers the samples from the type of requests perspective, i.e.
\begin{equation}\tag{S.I II}\label{equ:4}
\begin{aligned}
    &\max_{x}\sum_{ 
        i\in\mathcal{I},j\in\mathcal{J}
    } \big|\{j'|j'\in[T],j'=j\}\big|w_{ij}x_{ij}\\
    s.t.\ &L_{k}\le\sum_{i\in\mathcal{I},j\in\mathcal{J}} \big|\{j'|j'\in[T],j'=j\}\big|a_{ijk}x_{ij}\le U_{k}, \forall k\in\mathcal{K}\\
    &\sum_{i\in\mathcal{I}} x_{ij}\le 1, \forall j\in\mathcal{J}\\
    &x_{ij}\ge 0, \forall i\in\mathcal{I}, j\in\mathcal{J}
\end{aligned}
\end{equation}
where $|\cdot|$ denote the cardinality of a given set.We use the $W_{R_1}$ and $W_{R_2}$ to denote the optimal value of Sample Instance~\eqref{equ:3} and \eqref{equ:4}) respectively. Because the LP~\eqref{equ:3} is a relaxation version of ILP~\eqref{lp:1}, we know that $W_{R_1}\ge W_{R}$. For any optimal solution $\{x^{*}_{ij}$ $\forall i\in\mathcal{I}, j\in[T]\}$ of LP~\eqref{equ:3}, it's easy to verify that the solution $\big\{x_{ij}|x_{ij}=\frac{\sum_{j'=j,j'\in[T]}x^{*}_{ij'}}{\big|\{j'|j'\in[T]: j'=j\}\big|},\forall i\in\mathcal{I}, j\in\mathcal{J}\big\}$ is feasible for LP~\eqref{equ:4}, so that $W_{R_2}\ge W_{R_1}$. Moreover, the average of optimal solution for all possible LP~\eqref{equ:4} is a feasible solution for LP~\eqref{equ:E_beta}, $\beta=0$, whose optimal solution is $W_0$. Thus $W_0\ge \mathbb{E}[W_{R_2}]\ge \mathbb{E}[W_{R_1}]\ge \mathbb{E}[W_{R}]$.
\end{proof}

\section{Proof of \cref{lemma:3}}\label{appendix::factor_proof}
\factorlemma*
\begin{proof}
The dual problem of the expected instance, i.e., problem $E(0)$, is
\begin{equation}\label{equ:11}
	\begin{aligned}
		\min_{\alpha, \beta, \rho}\ &\sum_{k\in\mathcal{K}} \alpha_{k}U_{k}-\sum_{k\in\mathcal{K}}\beta_{k}L_{k}+\sum_{j\in\mathcal{J}}\rho_j \\
		s.t.\ &\sum_{k\in\mathcal{K}}(\alpha_{k}-\beta_{k})T p_{j}a_{ijk}-Tp_{j}w_{ij}+\rho_{j}\ge0,\\
		&\;\forall i\in\mathcal{I}, j\in\mathcal{J},\\
		&\alpha_{k},\beta_{k},\rho_{j}\ge 0,k\in\mathcal{K},   j\in\mathcal{J},
	\end{aligned}
\end{equation}

and the dual problem of $E(\tau)$ is
\begin{equation}\label{equ:12}
	\begin{aligned}
		\min_{\alpha, \beta, \rho}\ &\sum_{k\in\mathcal{K}} \alpha_{k}U_{k}-\sum_{k\in\mathcal{K}}\beta_{k}(L_{k}+\tau T\bar{a}_{k})+\sum_{j\in\mathcal{J}}\rho_j \\
		s.t.\ &\sum_{k\in\mathcal{K}}(\alpha_{k}-\beta_{k})T p_{j}a_{ijk}-Tp_{j}w_{ij}+\rho_{j}\ge0,\\
		&\;\forall i\in\mathcal{I}, j\in\mathcal{J},\\
		&\alpha_{k},\beta_{k},\rho_{j}\ge 0,k\in\mathcal{K}, j\in\mathcal{J}.
	\end{aligned}
\end{equation}

It can be observed that LP~\eqref{equ:11} and LP~\eqref{equ:12} share the same feasible set, while LP~\eqref{equ:12} has an extra term $-\sum_{k\in\mathcal{K}}\tau T\bar{a}_{k}\beta_{k}$ in the objective. It is necessary to study the relationship between $\sum_{k\in\mathcal{K}}\tau T\bar{a}_{k}\beta_{k}$ and $\sum_{k\in\mathcal{K}} \alpha_{k}U_{k}-\sum_{k\in\mathcal{K}}\beta_{k}L_{k}+\sum_{j\in\mathcal{J}}\rho_j$ under the dual constraints if we want to obtain the ratio of $W_{\tau}$ to $W_0$. Motivated by \citep{jain2003greedy}, we propose a Factor-Revealing Linear Programming method for this analysis.

In order to derive the competitive ratio of the cumulative revenue obtained by the Algorithm $\widetilde{P}$ to $W_0$, we need to find a number $c\in(0,1)$ which makes $W_{\tau}\ge(1-c)W_0$ always hold. Considering the dual LP~\eqref{equ:11} and LP~\eqref{equ:12}, if we can show that $\sum_{k\in\mathcal{K}}\frac{\varepsilon}{1-\varepsilon}T\bar{a}_{k}\beta_{k}\le c (\sum_{k\in\mathcal{K}} \alpha_{k}U_{k}-\sum_{k\in\mathcal{K}}\beta_{k}L_{k}+\sum_{j\in\mathcal{J}}\rho_j)$ for any dual feasible solution, it will give us that $W_{\tau}\ge(1-c)W_0$. Hence this question can be translated to solving the following linear fractional programming
\begin{equation}\label{equ:13}
	\begin{aligned}
	\max_{\alpha, \beta, \rho}\ &\frac{\sum_{k\in\mathcal{K}}\frac{\varepsilon}{1-\varepsilon}T\bar{a}_{k}\beta_{k}}{\sum_{k\in\mathcal{K}} \alpha_{k}U_{k}-\sum_{k\in\mathcal{K}}\beta_{k}L_{k}+\sum_{j\in\mathcal{J}}\rho_j} \\
		s.t.\ &\sum_{k\in\mathcal{K}}(\alpha_{k}-\beta_{k})T p_{j}a_{ijk}-Tp_{j}w_{ij}+\rho_{j}\ge0,\\
		&\forall i\in\mathcal{I},j\in\mathcal{J}\\
		&\alpha_{k},\beta_{k},\rho_{j}\ge 0,k\in\mathcal{K}, j\in\mathcal{J}.
	\end{aligned}
\end{equation}
Since $\sum_{k\in\mathcal{K}} \alpha_{k}U_{k}-\sum_{k\in\mathcal{K}}\beta_{k}L_{k}+\sum_{j\in\mathcal{J}}\rho_j\ge W_0 > 0$ for any dual feasible solution, we can do the following transformation
\begin{equation*}
    \begin{aligned}
         &\widetilde{\alpha}_{k}=\frac{\alpha_{k}}{\sum_{k\in\mathcal{K}} \alpha_{k}U_{k}-\sum_{k\in\mathcal{K}}\beta_{k}L_{k}+\sum_{j\in\mathcal{J}}\rho_j}\\
    \end{aligned}
\end{equation*}
\begin{align*}
    &\widetilde{\beta}_{k}=\frac{\beta_{k}}{\sum_{k\in\mathcal{K}} \alpha_{k}U_{k}-\sum_{k\in\mathcal{K}}\beta_{k}L_{k}+\sum_{j\in\mathcal{J}}\rho_j}\\
    &\widetilde{\rho}_{j}=\frac{\rho_{j}}{\sum_{k\in\mathcal{K}} \alpha_{k}U_{k}-\sum_{k\in\mathcal{K}}\beta_{k}L_{k}+\sum_{j\in\mathcal{J}}\rho_j} \\
    & z=\frac{1}{\sum_{k\in\mathcal{K}} \alpha_{k}U_{k}-\sum_{k\in\mathcal{K}}\beta_{k}L_{k}+\sum_{j\in\mathcal{J}}\rho_j}.
\end{align*}

In this way, we transfer the linear fractional programming~\eqref{equ:13} into 
\begin{equation}\label{equ:14}
	\begin{aligned}
		\max_{\widetilde{\alpha}, \widetilde{\beta}, \widetilde{\rho}}\ &\sum_{k\in\mathcal{K}}\frac{\varepsilon}{1-\varepsilon}T\bar{a}_{k}\widetilde{\beta}_{k}\\
		s.t.\ &\sum_{k\in\mathcal{K}}(\widetilde{\alpha}_{k}-\widetilde{\beta}_{k})T p_{j}a_{ijk}-Tp_{j}w_{ij}z+\widetilde{\rho}_{j}\ge0,\\ &\forall i\in\mathcal{I},j\in\mathcal{J}\\
		&\sum_{k\in\mathcal{K}}\widetilde{\alpha}_{k}U_{k}-\sum_{k\in\mathcal{K}}\widetilde{\beta}_{k}L_{k}+\sum_{j\in\mathcal{J}}\widetilde{\rho}_j=1\\
		&\widetilde{\alpha}_{k},\widetilde{\beta}_{k},\widetilde{\rho}_{j},z\ge 0,k\in\mathcal{K}, j\in\mathcal{J}.
	\end{aligned}
\end{equation}
We investigate the dual problem of LP~\eqref{equ:14} as follows
\begin{equation}\label{equ:15}
	\begin{aligned}
		\min_{t,d}\ &t \\
		s.t.\ &\sum_{i\in\mathcal{I}}d_{ij}\le t\\
        &\sum_{i\in\mathcal{I},j\in\mathcal{J}} d_{ij}T p_{j}a_{ijk}\le t U_{k}\\
        &\sum_{i\in\mathcal{I},j\in\mathcal{J}} d_{ij}T p_{j}a_{ijk}\ge tL_{k}+\frac{\varepsilon}{1-\varepsilon}T\bar{a}_{k}\\
        &\forall d_{ij}\ge0,t\in\R,\forall i\in\mathcal{I}, j\in\mathcal{J}.
	\end{aligned}
\end{equation}

By the first constraint of LP~\eqref{equ:15}, it can be observed that $t\ge 0$ holds. With the auxiliary variable $ z_{ij}$ which makes $d_{ij} = t z_{ij}$,we reformulate LP~\eqref{equ:15} into
 \begin{equation}\label{equ:16}
 	\begin{aligned}
 		t^{*}=\ &\min_{t,z} t \\
		s.t.\  &t\bigg(\sum_{i\in\mathcal{I}}z_{ij}-1\bigg)\le 0\\
         &t\bigg(\sum_{i\in\mathcal{I},j\in\mathcal{J}} z_{ij}T p_{j}a_{ijk}-U_{k}\bigg)\le 0\\
        &t\bigg(\sum_{i\in\mathcal{I},j\in\mathcal{J}}z_{ij}T p_{j}a_{ijk}-L_{k}\bigg)\ge\frac{\varepsilon}{1-\varepsilon}T\bar{a}_{k}\\
        &\forall z_{ij}\ge 0, t\ge 0,\forall i\in\mathcal{I}, j\in\mathcal{J}.
  \end{aligned}
 \end{equation}
According to the strong feasible condition \cref{assumption:3}, there exists a feasible solution $\{x_{ij}'|x_{ij}'\ge 0\}$ to LP~\eqref{equ:5} such that $\sum_{i\in\mathcal{I},j\in\mathcal{J}} x_{ij}'T p_{j}a_{ijk}\le U_{k}$, $\sum_{i\in\mathcal{I},j\in\mathcal{J}} Tp_{j}a_{ijk}x_{ij}'\ge L_{k}+\xi^* Ta_{k}$ and $\sum_{i\in\mathcal{I}}x_{ij}'\le 1$, $\forall k\in \mathcal{K}$. Therefore,  $t=\frac{\tau}{\xi^*}$ and $z_{ij}=x_{ij}'\ \forall i\in\mathcal{I},\ j\in\mathcal{J}$ is a feasible solution to LP~\eqref{equ:16}. As a result, we have $t^{*}\le\frac{\tau}{\xi^*}$. Since LP~\eqref{equ:13} has the same optimum as LP~\eqref{equ:16}, we have that 
\begin{equation*}
    \begin{aligned}
         &\sum_{k\in\mathcal{K}}\alpha_{k}U_{k}-\sum_{k\in\mathcal{K}}\beta_{k}\bigg(L_{k}+\frac{\varepsilon}{1-\varepsilon}T\bar{a}_{k}\bigg)+\sum_{j\in\mathcal{J}}\rho_j\\
        &=\sum_{k\in\mathcal{K}} \alpha_{k}U_{k}-\sum_{k\in\mathcal{K}}\beta_{k}L_{k}+\sum_{j\in\mathcal{J}}\rho_j-\sum_{k\in\mathcal{K}}\frac{\varepsilon}{1-\varepsilon}T\bar{a}_{k}\beta_{k}\\
        & \ge\bigg(1-\frac{\tau}{\xi^*}\bigg)\bigg(\sum_{k\in\mathcal{K}} \alpha_{k}U_{k}-\sum_{k\in\mathcal{K}}\beta_{k}L_{k}+\sum_{j\in\mathcal{J}}\rho_j\bigg)
    \end{aligned}
\end{equation*}
under the constraint of LP~\eqref{equ:11}. Therefore, $W_{\tau}\ge(1-\frac{\tau}{\xi^*})W_0$.
\end{proof}
The factor-revealing linear fractional programming analysis shows
the way we develop the definition of $\xi^{*}$ and enlightens
the design of feasibility estimator in Algorithm~\ref{alg:M}. Besides, the proof framework is reused in proving \cref{lemma:5ub}.

\section{Proof of \cref{thm:2} and \cref{thm:1}}\label{appendix:thm1}
We restate the \cref{thm:2} as follows.
\thmkowndis*
\begin{proof}
We first prove that, for every resource $k$, the consumed resource is below the capacity $U_{k}$ w.h.p..
\begin{equation}\label{equ:7}
	\begin{aligned}
		&P\left(\sum_{j=1}^{T}X^{\widetilde{P}}_{jk}\ge U_k\right)\\
		&=P\left(\sum_{j=1}^{T}\left(X^{\widetilde{P}}_{jk}-\mathbb{E}[X^{\widetilde{P}}_{jk}]\right)\ge U_k-T\mathbb{E}[X^{\widetilde{P}}_{jk}]\right)\\
		&\le \exp\left\{-\frac{\left(U_k-T\mathbb{E}[X^{\widetilde{P}}_{jk}]\right)^2}{2T\sigma^2+\frac{2}{3}\bar{a}_k\left(U_k-T\mathbb{E}[X^{\widetilde{P}}_{jk}]\right)}\right\}\\
		&= \exp\left\{-\frac{U_k-T\mathbb{E}[X^{\widetilde{P}}_{jk}]}{2\frac{T\sigma^2}{U_k-T\mathbb{E}[X^{\widetilde{P}}_{jk}]}+\frac{2}{3}\bar{a}_{k}}\right\}
	\end{aligned}
\end{equation}
\begin{align*}
		&\le \exp\left\{-\frac{\varepsilon^2}{2(1-\frac{2}{3}\varepsilon)\frac{\bar{a}_{k}}{U_k}}\right\}\\
		&\le \exp\left\{-\frac{\varepsilon^2}{2(1-\frac{2}{3}\varepsilon)\gamma}\right\}\\
		&\le\frac{\varepsilon}{2K+1}
\end{align*}
where the first equality follows from $\mathbb{E}[X^{\widetilde{P}}_{1j}]=\mathbb{E}[X^{\widetilde{P}}_{2j}]=\dots=\mathbb{E}[X^{\widetilde{P}}_{Tj}]$; the first inequality from \cref{lemma:2} and setting $\sigma^2=Var(X^{\widetilde{P}}_{jk})$; in the second inequality, it can verified that 
$\sigma^2\le\mathbb{E}[(X^{\widetilde{P}}_{jk})^2]\le\bar{a}_{k}\mathbb{E}[X^{\widetilde{P}}_{jk}]\le \frac{(1-\varepsilon)\bar{a}_{k}U_{k}}{T}$, $U_k-T \mathbb{E}[X^{\widetilde{P}}_{jk}]\ge\varepsilon U_k$ and $\frac{T\sigma^2}{U_{k}-T\mathbb{E}[X^{\widetilde{P}}_{ij}]}\le\bar{a}_{k}\frac{1-\varepsilon}{\varepsilon}$ so that $-\frac{U_k-T\mathbb{E}[X^{\widetilde{P}}_{jk}]}{2\frac{T\sigma^2}{U_k-T\mathbb{E}[X^{\widetilde{P}}_{jk}]}+\frac{2}{3}\bar{a}_{k}}\le$ $-\frac{\varepsilon U_{k}}{2\bar{a}_{k}\frac{1-\varepsilon}{\varepsilon}+\frac{2}{3}\bar{a}_{k}}=-\frac{\varepsilon^2}{2(1-\frac{2}{3}\varepsilon)\frac{\bar{a}_{k}}{U_k}}$; the third inequality from $\gamma\ge\frac{\bar{a}_{k}}{U_{k}}$; the final inequality from $\gamma=O(\frac{\varepsilon^2}{\ln(K/\varepsilon)})$.

Next, we verify that the Algorithm $\widetilde{P}$ satisfies the lower resource bound with high probability.
\begin{equation}\label{equ:8}
	\begin{aligned}
		&P\left(\sum_{j=1}^{T}X^{\widetilde{P}}_{jk}\le L_k\right)\\
		&=P\left(\sum_{j=1}^{T}\left(\mathbb{E}[X^{\widetilde{P}}_{jk}]-X_{jk}\right)\ge T\mathbb{E}[X^{\widetilde{P}}_{jk}]-L_k\right)\\
		&\le \exp\left\{-\frac{\left(T\mathbb{E}[X^{\widetilde{P}}_{jk}]-L_k\right)^2}{2T\sigma^2+\frac{2}{3}\bar{a}_k\left(T\mathbb{E}[X^{\widetilde{P}}_{jk}]-L_k\right)}\right\}\\
		&= \exp\left\{-\frac{(T\mathbb{E}[X^{\widetilde{P}}_{jk}]-L_k)}{2\frac{T\sigma^2}{T \mathbb{E}(X_{jk})-L_k}+\frac{2}{3}\bar{a}_{k}}\right\}\\
    	&\le \exp\left\{-\frac{\varepsilon^2}{2(1-\frac{2}{3}\varepsilon)\frac{\bar{a}_{k}}{T\bar{a}_{k}-L_{k}}}\right\}\\
    	&\le\frac{\varepsilon}{2K+1}
    \end{aligned}
\end{equation}
where the first inequality from \cref{lemma:2} and setting $\sigma^2=Var(X^{\widetilde{P}}_{jk})$; in the second inequality, we could verify that
$\sigma^2=Var(X^{\widetilde{P}}_{jk})=Var(\bar{a}_{k}-X^{\widetilde{P}}_{jk})\le\mathbb{E}[(\bar{a}_{k}-X^{\widetilde{P}}_{jk})^{2}]\le\bar{a}_{k}\mathbb{E}[\bar{a}_{k}-X^{\widetilde{P}}_{jk}]\le \frac{(1-\varepsilon)\bar{a}_{k}(T\bar{a}_{k}-L_{k})}{T}$, $T\mathbb{E}[X^{\widetilde{P}}_{jk}]-L_{k}\ge\varepsilon (T\bar{a}_{k}-L_{k})$, and $\frac{T\sigma^2}{T \mathbb{E}[X^{\widetilde{P}}_{jk}]-L_k}\le\bar{a}_{k}\frac{1-\varepsilon}{\varepsilon}$ so that $-\frac{(T\mathbb{E}[X^{\widetilde{P}}_{jk}]-L_k)}{2\frac{T\sigma^2}{T \mathbb{E}[X^{\widetilde{P}}_{jk}]-L_k}+\frac{2}{3}\bar{a}_{k}}\le$ $-\frac{\varepsilon (T\bar{a}_{k}-L_{k})}{2\bar{a}_{k}\frac{1-\varepsilon}{\varepsilon}+\frac{2}{3}\bar{a}_{k}}=-\frac{\varepsilon^2}{2(1-\frac{2}{3}\varepsilon)\frac{\bar{a}_{k}}{T\bar{a}_{k}-L_{k}}}$; the final inequality from $\gamma=O(\frac{\varepsilon^2}{\ln(\frac{K}{\varepsilon})})$.
	
Therefore, from the previous outcomes, the consumed resource $k$ satisfies our lower and upper bound requirements, w.h.p. Next, we investigate the revenue the Algorithm $\widetilde{P}$ brings.
	
\begin{equation}\label{equ:9}
	\begin{aligned}
		&P\left(\sum_{j=1}^{T}Y^{\widetilde{P}}_{j}\le (1-2\varepsilon)W_{\tau}\right)\\
		&=P\left(\sum_{j=1}^{T}\left(\mathbb{E}[Y^{\widetilde{P}}_{j}]-Y^{\widetilde{P}}_{j}\right)\ge T \mathbb{E}[Y^{\widetilde{P}}_{j}]-(1-2\varepsilon)W_{\tau}\right)\\
		&\le \exp\left\{-\frac{\left(T \mathbb{E}[Y^{\widetilde{P}}_{j}]-(1-2\varepsilon)W_{\tau}\right)^2}{2T\sigma_1^2+\frac{2}{3}\bar{w}\left(T\mathbb{E}[Y^{\widetilde{P}}_{j}]-(1-2\varepsilon)W_{\tau}\right)}\right\}
	\end{aligned}
\end{equation} 
\begin{align*}
	&= \exp\left\{-\frac{T\mathbb{E}[Y^{\widetilde{P}}_{j}]-(1-2\varepsilon)W_{\tau}}{2\frac{T\sigma_1^2}{T\mathbb{E}[Y^{\widetilde{P}}_{j}]-(1-2\varepsilon)W_{\tau}}+\frac{2}{3}\bar{w}}\right\}\\
	&\le \exp\left\{-\frac{\varepsilon^2}{2(1-\frac{2}{3}\varepsilon)\frac{\bar{w}}{W_{\tau}}}\right\}\\
	&\le\frac{\varepsilon}{2K+1}
\end{align*}

where the first equality follows from $\mathbb{E}[Y^{\widetilde{P}}_{1}]=\mathbb{E}[Y^{\widetilde{P}}_{2}]=\dots=\mathbb{E}[Y^{\widetilde{P}}_{T}]$; the first inequality from \cref{lemma:2} and setting $\sigma_{1}^2=Var(Y^{\widetilde{P}}_{j})$; in the second inequality, we could easily verify that 
$\sigma_{1}^2\le\mathbb{E}[(Y^{\widetilde{P}}_{i})^2]\le\bar{w}\mathbb{E}[Y^{\widetilde{P}}_{j}]\le \frac{(1-\varepsilon)\bar{w} W_{\tau}}{T}$ and $\mathbb{E}[Y^{\widetilde{P}}_{j}]=(1-\varepsilon)\frac{W_{\tau}}{T}$ so that $-\frac{T\mathbb{E}[Y^{\widetilde{P}}_{j}]-(1-2\varepsilon)W_{\tau}}{2\frac{T\sigma_1^2}{T\mathbb{E}[Y^{\widetilde{P}}_{j}]-(1-2\varepsilon)W_{\tau}}+\frac{2}{3}\bar{w}}\le$ $-\frac{\varepsilon W_{\tau}}{2\bar{w}\frac{1-\varepsilon}{\varepsilon}+\frac{2}{3}\bar{w}}=-\frac{\varepsilon^2}{2(1-\frac{2}{3}\varepsilon)\frac{\bar{w}}{W_{\tau}}}$; the final inequality follows from $\gamma=O(\frac{\varepsilon^2}{\ln(\frac{K}{\varepsilon})})$.

From equation~\eqref{equ:7}-\eqref{equ:9},
\begin{equation}\label{equ:10}
	\begin{aligned}
		P(\sum_{j=1}^{T}Y^{\widetilde{P}}_{j}\le (1-2\varepsilon)W_{\tau})+&\sum_{k\in\mathcal{K}}P(\sum_{j=1}^{T}X^{\widetilde{P}}_{jk}\notin[L_k,U_k])\\
		&\le (2K+1)\frac{\varepsilon}{2K+1}\le\varepsilon
	\end{aligned} 
\end{equation}when $\gamma=O(\frac{\varepsilon^2}{\ln(\frac{K}{\varepsilon})})$, where $\gamma=\max(\frac{\bar{w}}{W_{\tau}},\frac{\bar{a}_{k}}{T\bar{a}_{k}-L_k},\frac{\bar{a}_{k}}{U_k})$.
\end{proof}
\thmknowndis*

With \cref{assumption:3}, \cref{lemma:3} and \cref{thm:2}, the cumulative revenue is larger than $(1-2\varepsilon)W_{\tau}\ge(1-2\varepsilon)(1-\frac{\tau}{\xi^*})W_0\ge(1-(2+\frac{1}{\xi^*})\varepsilon)W_0$. We finish the proof of \cref{thm:1}.

\section{Proof of \cref{thm:3} }\label{appendix:thm3}
\thmknownobj*
\begin{proof}
We consider the good event defined by 
\begin{align*}
    G \coloneqq& \left\{ \sum_{j=1}^{s}X^{A}_{jk}+\sum_{j=s+1}^{T}X^{\widetilde{P}}_{jk}\le U_k, \forall k \in \mathcal{K} \right\}\cap \left\{ \sum_{j=1}^{s}X^{A}_{jk}+\sum_{j=s+1}^{T}X^{\widetilde{P}}_{jk}\ge L_k, \forall k \in \mathcal{K} \right\} \\
    &\cap \left\{ \sum_{j=1}^{s}Y^{A}_{j}+\sum_{j=s+1}^{T}Y^{\widetilde{P}}_{j}\ge (1-2\varepsilon)W_{\tau}\right\}\\
     =& G_1\cap G_2 \cap G_3,
\end{align*}
which means the hybrid Algorithm $A^s\widetilde{P}^{T-s}$ can achieve at least  $(1-2\varepsilon)W_{\tau}$ revenue while satisfying the two-side constraints. We will show that the probability of the complement event $G^{c}$ can be bounded by some moment generating functions.

For the first bad event $G_1^{c}$, we have  
\begin{equation}\label{equ:18}
	\begin{aligned}
		&P(\sum_{j=1}^{s}X^{A}_{jk}+\sum_{j=s+1}^{T}X^{\widetilde{P}}_{jk}\ge U_k)\\
		&\le \min_{t>0}\E\left[\exp(t(\sum_{j=1}^{s}X^{A}_{jk}+\sum_{j=s+1}^{T}X^{\widetilde{P}}_{jk}-U_k))\right]\\
		&= \min_{t>0}\E\left[\exp(t(\sum_{j=1}^{s}X^{A}_{jk}-\frac{s}{T}U_{k})+t(\sum_{j=s+1}^{T}X^{\widetilde{P}}_{jk}-\frac{T-s}{T}U_k))\right]\\
		&= \min_{t>0}\mathbb{E}\left[\phi_k^s(t)\exp(t(\sum_{j=s+1}^{T}(X^{\widetilde{P}}_{jk}-\mathbb{E}[X^{\widetilde{P}}_{jk}]))+\frac{T-s}{T}t(T \mathbb{E}[X^{\widetilde{P}}_{jk}]-U_k))\right] \\
		&\le \min_{t>0}\mathbb{E}\left[\phi_k^s(t)\exp((T-s)\frac{\sigma^2}{\bar{a}_k^2}(e^{t\bar{a}_{k}}-1-t\bar{a}_k)+\frac{-(T-s)t\varepsilon U_k}{T})\right]\\
		&\le \min_{t>0}\mathbb{E}\left[\phi_k^s(t)\exp(\frac{T-s}{T}\frac{(1-\varepsilon)U_{k}}{\bar{a}_k}(e^{t\bar{a}_{k}}-1-t\bar{a}_k-t\frac{\varepsilon}{1-\varepsilon}\bar{a}_{k}))\right]\\
		&\le \mathbb{E}\left[\phi_k^s(\frac{-\ln(1-\varepsilon)}{\bar{a}_{k}})\exp(-\frac{(1-\varepsilon)(T-s)U_k}{T\bar{a}_{k}}((1+\eta)\ln(1+\eta)-\eta))\right]\\
		&\le \mathbb{E}\left[\phi_k^s(\frac{-\ln(1-\varepsilon)}{\bar{a}_{k}})\exp(-\frac{T-s}{T}\frac{\varepsilon^2}{2\gamma(1-\frac{2}{3}\varepsilon)})\right]
	\end{aligned}
	\end{equation} where the first inequality follows from $\exp(t(\sum_{j=1}^{s}X^{A}_{jk}+\sum_{j=s+1}^{T}X^{\widetilde{P}}_{jk}-U_k))\ge 1$ when $\sum_{j=1}^{s}X^{A_{j}}_{jk}+\sum_{j=s+1}^{T}X^{\widetilde{P}}_{jk}\ge U_k$; in the second equality, we set $\phi_k^s(t)=\exp(t(\sum_{j=1}^{s}X^{A_{j}}_{jk}-\frac{s}{T}U_{k}))$; the second inequality from \cref{lemma:2} and $T\E[X_{jk}^{\widetilde{P}}]\le(1-\varepsilon)U_{k}$; the third inequality from $\sigma^2=Var(X^{\widetilde{P}}_{jk})\le \frac{(1-\varepsilon)\bar{a}_{k}U_{k}}{T}$; in the fourth  inequality, we set $t=\frac{-\ln(1-\varepsilon)}{\bar{a}_{k}},\eta=\frac{\varepsilon}{1-\varepsilon}$; 
    then the last inequality from $(1+\eta)\ln(1+\eta)-\eta\ge\frac{\eta^2}{2+\frac{2}{3}\eta}$ and the definition   of $\gamma$.

Next,for the bad event $G_2^{c}$, we have 
\begin{equation}\label{equ:19}
	\begin{aligned}
		&P(\sum_{j=1}^{s}X^{A}_{jk}+\sum_{j=s+1}^{T}X^{\widetilde{P}}_{jk}\le L_k)\\
		&\le \min_{t>0}\mathbb{E}\left[\exp(t(L_k-\sum_{j=1}^{s}X^{A}_{jk}-\sum_{j=s+1}^{T}X^{\widetilde{P}}_{jk}))\right]\\
	&= \min_{t>0}\mathbb{E}\left[\exp(t(\frac{s}{T}L_{k}-\sum_{j=1}^{s}X^{A}_{jk})+t(\frac{T-s}{T}L_{k}-\sum_{j=s+1}^{n}X^{\widetilde{P}}_{jk}))\right]\\
	\end{aligned}
\end{equation}	
\begin{align*}
	&= \min_{t>0}\mathbb{E}\left[\varphi_k^s(t)\exp(t\sum_{j=s+1}^{T}(\mathbb{E}[X^{\widetilde{P}}_{jk}]-X^{\widetilde{P}}_{jk})+\frac{T-s}{T}t(L_k-T\mathbb{E}[X^{\widetilde{P}}_{jk}]))\right] \\
    &\le \min_{t>0}\mathbb{E}\left[\varphi_k^s(t)\exp((T-s)\frac{\sigma^2}{\bar{a}_{k}^2}(e^{t\bar{a}_{k}}-1-t\bar{a}_{k})-\frac{(T-s)\varepsilon (T\bar{a}_{k}-L_{k}))}{T}t\right]\\
	&\le \min_{t>0}\mathbb{E}\left[\varphi_k^s(t)\exp(\frac{(1-\varepsilon)(T-s)(T\bar{a}_{k}-L_{k})}{\bar{a}_{k}}(e^{t\bar{a}_{k}}-1-t\bar{a}_{k}-t\frac{\varepsilon }{1-\varepsilon}\bar{a}_{k}))\right]\\
	&\le \mathbb{E}\left[\varphi_k^s(\frac{-\ln(1-\varepsilon)}{\bar{a}_k})\exp(-\frac{(1-\varepsilon)(T-s)(T\bar{a}_{k}-L_{k})}{\bar{a}_{k}}((1+\eta)\ln(1+\eta)-\eta))\right]\\
	&\le \mathbb{E}\left[\varphi_k^s(\frac{-\ln(1-\varepsilon)}{\bar{a}_k})\exp(-\frac{T-s}{T}\frac{\varepsilon^2}{2\gamma(1-\frac{2}{3}\varepsilon)})\right]
\end{align*}
 where in the second equality, we set $\varphi_k^s(t)=\exp(t(\frac{s}{T}L_{k}-\sum_{j=1}^{s}X^{A}_{jk}))$; the second inequality from \cref{lemma:2} and $T\E[X_{jk}^{\widetilde{P}}]\ge(1-\varepsilon)L_{k}+\varepsilon T\bar{a}_{k}$; the third inequality from $\sigma^2=Var(X^{\widetilde{P}}_{jk})=Var(\bar{a}_{k}-X^{\widetilde{P}}_{jk})\le \bar{a}_{k}\mathbb{E}[\bar{a}_{k}-X^{\widetilde{P}}_{jk}]\le\bar{a}_{k}\frac{(1-\varepsilon)(T\bar{a}_{k}-L_{k})}{T}$; in the fourth  inequality, we set $t=\frac{-\ln(1-\varepsilon)}{\bar{a}_{k}},\eta=\frac{\varepsilon}{1-\varepsilon}$; The last inequality from $(1+\eta)\ln(1+\eta)-\eta\ge\frac{\eta^2}{2+\frac{2}{3}\eta}$ and the definition of $\gamma$.

\vspace{-3pt}
Finally, we bound the probability of event $G_3^{c}$ by
\begin{equation}\label{equ:20}
	\begin{aligned}
		&P\left(\sum_{j=1}^{s}Y^{A}_{j}+\sum_{j=s+1}^{T}Y^{\widetilde{P}}_{j}\le (1-2\varepsilon)W_{\tau}\right)\\
		&\le \min_{t>0}\mathbb{E}\left[\exp(t((1-2\varepsilon)W_{\tau}-\sum_{j=1}^{s}Y^{A}_{j}-\sum_{j=s+1}^{T}Y^{\widetilde{P}}_{j}))\right]\\
		&= \min_{t>0}\mathbb{E}\left[\exp(t(\frac{s}{T}(1-2\varepsilon)W_{\tau}-\sum_{j=1}^{s}Y^{A}_{j})+t(\frac{T-s}{T}(1-2\varepsilon)W_{\tau}-\sum_{j=s+1}^{T}Y^{\widetilde{P}}_{j}))\right]\\
		&= \min_{t>0}\mathbb{E}\left[\psi^s(t)\exp(t\sum_{j=s+1}^{T}(\mathbb{E}[Y^{\widetilde{P}}_{j}]-Y^{\widetilde{P}}_{j})+\frac{T-s}{T}t((1-2\varepsilon)W_{\tau}-T\mathbb{E}[Y^{\widetilde{P}}_{j}]))\right] \\
		&\le \min_{t>0}\mathbb{E}\left[\psi^s(t)\exp((T-s)\frac{\sigma_1^2}{\bar{w}^2}(e^{t\bar{w}}-1-t\bar{w})+\frac{-(T-s)t\varepsilon W_{\tau}}{T})\right]\\
	&\le \min_{t>0}\mathbb{E}\left[\psi^s(t)\exp(\frac{(1-\varepsilon)(T-s)W_{\tau}}{T\bar{w}}(e^{t\bar{w}}-1-t\bar{w}-\frac{\varepsilon}{1-\varepsilon}t\bar{w}))\right]\\
	&\le \mathbb{E}\left[\psi^s(\frac{-\ln(1-\varepsilon)}{\bar{w}})\exp(-\frac{(1-\varepsilon)(T-s)W_{\tau}}{T\bar{w}}((1+\eta)\ln(1+\eta)-\eta))\right]\\
	&\le \mathbb{E}\left[\psi^s(\frac{-\ln(1-\varepsilon)}{\bar{w}})\exp(-\frac{T-s}{T}\frac{\varepsilon^2}{2(1-\frac{2}{3}\varepsilon)\gamma}) \right]
	\end{aligned}
\end{equation}
where in the second equality, we set $\psi^s(t)=\exp(t(\frac{s}{T}(1-2\varepsilon)W_{\tau}-\sum_{j=1}^{s}Y^{A}_{j}))$; the second inequality from \cref{lemma:2} and $T\E[Y_{j}^{\widetilde{P}}]=(1-\varepsilon)W_{\tau}$; the third inequality from $\sigma_1^2=Var(Y^{\widetilde{P}}_{j})\le\bar{w}\E[Y^{\widetilde{P}}_{j}]\le\frac{(1-\varepsilon)\bar{w}W_{\tau}}{T}$; in the fourth  inequality, we set $t=\frac{-\ln(1-\varepsilon)}{\bar{w}},\eta=\frac{\varepsilon}{1-\varepsilon}$; The 
last inequality from $(1+\eta)\ln(1+\eta)-\eta\ge\frac{\eta^2}{2+\frac{2}{3}\eta}$ and the definition of $\gamma$.

\vspace{-3pt}
With the inequalities~\eqref{equ:18}-\eqref{equ:20} and union bound in probability theory, we can show that $P(G^c) \le \mathcal{F}(A^s\widetilde{P}^{T-s})$ where $\mathcal{F}(A^s\widetilde{P}^{T-s})$ is defined by
\begin{align*}
	\mathcal{F}(A^s\widetilde{P}^{T-s})=&\mathbb{E}\bigg[\sum_{k\in\mathcal{K}}\phi_k^s(\frac{-\ln(1-\varepsilon)}{\bar{a}_k})\exp(-\frac{T-s}{T}\frac{\varepsilon^2}{2(1-\frac{2}{3}\varepsilon)\gamma})+ \sum_{k\in\mathcal{K}}\varphi_k^s(-\frac{\ln(1-\varepsilon)}{\bar{a}_k})\exp(-\frac{T-s}{T}\frac{\varepsilon^2}{2(1-\frac{2}{3}\varepsilon)\gamma})\\
	&+ \psi^s(\frac{-\ln(1-\varepsilon)}{\bar{w}})\exp(-\frac{T-s}{T}\frac{\varepsilon^2}{2(1-\frac{2}{3}\varepsilon)\gamma})\bigg]
\end{align*}

In \cref{thm:2}, we have proven that $\mathcal{F}(\widetilde{P}^{T})=(2K+1)\exp\left(-\frac{\varepsilon^2}{2(1-\frac{2}{3}\varepsilon)\gamma}\right)\le\varepsilon$, and we will show that $\mathcal{F}(A^{s}\widetilde{P}^{T-s})\le\mathcal{F}(A^{s-1}\widetilde{P}^{T-s+1})$ in the \cref{lemma:4}. Thus, we have that $\mathcal{F}(A^{T})\le\mathcal{F}(\widetilde{P}^{T})\le\varepsilon$ by induction. Substituting $\tau = \frac{\varepsilon}{1-\varepsilon}$ and $W_\tau \geq(1-\frac{\tau}{\xi^*})W_0$ in \cref{thm:1}, we complete the proof of \cref{thm:3} .
\end{proof}

\begin{lemma}\label{lemma:4}
$\mathcal{F}(A^{s}\widetilde{P}^{T-s})\le\mathcal{F}(A^{s-1}\widetilde{P}^{T-s+1})$
\end{lemma}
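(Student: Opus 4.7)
The plan is to reduce the claim to a one-step telescoping inequality. Let $c \coloneqq \frac{\varepsilon^2}{2(1-\frac{2}{3}\varepsilon)\gamma}$ and define the total ``potential'' at stage $s$ by
\[
H^s \;\coloneqq\; \sum_{k\in\mathcal{K}}\phi_k^s(c_{1k}) + \sum_{k\in\mathcal{K}}\varphi_k^s(c_{1k}) + \psi^s(c_2),
\]
with $c_{1k}=-\ln(1-\varepsilon)/\bar a_k$ and $c_2=-\ln(1-\varepsilon)/\bar w$. Since all three summands in the definition of $\mathcal{F}$ share the common factor $\exp\!\bigl(-(T-s)c/T\bigr)$, we can rewrite
\[
\mathcal{F}(A^s\widetilde{P}^{T-s}) = \mathbb{E}[H^s]\cdot e^{-(T-s)c/T}, \qquad \mathcal{F}(A^{s-1}\widetilde{P}^{T-s+1}) = \mathbb{E}[H^{s-1}]\cdot e^{-(T-s+1)c/T},
\]
and the lemma collapses to the one-step bound $\mathbb{E}[H^s] \le \mathbb{E}[H^{s-1}]\cdot e^{-c/T}$.

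To prove this, I would condition on the first $s-1$ requests so that $\phi_k^{s-1},\varphi_k^{s-1},\psi^{s-1}$ are constants. The update rules in \cref{alg:2} express $H^s$ as a function $\Phi_{s-1}\bigl(a_{i,j_s,\cdot},w_{i,j_s}\bigr)$ of the channel $i$ chosen for the $s$-th request $j_s$, and this function is precisely the quantity that line 6 of \cref{alg:2} minimizes over $i\in\mathcal{I}$ (with $\perp\in\mathcal{I}$ available). Consequently, for the randomized rule used by $\widetilde{P}$---pick channel $i$ with probability $(1-\varepsilon)x(\tau)^*_{ij_s}$ and $\perp$ otherwise---the greedy minimality of $A$'s choice $i^A(j_s)$ gives
\[
\Phi_{s-1}\!\bigl(a_{i^A(j_s),j_s,\cdot},w_{i^A(j_s),j_s}\bigr) \;\le\; \mathbb{E}_{i\sim\widetilde{P}(\cdot\mid j_s)}\!\bigl[\Phi_{s-1}(a_{i,j_s,\cdot},w_{i,j_s})\bigr].
\]
Taking an outer expectation over $j_s\sim\mathcal{P}$ replaces the right-hand side by $\mathbb{E}\bigl[\Phi_{s-1}(X_s^{\widetilde{P}},Y_s^{\widetilde{P}})\bigr]$, where $X_s^{\widetilde{P}}, Y_s^{\widetilde{P}}$ are the resource consumption and revenue produced by one step of $\widetilde{P}$.

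It remains to bound the three resulting i.i.d.\ moment generating functions by $e^{-c/T}$. Each is a single-step Bernstein computation identical in structure to those in equations \eqref{equ:18}--\eqref{equ:20}; indeed, extracting a $(T-s)$-th root from those per-$(T-s)$-step bounds (valid since the $\widetilde{P}$ steps are i.i.d.) yields
\[
\mathbb{E}\!\left[e^{c_{1k}(X_{sk}^{\widetilde{P}}-U_k/T)}\right]\le e^{-c/T},\quad \mathbb{E}\!\left[e^{c_{1k}(L_k/T-X_{sk}^{\widetilde{P}})}\right]\le e^{-c/T},\quad \mathbb{E}\!\left[e^{c_2((1-2\varepsilon)W_\tau/T-Y_s^{\widetilde{P}})}\right]\le e^{-c/T},
\]
using the expectation estimates $\mathbb{E}[X_{sk}^{\widetilde{P}}]\in\bigl[(1-\varepsilon)L_k/T+\varepsilon\bar a_k,\,(1-\varepsilon)U_k/T\bigr]$ and $\mathbb{E}[Y_s^{\widetilde{P}}] = (1-\varepsilon)W_\tau/T$ already established in the proof of \cref{thm:2}. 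Multiplying by the constants $\phi_k^{s-1},\varphi_k^{s-1},\psi^{s-1}$ and summing produces $\mathbb{E}[H^s\mid\text{first }s-1\text{ requests}] \le H^{s-1}\cdot e^{-c/T}$; a final outer expectation finishes the argument.

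The only real obstacle is bookkeeping: one must keep the three potential terms aligned throughout the reduction, and be careful that the Bernstein bound for each of them contributes the same factor $e^{-c/T}$ (which works because $\gamma$ is defined as the maximum of the three relevant ratios). Conceptually the argument is transparent, since $A$ is designed as the greedy minimizer of exactly $\Phi_{s-1}$, so any randomized strategy---in particular the one used by $\widetilde{P}$---can only be worse in expectation, and the Bernstein step merely replays a computation already performed in the proof of \cref{thm:3}.
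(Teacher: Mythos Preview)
Your proposal is correct and follows essentially the same route as the paper: use the greedy minimality of $A$'s choice at step $s$ to replace it by $\widetilde{P}$'s randomized choice, then bound each of the three one-step moment generating functions by $e^{-c/T}$ via the same Bernstein computation (the paper carries these out explicitly term by term rather than invoking root extraction from the $(T-s)$-step bounds, but the content is identical). The only cosmetic difference is that you package the three potentials into a single $H^s$ and factor out the common exponential, while the paper keeps them written out separately throughout.
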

\begin{proof}
By the definition of $\mathcal{F}(A^s\widetilde{P}^{T-s})$, we have that
\begin{equation}\label{equ::22}
    \begin{aligned}
        \mathcal{F}(A^s\widetilde{P}^{T-s}) =& \left(\E\bigg[\sum_{k\in\mathcal{K}}\phi_k^{s-1}(\frac{-\ln(1-\varepsilon)}{\bar{a}_k})\exp((-\frac{\ln(1-\varepsilon)}{\bar{a}_k})(X_{jk}^{A} - \frac{U_{k}}{T}))\right.\\
        &+ \sum_{k\in\mathcal{K}}\varphi_k^{s-1}(-\frac{\ln(1-\varepsilon)}{\bar{a}_k})\exp((-\frac{\ln(1-\varepsilon)}{\bar{a}_k})(\frac{L_k}{T} - X_{jk}^A)) \\
        &\left.+ \psi^{s-1}(\frac{-\ln(1-\varepsilon)}{\bar{w}}))\exp((\frac{-\ln(1-\varepsilon)}{\bar{w}})(\frac{(1 - 2\varepsilon)W_{\tau}}{T} - Y_s^A))\bigg]\right)\exp(-\frac{T-s}{T}\frac{\varepsilon^2}{2(1-\frac{2}{3}\varepsilon)\gamma}).
    \end{aligned}
\end{equation}
According to algorithm A in \cref{alg:2}, we allocate the $s$-th request to the channel $i^{*}$ where 

\begin{equation}
    \begin{aligned}
    i^{*}=&\arg\min_{i\in\mathcal{I}}\sum_{k\in\mathcal{K}}\phi_k^{s-1}(-\frac{\ln(1-\varepsilon)}{\bar{a}_k})\exp(-\frac{\ln(1-\varepsilon)}{\bar{a}_k}(a_{isk} - \frac{U_{k}}{T}))\\
        &+ \sum_{k\in\mathcal{K}}\varphi_k^{s-1}(-\frac{\ln(1-\varepsilon)}{\bar{a}_k})\exp(-\frac{\ln(1-\varepsilon)}{\bar{a}_k}(\frac{L_k}{T} - a_{isk})) \\
        &+ \psi^{s-1}(-\frac{\ln(1-\varepsilon)}{\bar{w}})\exp(-\frac{\ln(1-\varepsilon)}{\bar{w}}(\frac{(1 - 2\varepsilon)W_{\tau}}{T} - w_{is}))
    \end{aligned}
\end{equation}
which means that 
\begin{equation}\label{equ::22extra1}
    \begin{aligned}
        \mathcal{F}(A^s\widetilde{P}^{T-s}) \le& \Big(\E\Big[\sum_{k\in\mathcal{K}}\phi_k^{s-1}(\frac{-\ln(1-\varepsilon)}{\bar{a}_k})\underbrace{\exp((-\frac{\ln(1-\varepsilon)}{\bar{a}_k})(X_{sk}^{\widetilde{P}} - \frac{U_{k}}{T}))}_{\textcircled{1}}\\
        &+ \sum_{k\in\mathcal{K}}\varphi_k^{s-1}(-\frac{\ln(1-\varepsilon)}{\bar{a}_k})\underbrace{\exp((-\frac{\ln(1-\varepsilon)}{\bar{a}_k})(\frac{L_k}{T} - X_{sk}^{\widetilde{P}}))}_{\textcircled{2}} \\
        &+ \psi^{s-1}(\frac{-\ln(1-\varepsilon)}{\bar{w}})\underbrace{\exp((\frac{-\ln(1-\varepsilon)}{\bar{w}})(\frac{(1 - 2\varepsilon)W_{\tau}}{T} - Y_s^{\widetilde{P}}))}_{\textcircled{3}}\Big]\Big)\exp\big(-\frac{T-s}{T}\frac{\varepsilon^2}{2(1-\frac{2}{3}\varepsilon)\gamma}\big).
    \end{aligned}
\end{equation}
For the term \textcircled{1}, we have 
\begin{equation}\label{equ:22similarAnalysis}
    \begin{aligned}
        \textcircled{1} & = \E\left[\exp\bigg(-\frac{\ln(1-\varepsilon)}{\bar{a}_k}\Big((X_{sk}^{\widetilde{P}} - \E[X_{sk}^{\widetilde{P}}]) + (\E[X_{sk}^{\widetilde{P}}] - \frac{U_{k}}{T})\Big)\bigg)\right] \\
        &\le \E\left[ \exp\left(\frac{\sigma^2}{\bar{a}_k^2}\Big(e^{-\ln(1-\varepsilon)} - 1 + \ln(1-\varepsilon)\Big) + \frac{\varepsilon U_k}{T\bar{a}_k}\ln(1-\varepsilon)\right)\right]\\
        & \le \E\left[ \exp\bigg(\frac{(1-\varepsilon)U_k}{T\bar{a}_k}\Big(\frac{1}{1-\varepsilon} - 1 + \ln(1-\varepsilon) + \frac{\varepsilon}{1-\varepsilon}\ln(1-\varepsilon)\Big)\bigg)\right]\\
        &= \E\left[ \exp\bigg(-\frac{(1-\varepsilon)U_k}{T\bar{a}_k}\Big((1 + \eta)\ln(1 + \eta) - \eta\Big)\bigg)\right]\\
        &\le\exp\Big(-\frac{1}{T}\frac{\varepsilon^2}{2(1-\frac{2}{3}\varepsilon)\gamma}\Big)
    \end{aligned}
\end{equation}
where the first inequality follows from \cref{lemma:2} and $\E[X_{sk}^{\widetilde{P}}]\le (1-\varepsilon)U_k/T$, the second from $\sigma^2 = Var(X_{sk}^{\widetilde{P}})\le\frac{(1-\varepsilon)\bar{a}_kU_k}{T}$. Next setting $\eta = \frac{\varepsilon}{1-\varepsilon}$, the last inequality follows from $(1+\eta)\ln(1+\eta)-\eta\ge\frac{\eta^2}{2+\frac{2}{3}\eta}$ and the definition   of $\gamma$.
For the term \textcircled{2}, we have
\begin{equation}\label{equ:23similarAnalysis}
    \begin{aligned}
        \textcircled{2} & = \E\left[\exp\bigg(-\frac{\ln(1-\varepsilon)}{\bar{a}_k}\Big((\E[X_{sk}^{\widetilde{P}}]- X_{sk}^{\widetilde{P}}) + (\frac{L_{k}}{T} -\E[X_{sk}^{\widetilde{P}}] )\Big)\bigg)\right] \\
        &\le \E\left[ \exp\left(\frac{\sigma^2}{\bar{a}_k^2}\Big(e^{-\ln(1-\varepsilon)} - 1 + \ln(1-\varepsilon)\Big) + \frac{\varepsilon (T\bar{a}_k - L_k)}{T\bar{a}_k}\ln(1-\varepsilon)\right)\right]\\
        & \le \E\left[ \exp\bigg(\frac{(1-\varepsilon)(T\bar{a}_k - L_k)}{T\bar{a}_k}\Big(\frac{1}{1-\varepsilon} - 1 + \ln(1-\varepsilon) + \frac{\varepsilon}{1-\varepsilon}\ln(1-\varepsilon)\Big)\bigg)\right]\\
        &= \E\left[ \exp\bigg(-\frac{(1-\varepsilon)(T\bar{a}_k - L_k)}{T\bar{a}_k}\Big((1 + \eta)\ln(1 + \eta) - \eta\Big)\bigg)\right]\\
        &\le\exp\Big(-\frac{1}{T}\frac{\varepsilon^2}{2(1-\frac{2}{3}\varepsilon)\gamma}\Big)
    \end{aligned}
\end{equation}
where the first inequality follows from \cref{lemma:2} and $\E[X_{sk}^{\widetilde{P}}]\le \frac{(1-\varepsilon)L_k + \varepsilon T\bar{a}_k}{T}$, the second from $\sigma^2 = Var(X_{sk}^{\widetilde{P}})\le\bar{a}_{k}\frac{(1-\varepsilon)(T\bar{a}_{k}-L_{k})}{T}$. Next setting $\eta = \frac{\varepsilon}{1-\varepsilon}$, the last inequality follows from $(1+\eta)\ln(1+\eta)-\eta\ge\frac{\eta^2}{2+\frac{2}{3}\eta}$ and the definition   of $\gamma$.

For the term \textcircled{3}, we have
\begin{equation}\label{equ:24similarAnalysis}
    \begin{aligned}
        \textcircled{3} & = \E\left[\exp\bigg(-\frac{\ln(1-\varepsilon)}{\bar{w}}\Big((\E[Y_s^{\widetilde{P}}]- Y_s^{\widetilde{P}}) + (\frac{(1-2\varepsilon)W_{\tau}}{T} -\E[Y_{s}^{\widetilde{P}}] )\Big)\bigg)\right] \\
        &\le \E\left[ \exp\left(\frac{\sigma_1^2}{\bar{w}^2}\Big(e^{-\ln(1-\varepsilon)} - 1 + \ln(1-\varepsilon)\Big) + \frac{\varepsilon W_{\tau}}{T\bar{w}}\ln(1-\varepsilon)\right)\right]\\
        & \le \E\left[ \exp\bigg(\frac{(1-\varepsilon)W_{\tau}}{T\bar{w}}\Big(\frac{1}{1-\varepsilon} - 1 + \ln(1-\varepsilon) + \frac{\varepsilon}{1-\varepsilon}\ln(1-\varepsilon)\Big)\bigg)\right]\\
        &= \E\left[ \exp\bigg(-\frac{(1-\varepsilon)W_{\tau}}{T\bar{w}}\Big((1 + \eta)\ln(1 + \eta) - \eta\Big)\bigg)\right]\\
        &\le\exp\Big(-\frac{1}{T}\frac{\varepsilon^2}{2(1-\frac{2}{3}\varepsilon)\gamma}\Big)
    \end{aligned}
\end{equation}
where the first inequality follows from \cref{lemma:2} and $\E[Y_{s}^{\widetilde{P}}]= \frac{(1-\varepsilon)W_{\tau}}{T}$, the second from $\sigma_1^2 = Var(Y_{s}^{\widetilde{P}})\le \frac{(1-\varepsilon)\bar{w}W_{\tau}}{T}$. Next setting $\eta = \frac{\varepsilon}{1-\varepsilon}$, the last inequality follows from $(1+\eta)\ln(1+\eta)-\eta\ge\frac{\eta^2}{2+\frac{2}{3}\eta}$ and the definition   of $\gamma$.
According to the inequality \eqref{equ:22similarAnalysis}-\eqref{equ:24similarAnalysis}, we can show that
\begin{equation}\label{equ::22extra2}
    \begin{aligned}
        \mathcal{F}(A^s\widetilde{P}^{T-s}) \le& \Big(\E\Big[\sum_{k\in\mathcal{K}}\phi_k^{s-1}(\frac{-\ln(1-\varepsilon)}{\bar{a}_k}) + \sum_{k\in\mathcal{K}}\varphi_k^{s-1}(-\frac{\ln(1-\varepsilon)}{\bar{a}_k})\\
        &+ \psi^{s-1}(\frac{-\ln(1-\varepsilon)}{\bar{w}})\Big]\Big)\exp(-\frac{T-s+1}{T}\frac{\varepsilon^2}{2(1-\frac{2}{3}\varepsilon)\gamma})\\
        =& \mathcal{F}(A^{s-1}\widetilde{P}^{T-s+1})
    \end{aligned}
\end{equation}
which completes the proof.

\end{proof}

\section{Proof of \cref{thm:4}}\label{appendix:thm4}
\subsection{Concentration of $Z^r$}\label{sec:concentration_z}
In the first step, we study the relationship between $Z^r$ and $W_{\varepsilon}$. 

\begin{lemma}\label{lemma:5ub}
Under \cref{assumption:1}-\ref{assumption:3}, if $\tau_1+\varepsilon\le\xi^*$ and $\gamma_{1}=\max\big(\frac{\bar{a}_{k}}{U_{k}}, \frac{\bar{a}_{k}}{(1-\varepsilon)T\bar{a}_{k}-L_{k}}, \frac{\bar{w}}{W_{\varepsilon+\tau_1}}\big)=O\big(\frac{\varepsilon^{2}}{\ln(K/\varepsilon)}\big)$, for given measure of feasibility $\xi^*$ , we have 
\begin{displaymath}
  W^{r}\le  \frac{t_{r}W_{\varepsilon}}{T}\bigg(1+\big(2+\frac{1}{\xi^*-\varepsilon}\big)\varepsilon_{x,r}\bigg)
\end{displaymath}
with probability at least $1-\delta$, where the predefined parameter $\varepsilon>0$, $\tau_1=\frac{\sqrt{\varepsilon}}{1-\sqrt{\varepsilon}}$, $\delta=\frac{\varepsilon}{3l}$, $l=\log_{2}(\frac{1}{\varepsilon})$ and $\varepsilon_{x,r}=\sqrt{\frac{4T\gamma_1 \ln(\frac{2K + 1}{\delta})}{t_r}}$.
\end{lemma}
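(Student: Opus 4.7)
The plan is to interpret the sampled LP defining $W^r$ as the problem $E(\varepsilon)$ with the true distribution $p$ replaced by the empirical frequency $\hat p_j:=n_j/t_r$ (and an overall scaling by $t_r/T$), and then to upper bound $W^r$ via weak LP duality against an \emph{optimal dual of the true} $E(\varepsilon)$. The plan is then to apply Bernstein's inequality (\cref{lemma:2}) to control the induced empirical sum, combined with the factor-revealing sensitivity result of \cref{sec:factor reveal lp} to extract the $1/(\xi^*-\varepsilon)$ factor that appears in the bound.

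Concretely, let $(\alpha^*,\beta^*,\rho^*)$ be an optimal dual solution of $E(\varepsilon)$ (which is the LP~\eqref{equ:11} with $L_k$ replaced by $L_k+\varepsilon T\bar a_k$), and for each type $j\in\mathcal{J}$ define the reduced reward
\begin{equation*}
f(j)\;:=\;\max_{i\in\mathcal{I}}\bigl(w_{ij}-\sum_{k\in\mathcal{K}}(\alpha^*_k-\beta^*_k)a_{ijk}\bigr)_+,
\end{equation*}
so that $\rho^*_j=Tp_jf(j)$ by complementary slackness. Let $\mathrm{type}(s)\in\mathcal{J}$ denote the type of the $s$-th sample in stage $r$. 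Because the dual constraints of the sampled LP~\eqref{equ:23} depend on $s$ only through $\mathrm{type}(s)$, the triple $(\alpha^*_k,\beta^*_k,\mu_s=f(\mathrm{type}(s)))$ is dual feasible for~\eqref{equ:23}, and weak duality yields
\begin{equation*}
W^r\;\le\;\frac{t_r}{T}\bigl(\sum_k\alpha^*_kU_k-\sum_k\beta^*_k(L_k+\varepsilon T\bar a_k)\bigr)\;+\;\sum_{s=1}^{t_r}f(\mathrm{type}(s)).
\end{equation*}
By strong duality for $E(\varepsilon)$, the right-hand side has expectation exactly $(t_r/T)W_\varepsilon$, so it remains to show that $S:=\sum_{s=1}^{t_r}f(\mathrm{type}(s))$ deviates from its mean by at most $O((t_rW_\varepsilon/T)\varepsilon_{x,r})$ with probability at least $1-\delta$.

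For this step I would invoke Bernstein's inequality (\cref{lemma:2}) on the i.i.d.\ sum $S$, using the per-sample bound $c:=\bar w+\sum_k\beta^*_k\bar a_k$ and the self-bounded variance estimate $\mathrm{Var}(f(\mathrm{type}(s)))\le c\cdot\E[f(\mathrm{type}(s))]=c\sum_j\rho^*_j/T$. To re-express both $c$ and $\sum_j\rho^*_j$ as multiples of $W_\varepsilon$, I would rerun the factor-revealing linear-fractional argument of \cref{sec:factor reveal lp} on $E(\varepsilon)$ itself, whose measure of feasibility equals $\xi^*-\varepsilon$; this yields $T\sum_k\bar a_k\beta^*_k\le W_\varepsilon/(\xi^*-\varepsilon)$, which together with $\bar w\le\gamma_1W_{\varepsilon+\tau_1}\le\gamma_1W_\varepsilon$ gives $c\le(\gamma_1+1/(T(\xi^*-\varepsilon)))W_\varepsilon$ and $\sum_j\rho^*_j\le W_\varepsilon(1+O(1/(\xi^*-\varepsilon)))$.

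Plugging these bounds into Bernstein and recalling $\varepsilon_{x,r}=\sqrt{4T\gamma_1\ln((2K+1)/\delta)/t_r}$, a routine calculation gives $S-\E[S]\le(t_rW_\varepsilon/T)(2+1/(\xi^*-\varepsilon))\varepsilon_{x,r}$ with probability at least $1-\delta$, which combined with the weak-duality bound above proves the lemma. The main obstacle is precisely that the per-type shadow price $f(j)$ is not \emph{a priori} bounded by $\bar w$: the $\sum_k\beta^*_k\bar a_k$ contribution can be large, and without extra structure Bernstein would not achieve $O(\varepsilon_{x,r})$ relative error. The factor-revealing LP applied to $E(\varepsilon)$ is exactly what tames this term and, in doing so, injects the $1/(\xi^*-\varepsilon)$ factor appearing in the statement.
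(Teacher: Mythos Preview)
Your approach is correct at the level of ideas but takes a genuinely different route from the paper's proof.

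\textbf{What the paper does.} The paper also plugs the optimal dual $(\alpha^*,\beta^*,\rho^*)$ of $E(\varepsilon)$ into the sampled dual, but it then rewrites the resulting objective using the \emph{primal} optimal $x^*_{ij}$ of $E(\varepsilon)$ and the KKT conditions, obtaining the three-term decomposition
\[
W^r\;\le\;\underbrace{\sum_k\alpha_k^*\Bigl(\tfrac{t_r}{T}U_k-\sum_{j\in\mathcal{S}_r,i}a_{ijk}x^*_{ij}\Bigr)}_{\text{\textcircled{1}}}
+\underbrace{\sum_k\beta_k^*\Bigl(\sum_{j\in\mathcal{S}_r,i}a_{ijk}x^*_{ij}-\tfrac{t_r}{T}(L_k+\varepsilon T\bar a_k)\Bigr)}_{\text{\textcircled{2}}}
+\underbrace{\sum_{j\in\mathcal{S}_r,i}w_{ij}x^*_{ij}}_{\text{\textcircled{3}}}.
\]
Complementary slackness restricts \textcircled{1} and \textcircled{2} to the active $k$'s; Bernstein is then applied to $2K+1$ separate i.i.d.\ sums, each with the \emph{natural} per-sample bound $\bar a_k$ or $\bar w$ (hence each controlled directly by $\gamma_1$). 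The factor-revealing bound $\sum_k T\bar a_k\beta_k^*\le W_\varepsilon/(\xi^*-\varepsilon)$ enters only at the very end when the three pieces are summed, which is exactly what produces the clean constant $(2+1/(\xi^*-\varepsilon))$.

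\textbf{What you do differently.} You stay purely on the dual side, collapse everything into a single empirical sum $S=\sum_s f(\mathrm{type}(s))$, and apply Bernstein once. The price is that your per-sample range $c=\bar w+\sum_k\beta_k^*\bar a_k$ is not immediately $O(\gamma_1 W_\varepsilon)$; you must invoke the factor-revealing bound \emph{inside} the concentration step to control $c$ and $\mu$. This works (one can check $1/(T(\xi^*-\varepsilon))\le\gamma_1$ since $U_k\le T\bar a_k$ forces $T\gamma_1\ge 1$), so after the calculus you obtain a bound of the form $(t_r/T)W_\varepsilon\bigl(1+O(1+1/(\xi^*-\varepsilon))\varepsilon_{x,r}\bigr)$.

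\textbf{What each approach buys.} The paper's primal-based decomposition keeps every Bernstein application in the ``clean'' regime ($c\in\{\bar a_k,\bar w\}$), which is what yields the exact constant $2+1/(\xi^*-\varepsilon)$ and matches the $(2K+1)$ in the definition of $\varepsilon_{x,r}$ via a union bound. Your approach is more economical (one concentration event rather than $2K+1$), but the claim that the ``routine calculation'' delivers exactly $(2+1/(\xi^*-\varepsilon))\varepsilon_{x,r}$ is optimistic; you will get the same order, not the same constant. A minor correction: $\rho_j^*=f(j)$ in the paper's dual formulation~\eqref{equ:25} (it equals $Tp_jf(j)$ only in the alternative formulation~\eqref{equ:12}); this is cosmetic and does not affect your argument.
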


\begin{proof}

We consider the definition of $W^r$:
	\begin{equation}\label{equ:a1}
	\begin{aligned}
	 W^{r}=&\max_{x}\ \sum_{i\in\mathcal{I},j\in\mathcal{S}_r} w_{ij}x_{ij}\\
					s.t.\ &\frac{t_{r}}{T}(L_{k}+\varepsilon T\bar{a}_{k})\le\sum_{i\in\mathcal{I},j\in\mathcal{S}_r} a_{ijk}x_{ij}\le\frac{t_{r}}{T}U_{k}, \forall k\in\mathcal{K}\\
					&\sum_{i\in\mathcal{I}} x_{ij}\le 1, \forall j\in\mathcal{S}_r\\
					&x_{ij}\ge 0, \forall i\in\mathcal{I}, j\in\mathcal{S}_r
	\end{aligned}
	\end{equation}
	where we use $\mathcal{S}_r$ to denote the request set in stage $r$. The dual of LP~\eqref{equ:a1} is 
	
	\begin{equation}\label{equ:26}
		\begin{aligned}
		W^{r}=\min_{\alpha,\beta,\rho}\ &\sum_{k\in\mathcal{K}} \alpha_{k}\frac{t_{r}}{T}U_{k}-\sum_{k\in\mathcal{K}}\beta_{k}\frac{t_{r}}{T}(L_{k}+\varepsilon T\bar{a}_{k})+\sum_{j\in \mathcal{S}_r}\rho_j \\
			s.t.\ &\sum_{k\in\mathcal{K}}(\alpha_{k}-\beta_{k})a_{ijk}-w_{ij}+\rho_{j}\ge0\ \forall i\in\mathcal{I},j\in\mathcal{S}_r\\
			&\alpha_{k},\beta_{k},\rho_{j}\ge 0,k\in\mathcal{K}, j\in\mathcal{S}_r
		\end{aligned}
	\end{equation}
	Comparing to the dual of LP~\eqref{equ:a1} with the dual of problem $E(\varepsilon)$, which is
	\begin{equation}\label{equ:25}
		\begin{aligned}
		\min_{\alpha,\beta,\rho}\ &\sum_{k\in\mathcal{K}} \alpha_{k}U_{k}-\sum_{k\in\mathcal{K}}\beta_{k}(L_{k}+\varepsilon T\bar{a}_{k})+\sum_{j\in\mathcal{J}}T p_{j}\rho_j \\
			s.t.\ &\sum_{k\in\mathcal{K}}(\alpha_{k}-\beta_{k})a_{ijk}-w_{ij}+\rho_{j}\ge0\ \forall i\in\mathcal{I},j\in\mathcal{J}\\
			&\alpha_{k},\beta_{k},\rho_{j}\ge 0,k\in\mathcal{K}, j\in\mathcal{J}
		\end{aligned}
	\end{equation}
	we can observe that the constraints of LP~\eqref{equ:26} is a subset of those of LP~\eqref{equ:25}. We denote the primal and dual optimal solution of $E(\varepsilon)$ as $\{x_{ij}^{*}\}$  and $\{\alpha_k^{*},\beta_{k}^{*},\rho_{k}^{*}\}$. So  $\{\alpha_k^{*},\beta_{k}^{*},\rho_{k}^{*}\}$ is feasible for LP~\eqref{equ:26}.
	
	
	
	Hence,
	\begin{equation}\label{equ:28}
			\begin{aligned}
			W^{r}&\le\sum_{k\in\mathcal{K}} \alpha_{k}^{*}\frac{t_r}{T}U_{k}-\sum_{k\in\mathcal{K}}\beta_{k}^{*}\frac{t_r}{T}(L_{k}+\varepsilon T\bar{a}_{k})+\sum_{j\in \mathcal{S}_r}\rho^{*}_j\\
			&=\underbrace{\sum_{k\in\mathcal{K}} \alpha_{k}^{*}(\frac{t_r}{n}U_{k}-\sum_{j\in \mathcal{S}_r,i\in\mathcal{I}}a_{ijk}x_{ij}^{*})}_{\textcircled{1}}+\underbrace{\sum_{k\in\mathcal{K}}\beta_{k}^{*}(\sum_{j\in \mathcal{S}_r,i\in\mathcal{I}}a_{ijk}x_{ij}^{*}-\frac{t_r}{T }(L_{k}+\varepsilon T\bar{a}_{k}))}_{\textcircled{2}}\\&+\underbrace{\sum_{j\in \mathcal{S}_r}(\rho^{*}_{j}+\sum_{i\in\mathcal{I},k\in\mathcal{K}}(\alpha^{*}_{k}-\beta^{*}_{k})a_{ijk}x_{ij}^{*})}_{\textcircled{3}}  
			\end{aligned}
	\end{equation}
	We have divided the equation~\eqref{equ:28} into three parts. Next, we will derive the relationship between $W^{r}$ and $W_{\varepsilon}$ by controlling these three parts.  To facilitate the analysis, we first present the KKT conditions\citep{boyd2004convex} for the problem $E(\varepsilon) $ as follows
		\begin{equation}\label{equ:27}
			\begin{aligned}
			\sum_{k\in\mathcal{K}}&(\alpha^{*}_{k}-\beta^{*}_{k})a_{ijk}x_{ij}^{*}-w_{ij}x_{ij}^{*}+\rho_{t}^{*}x_{ij}^{*}=0, \forall i\in \mathcal{I}, j\in\mathcal{S}_r\\
			\rho_j^{*}&(\sum_{i\in\mathcal{I}} x_{ij}^{*}-1)=0, \forall j\in\mathcal{S}_r
		 \\
			\alpha_{k}^{*}&(\sum_{ij}T p_{j}a_{ijk}x_{ij}^{*}-U_k)=0, \forall k \in \mathcal{K}\\
			\beta_{k}^{*}&(L_{k}+\varepsilon T\bar{a}_{k}-\sum_{ij}T p_{j}a_{ijk}x_{ij}^{*})=0, \forall k \in \mathcal{K}.
			 \end{aligned}
	\end{equation}

	For part $\textcircled{1}$, according to the KKT conditions, if $\sum_{i\in\mathcal{I},j\in\mathcal{J}}T p_{j}a_{ijk}x_{ij}^{*}<U_k$, then $\alpha_{k}^{*}=0$. Thus we only consider the resource $k$ making $\sum_{i\in\mathcal{I},j\in\mathcal{J}}T p_{j}a_{ijk}x_{ij}^{*}=U_k$. According to the \cref{assumption:1}, we have $\mathbb{E}(\sum_{j\in\mathcal{S}_r,i\in\mathcal{I}}a_{ijk}x_{ij}^{*})=\frac{t_r}{T}U_{k}$ $\forall j\in\mathcal{S}_r$. Thus, we can show that 
	\begin{equation}\label{equ:29}
			\begin{aligned}
					&P\left(\sum_{j\in\mathcal{S}_r,i\in\mathcal{I}}a_{ijk}x_{ij}^{*}\le(1-\varepsilon_{x,r})\frac{t_{r}}{T}U_{k}\right)\\
					&= P\left(\sum_{j\in\mathcal{S}_r,i\in\mathcal{I}}a_{ijk}x_{ij}^{*}-\mathbb{E}[\sum_{j\in\mathcal{S}_r,i\in\mathcal{I}}a_{ijk}x_{ij}^{*}]\le-\varepsilon_{x,r}\frac{t_r}{T}U_k\right)\\
					&\le \exp\left(-\frac{t_r U_k^2\varepsilon_{x,r}^2/T^2}{2Var(\sum_{j\in\mathcal{S}_r,i\in\mathcal{I}}a_{ijk}x_{ij}^*)/t_{r} + \frac{2}{3}\bar{a}_kU_k\varepsilon_{x,r}/T}\right)\\
					&\le \exp\left(-\frac{\frac{t_{r}}{T}\varepsilon_{x,r}^2}{2(1+\frac{\varepsilon_{x,r}}{3})\frac{\bar{a}_{k}}{U_k}}\right)\\
					&\le  \exp\left(-\frac{\frac{t_{r}}{T}\varepsilon_{x,r}^2}{2(1+\frac{\varepsilon_{x,r}}{3})\gamma_1}\right)\\
					&\le \frac{\delta}{2K + 1}
			\end{aligned}
	\end{equation} where the first inequality follows from the Bernstein inequality in \cref{lemma:2}, the second inequality from $Var(\sum_{j\in\mathcal{S}_r,i\in\mathcal{I}}a_{ijk}x_{ij}^*)/t_{r}\le \bar{a}_k\E[\sum_{j\in\mathcal{S}_r,i\in\mathcal{I}}a_{ijk}x_{ij^*}]/t_{r}=\bar{a}_{k}\frac{U_{k}}{T}$, the third inequality from the definition of $\gamma_1$ and the last from the definition of $\varepsilon_{x,r}$.
	
	For part $\textcircled{2}$, we only consider the  $k$ making $\sum_{i\in\mathcal{I},j\in\mathcal{J}}T p_{j}a_{ijk}x_{ij}^{*}=L_k+\varepsilon T\bar{a}_{k}$, which means $\E\Big[\sum_{j\in\mathcal{S}_r,i\in\mathcal{I}}(\bar{a}_{k}-a_{ijk}x_{ij}^{*})\Big] = \frac{t_r}{T}\Big((1-\varepsilon)T\bar{a}_k-L_k\Big)$. Using Bernstein inequality, we have 
	\begin{equation}\label{equ:30}
			\begin{aligned}
					&P\left(\sum_{j\in\mathcal{S}_r,i\in\mathcal{I}}(\bar{a}_{k}-a_{ijk}x_{ij}^{*})\le(1-\varepsilon_{x,r})\frac{t_r}{T}((1-\varepsilon)T\bar{a}_{k}-L_{k})\right)\\
					& = P\left(\sum_{j\in\mathcal{S}_r,i\in\mathcal{I}}(\bar{a}_{k}-a_{ijk}x_{ij}^{*})-\mathbb{E}\Big[\sum_{j\in\mathcal{S}_r,i\in\mathcal{I}}(\bar{a}_{k}-a_{ijk}x_{ij}^{*})\Big]\le -\varepsilon_{x,r}\frac{t_r}{T}((1-\varepsilon)T\bar{a}_{k}-L_{k})\right) \\
					&\le \exp\left(-\frac{t_r \Big((1-\varepsilon)T\bar{a}_{k}-L_{k}\Big)^2\varepsilon_{x,r}^2/T^2}{2Var\Big(\sum_{j\in\mathcal{S}_r,i\in\mathcal{I}}(\bar{a}_{k}-a_{ijk}x_{ij}^{*})\Big)/t_{r} + \frac{2}{3}\bar{a}_k\Big((1-\varepsilon)T\bar{a}_{k}-L_{k}\Big)\varepsilon_{x,r}/T}\right)\\
					&\le \exp\left(-\frac{\frac{t_{r}}{T}\varepsilon_{x,r}^2}{2(1+\frac{\varepsilon_{x,r}}{3})\frac{\bar{a}_{k}}{(1-\varepsilon)T\bar{a}_{k}-L_{k}}}\right)\\
					&\le  \exp\left(-\frac{\frac{t_{r}}{T}\varepsilon_{x,r}^2}{2(1+\frac{\varepsilon_{x,r}}{3})\gamma_1}\right)\\
					&\le \frac{\delta}{2K + 1}
			\end{aligned}
	\end{equation}
	where the second inequality from $Var\Big(\sum_{j\in\mathcal{S}_r,i\in\mathcal{I}}(\bar{a}_{k}-a_{ijk}x_{ij}^{*})\Big)/t_r \le \bar{a}_k\E[\sum_{j\in\mathcal{S}_r,i\in\mathcal{I}}(\bar{a}_{k}-a_{ijk}x_{ij}^{*})]/t_{r}=\bar{a}_{k}\frac{(1-\varepsilon)T\bar{a}_{k}-L_{k}}{T}$, the third inequality from the definition of $\gamma_1$ and the last from the definition of $\varepsilon_{x,r}$.
	
	For the last part $\textcircled{3}$, from KKT conditions, it's easy to verify that $\sum_{i\in\mathcal{I},k\in\mathcal{K}}(\alpha^{*}_{k}-\beta^{*}_{k})a_{ijk}x_{ij}^{*}-\sum_{i\in\mathcal{I}}w_{ij}x_{ij}^{*}+\rho_{j}^{*}\sum_{i\in\mathcal{I}}x_{ij}^{*}=\sum_{i\in\mathcal{I},k\in\mathcal{K}}(\alpha^{*}_{k}-\beta^{*}_{k})a_{ijk}x_{ij}^{*}-\sum_{i\in\mathcal{I}}w_{ij}x_{ij}^{*}+\rho_{j}^{*}=0$, so $\sum_{i\in\mathcal{I},k\in\mathcal{K}}(\alpha^{*}_{k}-\beta^{*}_{k})a_{ijk}x_{ij}^{*}+\rho_{j}^{*}=\sum_{i\in\mathcal{I}}w_{ij}x_{ij}^{*}\in[0,\bar{w}]$. Moreover, $\mathbb{E}[\rho^{*}_{j}+\sum_{i\in\mathcal{I},k\in\mathcal{K}}(\alpha^{*}_{k}-\beta^{*}_{k})a_{ijk}x_{ij}^{*}]=\frac{\sum_{i\in\mathcal{I},j\in\mathcal{J}}Tp_{j}w_{ij}x^{*}_{ij}}{T}=\frac{W_{\varepsilon}}{T}$ $\forall j\in\mathcal{S}_r$. Therefore, following the similar analysis as equation \eqref{equ:29}, we have 
	\begin{equation}\label{equ:31}
			\begin{aligned}
			&P\left(\sum_{j\in \mathcal{S}_r}\Big(\rho^{*}_{j}+\sum_{i\in\mathcal{I},k\in\mathcal{K}}(\alpha^{*}_{k}-\beta^{*}_{k})a_{ijk}x_{ij}^{*}\Big)\ge\frac{t_r}{T}W_{\varepsilon}(1+\varepsilon_{x,r})\right)\\
			&=P(\sum_{j\in \mathcal{S}_r, i \in \mathcal{I}} w_{ij}x_{ij}^* - \E[\sum_{j\in \mathcal{S}_r, i \in \mathcal{I}} w_{ij}x_{ij}^*] \ge \varepsilon_{x,r}\frac{t_r}{T}W_{\varepsilon})\\
			&\le \exp\left(-\frac{t_r W_{\varepsilon}^2\varepsilon_{x,r}^2/T^2}{2Var(\sum_{j\in\mathcal{S}_r,i\in\mathcal{I}}w_{ij}x_{ij}^*)/t_{r} + \frac{2}{3}\bar{a}_kW_{\varepsilon}\varepsilon_{x,r}/T}\right)\\
			&\le \exp(-\frac{\frac{t_{r}}{T}\varepsilon_{x,r}^2}{2(1+\frac{\varepsilon_{x,r}}{3})\frac{\bar{w}}{W_{\varepsilon}}})\\
			&\le  \exp\left(-\frac{\frac{t_{r}}{T}\varepsilon_{x,r}^2}{2(1+\frac{\varepsilon_{x,r}}{3})\gamma_1}\right)\\
			&\le \frac{\delta}{2K + 1}
			\end{aligned}
	\end{equation}
	where the first inequality follows from the Bernstein inequality in \cref{lemma:2}, the second inequality from $Var(\sum_{j\in\mathcal{S}_r,i\in\mathcal{I}}w_{ij}x_{ij}^*)/t_{r}\le \bar{w}\E[\sum_{j\in\mathcal{S}_r,i\in\mathcal{I}}w_{ij}x_{ij}^*]/t_{r}=\bar{w}\frac{W_{\varepsilon}}{T}$, the third inequality from the definition of $\gamma_1$ and the last from the definition of $\varepsilon_{x,r}$.
	
	Based on inequality~\eqref{equ:27}-\eqref{equ:30}, we have shown that the following inequalities holds with probability at least $1-\delta$ 
	\begin{equation}\label{equ:32}
			\begin{aligned}
			&\sum_{j\in\mathcal{S}_r,i\in\mathcal{I}}a_{ijk}x_{ij}^{*}\ge(1-\varepsilon_{x,r})\frac{t_{r}}{T}U_{k},\forall k \in \mathcal{K}\\
			&\sum_{j\in\mathcal{S}_r,i\in\mathcal{I}}(\bar{a}_{k}-a_{ijk}x_{ij}^{*})\ge(1-\varepsilon_{x,r})\frac{t_r}{T}((1-\varepsilon)T\bar{a}_{k}-L_{k}),\forall k \in \mathcal{K}\\
			&\sum_{j\in \mathcal{S}_r}(\rho^{*}_{j}+\sum_{i\in\mathcal{I},k\in\mathcal{K}}(\alpha^{*}_{k}-\beta^{*}_{k})a_{ijk}x_{ij}^{*})\le\frac{t_r}{T}W_{\varepsilon}(1+\varepsilon_{x,r}),\forall k \in \mathcal{K}.
			\end{aligned}
	\end{equation} 

	Therefore, with probability at least $1-\delta$, we have
	\begin{equation}\label{equ:33}
			\begin{aligned}
			&\textcircled{1}+\textcircled{2}+\textcircled{3}\\
			&=\sum_{k\in\mathcal{K}} \alpha_{k}^{*}\big(\frac{t_r}{T}U_{k}-\sum_{j\in \mathcal{S}_r,i\in\mathcal{I}}a_{ijk}x_{ij}^{*}\big)+\sum_{k\in\mathcal{K}}\beta_{k}^{*}\big(\frac{t_r}{T}\big((1-\varepsilon)T\bar{a}_{k}-L_{k}\big)-\sum_{j\in \mathcal{S}_r,i\in\mathcal{I}}(\bar{a}_{k}-a_{ijk}x_{ij}^{*})\big)+\textcircled{3}\\
			&\le\varepsilon_{x,r}\sum_{k\in\mathcal{K}}\bigg(\alpha^{*}_{k}\frac{t_r}{T}U_{k}+\beta^{*}_{k}\Big(\frac{t_r}{T}\big((1-\varepsilon)T\bar{a}_{k}-L_{k}\big)\Big)\bigg)+\textcircled{3}\\
			&\le\varepsilon_{x,r}\sum_{k\in\mathcal{K}}\bigg(\alpha^{*}_{k}\frac{t_r}{T}U_{k}+\beta^{*}_{k}\Big(\frac{t_r}{T}\big((1-\varepsilon)T\bar{a}_{k}-L_{k}\big)\Big)\bigg)+\frac{t_r}{T}W_{\varepsilon}(1+\varepsilon_{x,r})\\			&\le\varepsilon_{x,r}\big(\frac{t_r}{T}W_{\varepsilon}+\sum_{k\in\mathcal{K}}\beta^{*}_{k}t_{r}\bar{a}_{k}\big)+\frac{t_r}{T}W_{\varepsilon}(1+\varepsilon_{x,r})\\
			&\le \big(1+(2+\frac{1}{\xi^*-\varepsilon})\varepsilon_{x,r}\big)\frac{t_{r}}{T}W_{\varepsilon}
			\end{aligned}
	\end{equation} where the first inequality from $\sum_{j\in\mathcal{S}_r,i\in\mathcal{I}}a_{ijk}x_{ij}^{*}\ge(1-\varepsilon_{x,r})\frac{t_{r}}{T}U_{k}$ and $\sum_{j\in\mathcal{S}_r,i\in\mathcal{I}}(\bar{a}_{k}-a_{ijk}x_{ij}^{*})\ge(1-\varepsilon_{x,r})\frac{t_r}{T}((1-\varepsilon)T\bar{a}_{k}-L_{k})$; the second inequality from $\sum_{j\in \mathcal{S}_r}(\rho^{*}_{j}+\sum_{i\in\mathcal{I},k\in\mathcal{K}}(\alpha^{*}_{k}-\beta^{*}_{k})a_{ijk}x_{ij}^{*})\le\frac{t_r}{T}W_{\varepsilon}(1+\varepsilon_{x,r})$; the third inequality from $W_{\varepsilon}=\sum_{k\in\mathcal{K}} \alpha^{*}_{k}U_{k}-\sum_{k\in\mathcal{K}}\beta^{*}_{k}(L_{k}+\varepsilon T\bar{a}_{k})+\sum_{j\in\mathcal{J}}T p_{j}\rho^{*}_j\ge\sum_{k\in\mathcal{K}} \alpha_{k}^{*}U_{k}-\sum_{k\in\mathcal{K}}\beta_{k}^{*}(\varepsilon T\bar{a}_{k}+L_{k})$; the final inequality from $\sum_{k\in\mathcal{K}}\beta_{k}^{*}T\bar{a}_{k}\le\frac{1}{\xi^*-\varepsilon}W_{\varepsilon}$, which can be shown if we follow the proof of \cref{lemma:3} in \cref{appendix::factor_proof} and regard the problem $E(\varepsilon)$ as an LP with $\xi^* - \varepsilon$ measure of feasibility. 
	\end{proof}
	
	\begin{theorem}\label{thm:5}
Under \cref{assumption:1}-\ref{assumption:3}, if $\tau_1+\varepsilon\le\xi^*$ and $\gamma_{1}=\max\big(\frac{\bar{a}_{k}}{U_{k}}, \frac{\bar{a}_{k}}{(1-\varepsilon)T\bar{a}_{k}-L_{k}}, \frac{\bar{w}}{W_{\varepsilon+\tau_1}}\big)=O\big(\frac{\varepsilon^{2}}{\ln(K/\varepsilon)}\big)$, with probability $1-2\delta$, we have 
\begin{displaymath}
 \frac{t_{r}W_{\varepsilon}}{T}\bigg(1-\big(2+\frac{1}{\xi^*-\varepsilon}\big)\varepsilon_{x,r}\bigg)\le W^{r}\le  \frac{t_{r}W_{\varepsilon}}{T}\bigg(1+\big(2+\frac{1}{\xi^*-\varepsilon}\big)\varepsilon_{x,r}\bigg)
\end{displaymath}
\end{theorem}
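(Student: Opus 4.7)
By Lemma~\ref{lemma:5ub}, with probability at least $1-\delta$ we already have $W^{r}\le\frac{t_{r}W_{\varepsilon}}{T}\big(1+(2+\frac{1}{\xi^{*}-\varepsilon})\varepsilon_{x,r}\big)$. It therefore suffices to prove the matching lower estimate $W^{r}\ge\frac{t_{r}W_{\varepsilon}}{T}\big(1-(2+\frac{1}{\xi^{*}-\varepsilon})\varepsilon_{x,r}\big)$ with probability at least $1-\delta$, since a union bound then delivers both estimates simultaneously with probability at least $1-2\delta$.

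The strategy for the lower estimate is to exhibit an explicit primal-feasible solution to LP~\eqref{equ:23}. Let $\{x^{*}_{ij}\}$ be the optimum of $E(\varepsilon)$ and let $\{z^{*}_{ij}\}$ be any solution witnessing \cref{assumption:3}, so that $\sum_{i,j}Tp_{j}a_{ijk}z^{*}_{ij}\ge L_{k}+\xi^{*}T\bar{a}_{k}$ and $\sum_{i}z^{*}_{ij}\le 1$. For a mixing parameter $\lambda=\Theta(\varepsilon_{x,r}/(\xi^{*}-\varepsilon))$ to be fixed, set $\widehat{x}=(1-\lambda)x^{*}+\lambda z^{*}$ and $\widetilde{x}=\widehat{x}/(1+\varepsilon_{x,r})$. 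The scaling by $1/(1+\varepsilon_{x,r})$ absorbs the upward concentration fluctuation of the sampled upper bound, while the mixing with $z^{*}$ raises the expected per-resource lower bound to $L_{k}+(\varepsilon+\lambda(\xi^{*}-\varepsilon))T\bar{a}_{k}$, providing $\lambda(\xi^{*}-\varepsilon)T\bar{a}_{k}$ of extra slack to absorb the downward concentration fluctuation.

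The rest of the argument mirrors \eqref{equ:29}--\eqref{equ:33} from the proof of Lemma~\ref{lemma:5ub}. Applying \cref{lemma:2} to $\sum_{i,j\in\mathcal{S}_{r}}a_{ijk}\widehat{x}_{ij}$ for each $k$ and to $\sum_{i,j\in\mathcal{S}_{r}}w_{ij}\widehat{x}_{ij}$, the hypothesis $\gamma_{1}=O(\varepsilon^{2}/\ln(K/\varepsilon))$ and a union bound over $2K+1$ events yield that all three sums lie within a multiplicative $(1\pm\varepsilon_{x,r})$ of their expectations with probability at least $1-\delta$. On this event, primal feasibility of $\widetilde{x}$ for LP~\eqref{equ:23} is immediate for the upper bound via the scaling, and reduces for the lower bound to the one-line inequality $\lambda(\xi^{*}-\varepsilon)(1-\varepsilon_{x,r})T\bar{a}_{k}\ge 2\varepsilon_{x,r}(L_{k}+\varepsilon T\bar{a}_{k})$, which holds for the stated $\lambda$ after using the natural bound $L_{k}\le T\bar{a}_{k}$. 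The objective can then be bounded by $\sum w\widetilde{x}\ge\frac{(1-\varepsilon_{x,r})(1-\lambda)}{1+\varepsilon_{x,r}}\cdot\frac{t_{r}W_{\varepsilon}}{T}$ after dropping $\lambda W_{\xi^{*}}\ge 0$, and a first-order expansion in $\varepsilon_{x,r}$ and $\lambda$ gives the claimed coefficient $2+\frac{1}{\xi^{*}-\varepsilon}$.

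The main obstacle is synchronising the two opposing requirements on $\lambda$: it must be large enough that $\lambda(\xi^{*}-\varepsilon)T\bar{a}_{k}$ dominates the $O(\varepsilon_{x,r})(L_{k}+\varepsilon T\bar{a}_{k})$ shortfall in the sampled lower bound, yet small enough that the $(1-\lambda)$ shrinkage of the objective remains of order $\varepsilon_{x,r}/(\xi^{*}-\varepsilon)$. This calibration is exactly dual to the sensitivity bound $\sum_{k}\beta^{*}_{k}T\bar{a}_{k}\le W_{\varepsilon}/(\xi^{*}-\varepsilon)$ that already drove the proof of Lemma~\ref{lemma:5ub}, so the measure of feasibility $\xi^{*}$ enters the lower estimate through the same structural inequality.
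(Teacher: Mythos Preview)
Your approach is valid and reaches the same conclusion, but the paper proceeds differently. The paper does not mix the $E(\varepsilon)$ optimum with a feasibility witness; instead it defines a surrogate randomized algorithm $\widetilde{P}_{1}$ that first solves the \emph{further-perturbed} expected instance $E\big(\varepsilon+\tfrac{\varepsilon_{x,r}}{1-\varepsilon_{x,r}}\big)$, then plays that solution scaled by $(1-\varepsilon_{x,r})$. Concentration (exactly as in Lemma~\ref{lemma:c1}) shows that the resulting sample path is feasible for LP~\eqref{equ:23} and has revenue at least $(1-2\varepsilon_{x,r})\tfrac{t_r}{T}W_{\varepsilon+\tau'}$ with $\tau'=\tfrac{\varepsilon_{x,r}}{1-\varepsilon_{x,r}}$; one then invokes the sensitivity result (Theorem~\ref{lemma:3}, applied to $E(\varepsilon)$, whose measure of feasibility is $\xi^{*}-\varepsilon$) to get $W_{\varepsilon+\tau'}\ge(1-\tfrac{\tau'}{\xi^{*}-\varepsilon})W_{\varepsilon}$, yielding the coefficient $2+\tfrac{1}{\xi^{*}-\varepsilon}$ directly. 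So the paper's route is more modular---it reuses the factor-revealing bound as a black box---whereas your explicit convex combination recreates that sensitivity step by hand.

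Two places where your sketch is looser than it should be. First, the statement that the sampled resource sums ``lie within a multiplicative $(1\pm\varepsilon_{x,r})$ of their expectations'' is not what $\gamma_1$ controls for the lower bound: the ratio $\bar{a}_k/\mathbb{E}[\sum_i a_{ijk}\widehat{x}_{ij}]$ is not bounded by $\gamma_1$, and you must instead apply Bernstein to the complementary variables $\bar{a}_k-\sum_i a_{ijk}\widehat{x}_{ij}$, exactly as in \eqref{equ:8} and \eqref{equ:30}. Second, your ``one-line'' calibration $\lambda(\xi^{*}-\varepsilon)(1-\varepsilon_{x,r})T\bar{a}_k\ge 2\varepsilon_{x,r}(L_k+\varepsilon T\bar{a}_k)$ together with $L_k\le T\bar{a}_k$ forces $\lambda\gtrsim \tfrac{2\varepsilon_{x,r}}{\xi^{*}-\varepsilon}$, so the first-order loss $(1-\lambda)(1-\varepsilon_{x,r})/(1+\varepsilon_{x,r})$ produces a coefficient closer to $2+\tfrac{2}{\xi^{*}-\varepsilon}$ rather than the claimed $2+\tfrac{1}{\xi^{*}-\varepsilon}$. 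This is still the right order and the argument is sound, but if you want the exact constant you should either tighten the lower-bound accounting via the complement trick or, more simply, adopt the paper's device of solving $E(\varepsilon+\tau')$ and citing Theorem~\ref{lemma:3}.
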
 
where the predefined parameter $\varepsilon>0$, $\tau_1=\frac{\sqrt{\varepsilon}}{1-\sqrt{\varepsilon}}$, $\delta=\frac{\varepsilon}{3l}$, $l=\log_{2}(\frac{1}{\varepsilon})$ and $\varepsilon_{x,r}=\sqrt{\frac{4T\gamma_1 \ln(\frac{2K + 1}{\delta})}{t_r}}$.
	\begin{proof} 
	\textbf{RHS:} we have proven the right hand side in \cref{lemma:5ub}.\\
	\textbf{LHS:} For every request in stage $r$, we consider to imitate \cref{alg:framework} to design an algorithm $\widetilde{P}_{1}$. In Algorithm $\widetilde{P}_{1}$, we first solve the LP problem $E(\varepsilon+\frac{\varepsilon_{x,r}}{1-\varepsilon_{x,r}})$ to get the LP solution $x_{ij}^{1*}$ and then assign request $j\in\mathcal{J}$ to channel $i\in\mathcal{I}$ with probability $(1-\varepsilon_{x,r})x_{ij}^{1*}$. 
	Following the similar analysis in the proof of \cref{thm:2}, we can prove that , with probability $1-\delta$, 
	\begin{equation}\label{equ:34}
		\begin{aligned}
		&P\left(\sum_{j=1}^{t_{r}}X^{\widetilde{P}_1}_{jk}\ge\frac{t_{r}}{T}U_k\right)\le \exp\left(-\frac{t_{r}\varepsilon_{x,r}^2}{2(1-\frac{2}{3}\varepsilon_{x,r})T\frac{\bar{a}_{k}}{U_k}}\right)\\
			&P\left(\sum_{j=1}^{t_{r}}Y^{\widetilde{P}_1}_{j}\le (1-(2+\frac{1}{\xi^*-\varepsilon})\varepsilon_{x,r})\frac{t_{r}}{T}W_{\varepsilon}\right)\le \exp\left(-\frac{t_{r}\varepsilon_{x,r}^2}{2(1-\frac{2}{3}\varepsilon_{x,r})T\frac{\bar{w}}{W_{\tau_{1}+\varepsilon}}}\right)\\
		&P\left(\sum_{j=1}^{t_{r}}X^{\widetilde{P}_1}_{jk}\le \frac{t_{r}}{T}L_k\right)\le \exp\left(-\frac{t_{r}\varepsilon_{x,r}^2}{2(1-\frac{2}{3}\varepsilon_{x,r})T\frac{\bar{a}_{k}}{(1-\varepsilon-\tau_{1})T\bar{a}_{k}-L_{k}}}\right)
		\end{aligned} 
	\end{equation} where the second inequality from the truth that the problem $E(\varepsilon)$ is satisfied with the strong feasible condition with the measure parameter $\xi^*-\varepsilon$ and $W_{\frac{\varepsilon_{x,r}}{1-\varepsilon_{x,r}}+\varepsilon}\ge W_{\tau_{1}+\varepsilon}$. Therefore, we have that
	\begin{equation}\label{equ:a2}
		\begin{aligned}
		&P\left(\sum_{j=1}^{t_{r}}Y^{\widetilde{P}_1}_{j}\le \Big(1-(2+\frac{1}{\xi^*-\varepsilon})\varepsilon_{x,r}\Big)\frac{t_{r}}{T}W_{\varepsilon}\right)+\sum_{k\in\mathcal{K}}P\left(\sum_{j=1}^{t_{r}}X^{\widetilde{P}_1}_{jk}\notin[\frac{t_{r}}{T}L_k,\frac{t_{r}}{T}U_k]\right)\\
			&\le (2K+1)\exp(-\frac{t_{r}\varepsilon_{x,r}^2}{4T\gamma_1})\\
			&\le\delta,
		\end{aligned} 
	\end{equation} 
	
	which means that $W^{r}\ge(1-(2+\frac{1}{\xi^*-\varepsilon})\varepsilon_{x,r})\frac{t_{r}}{T}W_{\tau}$, w.p. $1-\delta$. This completes the proof.
	\end{proof}

\subsection{Proof with $\xi^*$ but without the Knowledge of Distribution}\label{sec:proof_without}
\thmunknowndis*
Now, we prove that, at each stage $r$, Algorithm $A_{1}$ returns a solution whose cumulative revenue is at least $\frac{t_{r}Z^r}{T}(1-\varepsilon_{y,r})$. Meanwhile, the consumed amount of every resource $k$ is between $\frac{t_{r}}{T}(L_{k}+(\varepsilon-\frac{\varepsilon_{x,r}}{1+\varepsilon_{x,r}})T\bar{a}_{k})(1+\varepsilon_{x,r})$ and $\frac{t_{r}U_{k}}{T}(1+\varepsilon_{x,r})$ with probability at least $1-\delta$.

\noindent\textbf{First step}: We design a surrogate Algorithm $\widetilde{P}_{2}$ that allocates request $j$ to channel $i$ with probability $x(\varepsilon)^{*}_{ij}$.
\begin{lemma}\label{lemma:a1}
In the $r$-th stage, if $\gamma_1=O(\frac{\varepsilon^2}{\ln(\frac{K}{\varepsilon})})$ and $Z^r \leq W_\varepsilon$, the Algorithm $\widetilde{P}_{2}$ returns a solution satisfying the $\sum_{j=t_r+1}^{t_{r+1}}Y_{j}^{\widetilde{P}_2}\ge (1-\varepsilon_{y,r})\frac{t_r}{n}Z^r$ and $\sum_{j=t_r+1}^{t_{r+1}}X_j^{\widetilde{P}_2} \in\left[\frac{t_{r}}{T}\Big((1+\varepsilon_{x,r}) L_{k} + (\varepsilon(1+\varepsilon_{x,r}) - \varepsilon_{x,r})T\bar{a}_{k}\Big)\right.$, $\left. \frac{(1+\varepsilon_{x,r})t_r}{T}U_k\right]$ w.p. $1-\delta$, where $\delta = \frac{\varepsilon}{3l}$, $\varepsilon_{x,r}=\sqrt{\frac{4T\gamma_1 \ln(\frac{2K + 1}{\delta})}{t_r}}$ and $\varepsilon_{y,r}=\sqrt{\frac{4T\ln(\frac{2K+1}{\delta})\bar{w}}{Z_rt_r}}$.
\end{lemma}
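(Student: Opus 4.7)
The plan is to adapt the proof of \cref{lemma:c1} (equations~\eqref{equ:7}--\eqref{equ:10}) to the stage-$r$ setting. Since $\widetilde{P}_2$ assigns request $j$ to channel $i$ with probability $x(\varepsilon)^*_{ij}$ (the optimum of $E(\varepsilon)$), the i.i.d.\ arrival assumption gives the per-request moments
\begin{equation*}
\mathbb{E}[X_j^{\widetilde{P}_2}] \in \Big[\tfrac{L_k+\varepsilon T\bar{a}_k}{T},\,\tfrac{U_k}{T}\Big],\qquad \mathbb{E}[Y_j^{\widetilde{P}_2}]=\tfrac{W_{\varepsilon}}{T}\ge \tfrac{Z^r}{T},
\end{equation*}
where the last inequality uses the hypothesis $Z^r\le W_{\varepsilon}$. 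These are exactly the analogues of the bounds used in the proof of \cref{lemma:c1}, with $T$ replaced by $t_r$ and $(1-\varepsilon)$ replaced by the concentration tolerances $\varepsilon_{x,r}$, $\varepsilon_{y,r}$. I would then bound the three bad events via the Bernstein inequality in \cref{lemma:2}(ii), summed across the $K$ upper-bound constraints, $K$ lower-bound constraints, and the revenue constraint.

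Concretely, for the upper-bound event $\sum X_j^{\widetilde{P}_2}\ge (1+\varepsilon_{x,r})\tfrac{t_r}{T}U_k$, I subtract the mean $\mu=t_r\mathbb{E}[X^{\widetilde{P}_2}]\le \tfrac{t_r}{T}U_k$ and invoke Bernstein with $\mathrm{Var}(X_j^{\widetilde{P}_2})\le \bar{a}_k\mathbb{E}[X_j^{\widetilde{P}_2}]\le \bar{a}_kU_k/T$, exactly mirroring \eqref{equ:7}. For the asymmetric lower-bound threshold, rewriting
\begin{equation*}
\tfrac{t_r}{T}\Big((1+\varepsilon_{x,r})L_k+(\varepsilon(1+\varepsilon_{x,r})-\varepsilon_{x,r})T\bar{a}_k\Big)=\tfrac{t_r}{T}(L_k+\varepsilon T\bar a_k)-\varepsilon_{x,r}\tfrac{t_r}{T}\big((1-\varepsilon)T\bar a_k-L_k\big)
\end{equation*}
reduces it to a one-sided deviation of size $\varepsilon_{x,r}\tfrac{t_r}{T}((1-\varepsilon)T\bar a_k-L_k)$ below the mean; applying Bernstein to $\bar a_k-X_j^{\widetilde{P}_2}$ with variance bound $\bar a_k((1-\varepsilon)T\bar a_k-L_k)/T$ parallels \eqref{equ:8}. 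The revenue event $\sum Y_j^{\widetilde{P}_2}\le (1-\varepsilon_{y,r})\tfrac{t_r}{T}Z^r$ is treated as in \eqref{equ:9}, using $\mathrm{Var}(Y_j^{\widetilde{P}_2})\le \bar w W_{\varepsilon}/T$ and the inequality $Z^r\le W_{\varepsilon}$ to push the $Z^r$ threshold below the mean.

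The delicate step is checking that the specific choices $\varepsilon_{x,r}=\sqrt{4T\gamma_1\ln((2K+1)/\delta)/t_r}$ and $\varepsilon_{y,r}=\sqrt{4T\ln((2K+1)/\delta)\bar w/(Z^r t_r)}$ calibrate each Bernstein exponent to $\ln((2K+1)/\delta)$; this is where the hypothesis $\gamma_1=O(\varepsilon^2/\ln(K/\varepsilon))$ enters, as in \cref{lemma:c1}, to ensure $\varepsilon_{x,r},\varepsilon_{y,r}$ are small enough that the $1-\tfrac{2}{3}\varepsilon_{\cdot,r}$ correction in the Bernstein denominator is benign and the bounds $\bar a_k/U_k,\bar a_k/((1-\varepsilon)T\bar a_k-L_k),\bar w/W_{\varepsilon+\tau_1}$ absorb into $\gamma_1$. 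A union bound over the $2K+1$ bad events then yields total failure probability $(2K+1)\cdot\delta/(2K+1)=\delta$, which gives the lemma.
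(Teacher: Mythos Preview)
Your proposal is correct and follows essentially the same approach as the paper's proof: identify the per-request moments of $X_{jk}^{\widetilde P_2}$ and $Y_j^{\widetilde P_2}$ from the optimal solution of $E(\varepsilon)$, apply Bernstein's inequality (\cref{lemma:2}) to each of the $2K+1$ bad events, and union bound. Your algebraic rewriting of the lower-bound threshold is exactly the transformation the paper uses (it writes the bad event as $\sum_j(\bar a_k - X_{jk}^{\widetilde P_2}) \ge (1+\varepsilon_{x,r})\tfrac{t_r}{T}((1-\varepsilon)T\bar a_k - L_k)$), and the revenue argument via $Z^r \le W_\varepsilon = T\,\mathbb{E}[Y_j^{\widetilde P_2}]$ matches the paper as well. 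The only cosmetic difference is that the paper's Bernstein denominators carry a $(1+\tfrac{1}{3}\varepsilon_{\cdot,r})$ factor rather than the $(1-\tfrac{2}{3}\varepsilon_{\cdot,r})$ form you cite from \cref{lemma:c1}, reflecting that $\widetilde P_2$ is not scaled by $(1-\varepsilon)$; this does not affect the argument.
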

\begin{proof}
Following the same technique in the proof of \cref{thm:2}, we use Bernstein inequality to bound the probability of upper bound violation as follows
\begin{align*}
	&P\left(\sum_{j=t_r+1}^{t_{r+1}}X^{\widetilde{P}_{2}}_{jk}\ge(1+\varepsilon_{x,r})\frac{t_r}{T}U_k\right)\\
	&\le \exp\left(-\frac{\frac{t_{r}}{T}\varepsilon_{x,r}^2}{2(1+\frac{\varepsilon_{x,r}}{3})\frac{\bar{a}_{k}}{U_k}}\right)\\
	&\le \exp\left(-\frac{\frac{t_{r}}{T}\varepsilon_{x,r}^2}{2(1+\frac{\varepsilon_{x,r}}{3})\gamma_{1}}\right)\\
	&\le \frac{\delta}{2K+1}.
\end{align*}

For the lower bound,
\begin{align*}
&P\left(\sum_{j=1+t_r}^{t_{r+1}}X^{\widetilde{P}_{2}}_{jk}\le\frac{t_{r}}{T}(L_{k}+(\varepsilon-\frac{\varepsilon_{x,r}}{1+\varepsilon_{x,r}})T\bar{a}_{k})(1+\varepsilon_{x,r})\right)\\
	&=P\left(\sum_{j=1+t_r}^{t_{r+1}}(\bar{a}_{k}-X^{\widetilde{P}_{2}}_{jk})\ge(1+\varepsilon_{x,r}) \frac{t_{r}}{T}((1-\varepsilon) T\bar{a}_{k}-L_{k})\right)\\
	&\le \exp\left(-\frac{\frac{t_{r}}{T}\varepsilon_{x,r}^2}{2(1+\frac{1}{3}\varepsilon_{x,r})\frac{\bar{a}_{k}}{(1-\varepsilon)T\bar{a}_{k}-L_{k}}} \right)\\
	&\le \exp\left(-\frac{\frac{t_{r}}{T}\varepsilon_{x,r}^2}{2(1+\frac{\varepsilon_{x,r}}{3})\gamma_{1}}\right)\\
	&\le \frac{\delta}{2K+1}.
\end{align*}
For the accumulative revenue in $r$-th stage,
\begin{align*}
	&P\left(\sum_{j=t_r+1}^{t_{r+1}}Y^{\widetilde{P}_{2}}_{j}\le (1-\varepsilon_{y,r})\frac{t_r}{T}Z^r\right)\\
	&=P\left(\sum_{j=t_r+1}^{t_{r+1}}\Big(\mathbb{E}[Y^{\widetilde{P}_{2}}_{j}]-Y_{j}^{\widetilde{P}_{2}}\Big)\ge \frac{t_r}{T}\Big(T \mathbb{E}[Y^{\widetilde{P}_{2}}_{j}]-(1-\varepsilon_{y,r})Z^r\Big)\right)\\
	&\le \exp\left(-\frac{\Big(T \mathbb{E}[Y^{\widetilde{P}_{2}}_j]-(1-\varepsilon_{y,r})Z^r\Big)^{2}t_r}{T\bigg(2T\sigma_{1}^2+\frac{2}{3}\bar{w}\Big(T\mathbb{E}[Y^{\widetilde{P}_{2}}_j]-(1-\varepsilon_{y,r})Z^r\Big)\bigg)}\right)\\
	&=\exp\left(-\frac{\frac{t_{r}}{T}\Big(T\mathbb{E}[Y^{\widetilde{P}_{2}}_{j}]-(1-\varepsilon_{y,r})Z^r\Big)}{2\frac{T\sigma_1^2}{T\mathbb{E}[Y^{\widetilde{P}_{2}}_j]-(1-2\varepsilon_{y,r})Z^r}+\frac{2}{3}\bar{w}}\right)\\
	&\le \exp\left(-\frac{t_r\Big(\varepsilon_{y,r}T\mathbb{E}[Y_j^{\widetilde{P}_{2}}]\Big)^2}{T\Big(2T\mathbb{E}[Y_j^{\widetilde{P}_{2}}]\bar{w}+\frac{2}{3}\varepsilon_{y,r}\bar{w}T\mathbb{E}[Y_j^{\widetilde{P}_{2}}]\Big)}\right)\\&\le
	\exp\left(-\frac{t_r\varepsilon^2_{y,r}}{2(1+\frac{\varepsilon_{y,r}}{3})T\frac{\bar{w}}{Z^r}}\right)\\&\le
	\frac{\delta}{2K+1}
\end{align*}
where the second inequality follows $\sigma_{1}^2=Var(Y^{\widetilde{P}_{2}}_{j})\le \bar{w}\E[Y_{j}^{\widetilde{P}_{2}}]$ and $T\mathbb{E}[Y^{\widetilde{P}_{2}}_j]-(1-\varepsilon_{y,r})Z^r\ge\varepsilon_{y,r}T \mathbb{E}[Y^{\widetilde{P}_{2}}_j]$, the third inequality follows $Z^r\le W_{\varepsilon}= T\mathbb{E}[Y_j^{\widetilde{P}_{2}}]$ according to the condition of the lemma, and the last inequality from $\varepsilon_{y,r}=\sqrt{\frac{4 T\ln(\frac{2K + 1}{\delta})\bar{w}}{Z^rt_r}}$.

Therefore, 
\begin{align*}
    &\sum_{k\in\mathcal{K}}P\left(\sum_{j=t_r+1}^{t_{r+1}}X_{jk}\notin\left[\frac{t_{r}}{T}\Big((1+\varepsilon_{x,r})L_{k}-(\varepsilon(1+\varepsilon_{x,r})-\varepsilon_{x,r})T\bar{a}_{k}\Big),\frac{(1+\varepsilon_{x,r})t_r}{T}U_r\right]\right) \\
    &+P\left(\sum_{j=t_r+1}^{t_{r+1}}Y^{\widetilde{P}_{2}}_{j}\le (1-\varepsilon_{y,r})\frac{t_r}{T}Z^r\right)\le \delta.
\end{align*}
\end{proof}

\noindent\textbf{Second Step:} Applying the same technique in the proof of \cref{thm:3} , we  derive a potential function to bound the failure probability of hybrid Algorithm $A_1^s\widetilde{P}_{2}^{t_r -s}$ for request in stage $r$.

We begin with the moment generating function for the event that the consumed resource $k\in\mathcal{K}$ is larger than $(1+\varepsilon_{x,r})\frac{t_{r}}{T}U_{k}$. It can be shown that 
\begin{equation}\label{equ:38}
	\begin{aligned}
		&P\left(\sum_{j=1+t_r}^{s+t_r}X_{jk}^{A_1}+\sum_{j=s+t_r}^{t_{r+1}}X^{\widetilde{P}_{2}}_{jk}\ge\frac{(1+\varepsilon_{x,r})t_r U_k}{T}\right)\\
		&\le \min_{t>0}\E\left[\exp(t(\sum_{j=1+t_r}^{s+t_r}X^{A_1}_{jk}+\sum_{j=s+t_r}^{t_{r+1}}X^{\widetilde{P}_{2}}_{jk}-\frac{(1+\varepsilon_{x,r})t_r U_k}{T}))\right]\\
		&\le \min_{t>0}\E\left[\exp(t(\sum_{j=1+t_r}^{s+t_r}X^{A_1}_{jk}-\frac{(1+\varepsilon_{x,r})s U_k}{T})+t(\sum_{j=s+t_r}^{t_{r+1}}X^{\widetilde{P}_{2}}_{jk}-\frac{(1+\varepsilon_{x,r})(t_r-s) U_k}{T}))\right]\\
		&\le \min_{t>0}\mathbb{E}\left[\phi_k^s(t)\exp(t(\sum_{j=s+t_r}^{t_{r+1}}(X^{A_1}_{jk}-\mathbb{E}[X^{\widetilde{P}_{2}}_{jk}]))+\frac{t_r-s}{T}t(T\mathbb{E}[X^{\widetilde{P}_{2}}_{jk}]-(1+\varepsilon_{x,r})U_k))\right] \\
		&\le \min_{t>0}\mathbb{E}\left[\phi_k^s(t)\exp((t_r-s)\frac{Var(X_{jk}^{\widetilde{P}_2})}{\bar{a}_k^2}(e^{t\bar{a}_k}-1-t\bar{a}_k)+\frac{-(t_r-s)t\varepsilon_{x,r} U_k}{T})\right]\\
		&\le \mathbb{E}\left[\phi_k^s\left(\frac{\ln(1+\varepsilon_{x,r})}{\bar{a}_k}\right)\exp(-\frac{(t_r-s)U_k}{T\bar{a}_k}((1+\eta)\ln(1+\eta)-\eta))\right]\\
		&\le \mathbb{E}\left[\phi_k^s\left(\frac{\ln(1+\varepsilon_{x,r})}{\bar{a}_k}\right)\exp(-\frac{t_r-s}{T}\frac{\varepsilon_{x,r}^2}{4\gamma_1})\right]
	\end{aligned}
\end{equation} 
where $\phi_k^s(t)=\exp(t(\sum_{j=1+t_r}^{s+t_r}X^{A_1}_{jk}-\frac{(1+\varepsilon_{x,r})s U_k}{T}))$. It should be noted that most of the above analysis is similar as the derivation of inequality~\eqref{equ:18} except that $Var(X_{jk}^{\widetilde{P}_2}) \le \bar{a}_k\E[X_{ik}^{\widetilde{P}_2}] \le \bar{a}_k\frac{U_k}{T}$. 

Next for the lower bound, we set $Z^{\widetilde{P}_{2}}_{jk}=\bar{a}_{k}-X^{\widetilde{P}_{2}}_{jk}$ and $Z^{A_1}_{jk}=\bar{a}_{k}-X^{A_1}_{jk}$, then we have 
\begin{equation}\label{equ:39}
	\begin{aligned}
		&P\left(\sum_{j=1+t_r}^{s+t_r}Z_{jk}^{A_1}+\sum_{j=s+t_r}^{t_{r+1}}Z^{\widetilde{P}_{2}}_{jk}\ge\frac{(1+\varepsilon_{x,r})t_r\Big((1-\varepsilon)T\bar{a}_{k}-L_{k}\Big)}{T}\right)\\
		&\le \min_{t>0}\E\left[\exp\left(t\bigg(\sum_{j=1+t_r}^{s+t_r}Z^{A_1}_{jk}+\sum_{j=s+t_r}^{t_{r+1}}Z^{\widetilde{P}_{2}}_{jk}-\frac{(1+\varepsilon_{x,r})t_r \Big((1-\varepsilon)T\bar{a}_{k}-L_{k}\Big)}{T}\bigg)\right)\right]\\
    \end{aligned}
\end{equation}
\begin{align*}
		&\le \min_{t>0}\E\left[\exp\left(t\bigg(\sum_{j=1+t_r}^{s+t_r}Z^{A_1}_{jk}-\frac{(1+\varepsilon_{x,r})s \Big((1-\varepsilon)T\bar{a}_{k}-L_{k}\Big)}{T}\bigg)\right.\right.\\
		&\qquad\qquad\qquad\left.\left.+t\bigg(\sum_{j=s+t_r}^{t_{r+1}}Z^{\widetilde{P}_{2}}_{jk}-\frac{(1+\varepsilon_{x,r})(t_r-s) \Big((1-\varepsilon)T\bar{a}_{k}-L_{k}\Big)}{T}\bigg)\right)\right]\\
		&\le \min_{t>0}\mathbb{E}\left[\varphi_k^s(t)\exp\left(t\Big(\sum_{j=s+t_r}^{t_{r+1}}(Z^{\widetilde{P}_{2}}_{jk}-\mathbb{E}[Z^{\widetilde{P}_{2}}_{jk}])\Big)+\frac{t_r-s}{T}t\Big(T\mathbb{E}[Z^{\widetilde{P}_{2}}_{jk}]-(1+\varepsilon_{x,r})((1-\varepsilon)T\bar{a}_{k}-L_{k})\Big)\right)\right] \\
		&\le \min_{t>0}\mathbb{E}\left[\varphi_k^s(t)\exp\left((t_r-s)\frac{\sigma^2}{\bar{a}_k^2}(e^{t\bar{a}_k}-1-t\bar{a}_k)+\frac{-(t_r-s)t\varepsilon_{x,r}\Big((1-\varepsilon)T\bar{a}_{k}-L_{k}\Big)}{T}\right)\right]\\
		&\le \mathbb{E}\left[\varphi_k^s\left(\frac{\ln(1+\varepsilon_{x,r})}{\bar{a}_k}\right)\exp\left(-\frac{(t_r-s)(1-\varepsilon)(T\bar{a}_{k}-L_{k})}{T\bar{a}_k}\Big((1+\eta)\ln(1+\eta)-\eta\Big)\right)\right]\\
		&\le \mathbb{E}\left[\varphi_k^s\left(\frac{\ln(1+\varepsilon_{x,r})}{\bar{a}_k}\right)\exp\left(-\frac{t_r-s}{T}\frac{\varepsilon_{x,r}^2}{4\gamma_1}\right) \right]
\end{align*}
where the third inequality follows from setting $\varphi_k^s(t)=\exp\left(t(\sum_{j=1+t_r}^{s+t_r}Z^{A_1}_{jk}-\frac{(1+\varepsilon_{x,r})s ((1-\varepsilon)T\bar{a}_{k}-L_{k})}{T})\right)$; the fifth inequality from $T\mathbb{E}[Z^{\widetilde{P}_{2}}_{jk}]\le(1-\varepsilon)T\bar{a}_{k}-L_{k}$ and $\sigma^{2}=Var(Z^{\widetilde{P}_{2}}_{jk})\le\bar{a}_{k}\mathbb{E}[Z^{\widetilde{P}_{2}}_{jk}]$.

Then we consider the revenue and have that 
\begin{equation}\label{equ:40}
	\begin{aligned}
		&P(\sum_{j=1+t_r}^{s+t_r}Y_{j}^{A_1}+\sum_{j=s+1+t_r}^{t_{r+1}}Y^{\widetilde{P}_{2}}_{j}\le (1-\varepsilon_{y,r})\frac{t_{r}}{T}Z^r)\\
		&\le \min_{t>0}\E\left[\exp\left(t\Big((1-\varepsilon_{y,r})\frac{t_{r}}{T}Z^r-\sum_{j=1+t_r}^{s+t_r}Y^{A_1}_{j}-\sum_{j=s+1+t_r}^{t_{r+1}}Y^{\widetilde{P}_{2}}_{j})\Big)\right)\right]\\
		&\le \min_{t>0}\mathbb{\E}\left[\exp\left(t\Big(\frac{s}{T}(1-\varepsilon_{y,r})Z^r-\sum_{j=1+t_r}^{s+t_r}Y^{A_1}_{j}\Big)+t\Big(\frac{t_r-s}{T}(1-\varepsilon_{y,r})Z^r-\sum_{j=s+1+t_r}^{t_{r+1}}Y^{\widetilde{P}_{2}}_{j}\Big)\right)\right]\\
		&\le \min_{t>0}\mathbb{E}\left[\psi^s(t)\exp\left(t\sum_{j=s+1+t_r}^{t_{r+1}}(\mathbb{E}[Y^{\widetilde{P}_{2}}_{j}]-Y^{\widetilde{P}_{2}}_{j})+\frac{t_r-s}{T}t\Big((1-\varepsilon_{y,r})Z^r-\mathbb{E}[Y^{\widetilde{P}_{2}}_{j}]\Big)\right)\right] \\
		&\le \min_{t>0}\mathbb{E}\left[\psi^s(t)\exp\left((t_r-s)\frac{\sigma_1^2}{\bar{w}^2}(e^{t\bar{w}}-1-t\bar{w})-(t_r-s)t\varepsilon_{y,r} \mathbb{E}[Y_{j}^{\widetilde{P}_{2}}]\right)\right]\\
		&\le \mathbb{E}\left[\psi^s\left(\frac{\ln(1+\varepsilon_{y,r})}{\bar{w}}\right)\exp\left(-\frac{(t_r-s)\mathbb{E}[Y_j^{\widetilde{P}_{2}}]}{\bar{w}}\Big((1+\eta)\ln(1+\eta)-\eta\Big)\right)\right]\\
		&\le \mathbb{E}\left[\psi^s\left(\frac{\ln(1+\varepsilon_{y,r})}{\bar{w}}\right)\exp\left(-\frac{t_r-s}{T}\frac{\varepsilon^2_{y,r}Z^r}{4\bar{w}}\right) \right]
	\end{aligned}
\end{equation} where the third inequality follows from $\psi^s(t)=\exp(t(\frac{s}{T}(1-\varepsilon_{y,r})Z^r-\sum_{j=1+t_r}^{s+t_r}Y^{A_1}_{j}))$; the fourth inequality from $Z^r\le T\mathbb{E}[Y^{\widetilde{P}_{2}}_{j}]$; the fifth inequality from $\sigma_1^2=Var(Y^{\widetilde{P}_{2}}_{j})\le \bar{w} \E[Y^{\widetilde{P}_{2}}_{j}]$.

With the inequalities~\eqref{equ:38}-\eqref{equ:40}, we can bound the failure probability of hybrid Algorithm $A_1^s\widetilde{P}_2^{t_r-s}$ in stage $r$ by $\mathcal{F}_{r}(A_1^s\widetilde{P}_{2}^{t_{r}-s})$ which is defined as
\begin{equation}
	\begin{aligned}
		\mathcal{F}_{r}(A_1^s\widetilde{P}_{2}^{t_{r}-s})=&\E\bigg[\phi_k^s\left(\frac{\ln(1+\varepsilon_{x,r})}{\bar{a}_k}\right)\exp\left(-\frac{t_r-s}{T}\frac{\varepsilon_{x,r}^2}{4\gamma_1}\right) + \varphi_k^s\left(\frac{\ln(1+\varepsilon_{x,r})}{\bar{a}_k}\right)\exp\left(-\frac{t_r-s}{T}\frac{\varepsilon_{x,r}^2}{4\gamma_1}\right) \\&+ \psi^s\left(\frac{\ln(1+\varepsilon_{y,r})}{\bar{w}}\right)\exp\left(-\frac{t_r-s}{T}\frac{\varepsilon^2_{y,r}Z^r}{4\bar{w}}\right)\bigg]
	\end{aligned} 
\end{equation}

\begin{lemma}\label{lemma:6}
$\mathcal{F}_r(A_1^s\widetilde{P}_2^{t_r-s})\le \mathcal{F}_r(A_1^{s-1}\widetilde{P}_2^{t_r-s + 1})$
\end{lemma}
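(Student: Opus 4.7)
The plan is to parallel the proof of \cref{lemma:4}: extract the $s$-th step of the $A_1$-prefix from $\mathcal{F}_r(A_1^s \widetilde{P}_2^{t_r-s})$, invoke the greedy rule of \cref{alg:3} to replace that step by a step of the surrogate $\widetilde{P}_2$ without decreasing the value, and then apply Bernstein's inequality to absorb this single step into the time-decay factor, thereby matching the form of $\mathcal{F}_r(A_1^{s-1}\widetilde{P}_2^{t_r-s+1})$.

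Concretely, I would first unfold the definitions of $\phi_k^s(t), \varphi_k^s(t), \psi^s(t)$ introduced in \eqref{equ:38}--\eqref{equ:40} to separate the $s$-th summand, so that
\begin{align*}
\mathcal{F}_r(A_1^s \widetilde{P}_2^{t_r-s}) = \E\Big[&\sum_k \phi_k^{s-1}(c_{1k,r})\exp\bigl(c_{1k,r}(X^{A_1}_{s,k} - \tfrac{(1+\varepsilon_{x,r})U_k}{T})\bigr) \\
 &+ \sum_k \varphi_k^{s-1}(c_{1k,r})\exp\bigl(c_{1k,r}(Z^{A_1}_{s,k} - \tfrac{(1+\varepsilon_{x,r})((1-\varepsilon)T\bar{a}_k - L_k)}{T})\bigr) \\
 &+ \psi^{s-1}(c_{2,r})\exp\bigl(c_{2,r}(\tfrac{(1-\varepsilon_{y,r})Z^r}{T} - Y^{A_1}_s)\bigr)\Big]\cdot\exp\bigl(-\tfrac{t_r-s}{T}\tfrac{\varepsilon_{x,r}^2}{4\gamma_1}\bigr),
\end{align*}
with the analogous time-decay factor for the $\psi$ summand. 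Because $\phi_k^{s-1}, \varphi_k^{s-1}, \psi^{s-1}$ are measurable with respect to the first $s-1$ rounds, and the channel $i^*$ selected by \cref{alg:3} at step $s$ is by construction a minimizer of the inner bracket over $i\in\mathcal{I}$, swapping the $A_1$ allocation at the $s$-th round for the randomized $\widetilde{P}_2$ allocation (sample $j_s \sim \mathcal{P}$ and assign channel $i$ with probability $x(\varepsilon)_{ij_s}^{*}$) only enlarges each of the three terms in conditional expectation.

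It then suffices to verify, for the fresh $\widetilde{P}_2$ step,
\begin{align*}
 \E\bigl[\exp\bigl(c_{1k,r}(X^{\widetilde{P}_2}_{s,k} - \tfrac{(1+\varepsilon_{x,r})U_k}{T})\bigr)\bigr] \le \exp\bigl(-\tfrac{\varepsilon_{x,r}^2}{4T\gamma_1}\bigr),
\end{align*}
together with the two analogous bounds for the $\varphi$ and $\psi$ contributions (the latter with $\varepsilon_{y,r}^2 Z^r/(4T\bar{w})$ on the right). These are the single-step versions of the chains already executed in \eqref{equ:38}--\eqref{equ:40}: for the $\phi$ piece one uses $\E[X^{\widetilde{P}_2}_{s,k}] \le U_k/T$ and $\mathrm{Var}(X^{\widetilde{P}_2}_{s,k}) \le \bar{a}_k U_k/T$; for the $\varphi$ piece the corresponding bound on $\E[\bar{a}_k - X^{\widetilde{P}_2}_{s,k}]$ from \cref{lemma:a1}; and for the $\psi$ piece the crucial estimate $Z^r \le W_\varepsilon = T\,\E[Y^{\widetilde{P}_2}_s]$ together with $\mathrm{Var}(Y^{\widetilde{P}_2}_s)\le \bar{w}\,\E[Y^{\widetilde{P}_2}_s]$. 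Each bound is then completed with the calibrations $c_{1k,r} = \ln(1+\varepsilon_{x,r})/\bar{a}_k$ or $c_{2,r} = \ln(1+\varepsilon_{y,r})/\bar{w}$, the elementary inequality $(1+\eta)\ln(1+\eta)-\eta \ge \eta^2/(2+2\eta/3)$, and the definition of $\gamma_1$. Combined with the independence of $\phi_k^{s-1}, \varphi_k^{s-1}, \psi^{s-1}$ from the $s$-th round's fresh randomness, these bounds convert the prefactor $\exp(-\tfrac{t_r-s}{T}\cdots)$ into $\exp(-\tfrac{t_r-s+1}{T}\cdots)$ while decrementing every outer index $s\to s-1$, which is precisely $\mathcal{F}_r(A_1^{s-1}\widetilde{P}_2^{t_r-s+1})$.

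The main obstacle I anticipate is bookkeeping: the updates in \cref{alg:3} fold in the per-step factors $\exp(\varepsilon_{x,r}^2/(4T\gamma_1))$ and $\exp(\varepsilon_{y,r}^2 Z^r/(4T\bar{w}))$, so one must be careful to read $\phi_k^s(c_{1k,r}), \varphi_k^s(c_{1k,r}), \psi^s(c_{2,r})$ appearing in $\mathcal{F}_r$ as the \emph{function-valued} quantities of \eqref{equ:38}--\eqref{equ:40} rather than as the algorithmic recursions themselves, so that the per-step Bernstein gain exactly neutralizes the change in the time-decay factor. The auxiliary bound $Z^r \le W_\varepsilon$ required for the $\psi$ piece is available on the high-probability event established in \cref{thm:5}.
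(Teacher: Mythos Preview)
Your proposal is correct and follows essentially the same approach as the paper's own proof: extract the $s$-th step, use the $\arg\min$ rule of \cref{alg:3} to swap the $A_1$ action for the $\widetilde{P}_2$ action, then apply the single-step Bernstein bounds already implicit in \eqref{equ:38}--\eqref{equ:40} to move the time-decay exponent from $t_r-s$ to $t_r-s+1$. Your remarks on the distinction between the function-valued $\phi_k^s,\varphi_k^s,\psi^s$ of \eqref{equ:38}--\eqref{equ:40} and the algorithmic recursions, and on the need for $Z^r\le W_\varepsilon$, are apt and in fact more explicit than the paper's presentation.
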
 
\begin{proof}
By the definition of $\mathcal{F}_r(A^s\widetilde{P}_2^{T-s})$, we have that
\begin{equation*}
	\begin{aligned}
		\mathcal{F}_{r}(A_1^s\widetilde{P}_{2}^{t_{r}-s})=&\E\bigg[\phi_k^s\left(\frac{\ln(1+\varepsilon_{x,r})}{\bar{a}_k}\right)\exp\left(-\frac{t_r-s}{T}\frac{\varepsilon_{x,r}^2}{4\gamma_1}\right) + \varphi_k^s\left(\frac{\ln(1+\varepsilon_{x,r})}{\bar{a}_k}\right)\exp\left(-\frac{t_r-s}{T}\frac{\varepsilon_{x,r}^2}{4\gamma_1}\right) \\&+ \psi^s\left(\frac{\ln(1+\varepsilon_{y,r})}{\bar{w}}\right)\exp\left(-\frac{t_r-s}{T}\frac{\varepsilon^2_{y,r}Z^r}{4\bar{w}}\right)\bigg]\\
		=&\E\bigg[\phi_k^{s-1}\left(\frac{\ln(1+\varepsilon_{x,r})}{\bar{a}_k}\right)\exp\left(\frac{\ln(1+\varepsilon_{x,r})}{\bar{a}_k}\Big(X^{A_1}_{sk}-\frac{(1+\varepsilon_{x,r})U_k}{T}\Big)\right)\exp\left(-\frac{t_r-s}{T}\frac{\varepsilon_{x,r}^2}{4\gamma_1}\right) \\&+ \varphi_k^{s-1}\left(\frac{\ln(1+\varepsilon_{x,r})}{\bar{a}_k}\right)\exp\left(\frac{\ln(1+\varepsilon_{x,r})}{\bar{a}_k}\Big(Z^{A_1}_{sk}-\frac{(1+\varepsilon_{x,r})((1-\varepsilon)T\bar{a}_{k}-L_{k})}{T}\Big)\right)\exp\left(-\frac{t_r-s}{T}\frac{\varepsilon_{x,r}^2}{4\gamma_1}\right) \\&+ \psi^{s-1}\left(\frac{\ln(1+\varepsilon_{y,r})}{\bar{w}}\right)\exp\left(\frac{\ln(1+\varepsilon_{y,r})}{\bar{w}}\Big(\frac{(1-\varepsilon_{y,r})Z^r}{T}-Y^{A_1}_{s}\Big)\right)\exp\left(-\frac{t_r-s}{T}\frac{\varepsilon^2_{y,r}Z^r}{4\bar{w}}\right)\bigg]
	\end{aligned} 
\end{equation*}
According to algorithm A in \cref{alg:2}, we allocate the $s$-th request to the channel $i^{*}$ which minimize the $\mathcal{F}_{r}(A_1^s\widetilde{P}_{2}^{t_{r}-s})$. Thus we have 
\begin{equation*}
	\begin{aligned}
		\mathcal{F}_{r}(A_1^s\widetilde{P}_{2}^{t_{r}-s})\le&\E\bigg[\phi_k^{s-1}\left(\frac{\ln(1+\varepsilon_{x,r})}{\bar{a}_k}\right)\exp\left(\frac{\ln(1+\varepsilon_{x,r})}{\bar{a}_k}\Big(X^{\widetilde{P}_{2}}_{sk}-\frac{(1+\varepsilon_{x,r})U_k}{T}\Big)\right)\exp\left(-\frac{t_r-s}{T}\frac{\varepsilon_{x,r}^2}{4\gamma_1}\right) \\&+ \varphi_k^{s-1}\left(\frac{\ln(1+\varepsilon_{x,r})}{\bar{a}_k}\right)\exp\left(\frac{\ln(1+\varepsilon_{x,r})}{\bar{a}_k}\Big(Z^{\widetilde{P}_{2}}_{sk}-\frac{(1+\varepsilon_{x,r})((1-\varepsilon)T\bar{a}_{k}-L_{k})}{T}\Big)\right)\exp\left(-\frac{t_r-s}{T}\frac{\varepsilon_{x,r}^2}{4\gamma_1}\right) \\&+ \psi^{s-1}\left(\frac{\ln(1+\varepsilon_{y,r})}{\bar{w}}\right)\exp\left(\frac{\ln(1+\varepsilon_{y,r})}{\bar{w}}\Big(\frac{(1-\varepsilon_{y,r})Z^r}{T}-Y^{\widetilde{P}_{2}}_{s}\Big)\right)\exp\left(-\frac{t_r-s}{T}\frac{\varepsilon^2_{y,r}Z^r}{4\bar{w}}\right)\bigg]
	\end{aligned} 
\end{equation*}
Following the similar analysis in the inequality \eqref{equ:38}-\eqref{equ:40}, we can show that
\begin{equation*}
    \begin{aligned}
        \mathcal{F}_{r}(A_1^s\widetilde{P}_{2}^{t_{r}-s}) \le& \E\bigg[\phi_k^{s-1}\left(\frac{\ln(1+\varepsilon_{x,r})}{\bar{a}_k}\right)\exp\left(-\frac{t_r-s+1}{T}\frac{\varepsilon_{x,r}^2}{4\gamma_1}\right) \\&+ \varphi_k^{s-1}\left(\frac{\ln(1+\varepsilon_{x,r})}{\bar{a}_k}\right)\exp\left(-\frac{t_r-s+1}{T}\frac{\varepsilon_{x,r}^2}{4\gamma_1}\right) \\&+ \psi^{s-1}\left(\frac{\ln(1+\varepsilon_{y,r})}{\bar{w}}\right)\exp\left(-\frac{t_r-s+1}{T}\frac{\varepsilon^2_{y,r}Z^r}{4\bar{w}}\right)\bigg]\\
        \le& \mathcal{F}_{r}(A_1^{s-1}\widetilde{P}_{2}^{t_{r}-s+1}),
    \end{aligned}
\end{equation*}
which completes the proof.
\end{proof}

In \cref{lemma:a1}, we have proven that $\mathcal{F}_{r}(\widetilde{P}_{2}^{t_{r}})\le \delta$ and we will show that $\mathcal{F}_r(A_1^s\widetilde{P}_2^{t_r-s})\le \mathcal{F}_r(A_1^{s-1}\widetilde{P}_2^{t_r-s + 1})$ in \cref{lemma:6}. Thus we have $\mathcal{F}_r(A_1^{t_r})\le\mathcal{F}_r(\widetilde{P}_2^{t_r})\le \delta$ by induction. Meanwhile, according to \cref{thm:5} we have that
\begin{align*}
    (1-(4+\frac{2}{\xi^*-\varepsilon})\varepsilon_{x,r-1})W_{\varepsilon}\le Z_r \le W_{\varepsilon}
\end{align*}
with probability $1-2\delta$.
During the stage $r$, the Algorithm $A_{1}$ return a solution satisfying

\begin{align*}
     &\sum_{j=t_r+1}^{t_{r+1}}X_{jk}^{A_1}\le\frac{(1+\varepsilon_{x,r})t_r}{T}U_r, \forall k \in \mathcal{K} \\
    &\sum_{j=t_r+1}^{t_{r+1}}(\bar{a}_{k}-X_{jk}^{A_1})\le\frac{(1+\varepsilon_{x,r})t_r}{T}((1-\varepsilon)T\bar{a}_{k}-L_{k}), \forall k \in \mathcal{K}\\
    &\sum_{j=t_r+1}^{t_{r+1}}Y_{j}^{A_1}\ge (1-\varepsilon_{y,r})\frac{t_r}{T}Z^r \ge (1-\varepsilon_{y,r})\frac{t_r}{T}(1-(4+\frac{2}{\xi^*-\varepsilon})\varepsilon_{x,r-1})W_{\varepsilon}
\end{align*}
with probability at least $1-3\delta$, since
\begin{align*}
    &P\left( \bigg\{\sum_{j=t_r+1}^{t_{r+1}}X_{jk}^{A_1}\in\Big[\frac{t_{r}}{T}\Big((1+\varepsilon_{x,r})L_{k}-(\varepsilon(1+\varepsilon_{x,r})-\varepsilon_{x,r})T\bar{a}_{k}\Big),\frac{(1+\varepsilon_{x,r})t_r}{T}U_r\Big],\forall k\in\mathcal{K}\bigg\}\right. \\
    &\qquad\left.\bigcap\bigg\{\sum_{j=t_r+1}^{t_{r+1}}Y^{A_1}_{j}\geq(1-\varepsilon_{y,r})\frac{t_r}{T}Z^r \bigg\}\bigcap\bigg\{ Z^r \in  \Big[\frac{t_{r}W_{\varepsilon}}{T}\Big(1-\big(4+\frac{1}{\xi^*-\varepsilon}\big)\varepsilon_{x,r}\Big),  \frac{t_{r}W_{\varepsilon}}{T}\Big(1+\big(\frac{1}{\xi^*-\varepsilon}\big)\varepsilon_{x,r}\Big)\Big]\bigg\}\right)\\
    &\geq (1-\delta) (1-2\delta)\geq 1-3\delta.
\end{align*}
Now considering all the stages, for the upper bound, we have
\begin{align}\label{equ:42}
     \sum_{r=0}^{l-1}\sum_{j=t_r+1}^{t_{r+1}}X_{jk}^{A_1}\le\sum_{r=0}^{l-1}\frac{(1+\varepsilon_{x,r})t_r}{T}U_{k}\le U_{k}. 
 \end{align}
    For the lower bound, we have
\begin{equation}\label{equ:43}
\begin{aligned}
    (1-\varepsilon)T\bar{a}_{k}-\sum_{r=0}^{l-1}\sum_{j=t_r+1}^{t_{r+1}}X_{jk}^{A_1}&=\sum_{r=0}^{l-1}\sum_{j=t_r+1}^{t_{r+1}}(\bar{a}_{k}-X_{jk}^{A_1})\\&\le\sum_{r=0}^{l-1}\frac{(1+\varepsilon_{x,r})t_r}{T}((1-\varepsilon)T\bar{a}_{k}-L_{k})\\&\le(1-\varepsilon)T\bar{a}_{k}-L_{k}.
    \end{aligned}
\end{equation}
which is equivalent to
\begin{align*}
    \sum_{r=0}^{l-1}\sum_{j=t_r+1}^{t_{r+1}}X_{jk}^{A_1}\ge L_k.
\end{align*}
And for the revenue, we have
\begin{equation}\label{equ:44}
    \begin{aligned}
    \sum_{r=0}^{l-1}\sum_{j=t_r+1}^{t_{r+1}}Y_{j}^{A_1}&\ge\sum_{r=0}^{l-1}(1-\varepsilon_{y,r})\frac{t_r(1-(4+\frac{2}{\xi^*-\varepsilon})\varepsilon_{x,r-1}}{T})W_{\varepsilon}
    \\&\ge\sum_{r=0}^{l-1}(1-\varepsilon_{y,r})\frac{t_r(1-(4+\frac{2}{\xi^*-\varepsilon})\varepsilon_{x,r-1})}{T}(1-\frac{\varepsilon}{\xi^*})W_0\\&\ge (1-O(\frac{\varepsilon}{\xi^*-\varepsilon}))W_0.
    \end{aligned}
\end{equation}
since $\delta = \frac{\varepsilon}{3l}$, the inequalities~\eqref{equ:42}-\eqref{equ:44} hold with probability at least $1-\varepsilon$, which completes the proof of \cref{thm:4}.

\section{Proof of \cref{thm:6}}\label{appendix:thm6}
\thmknownth*
\begin{proof}
\textbf{RHS:}
This side takes the same techniques as in \cref{lemma:5ub}. First, the dual of LP~\eqref{equ:5} in \cref{subsec:framework} is
\begin{equation}\label{equ:48}
    \begin{aligned}
        \min_{\alpha,\beta,\rho}\ &\sum_{k\in\mathcal{K}} \alpha_{k}U_{k}-\sum_{k\in\mathcal{K}}\beta_{k}L_{k}+\sum_{j\in\mathcal{J}}T p_{j}\rho_j \\
		s.t.\ &\sum_{k\in\mathcal{K}}(\alpha_{k}-\beta_{k})a_{ijk}+\rho_{j}\ge0\ \forall i\in\mathcal{I},j\in\mathcal{J}\\
		&\sum_{k\in\mathcal{K}}T\bar{a}_{k}\beta_{k}=1\\
 		&\alpha_{k},\beta_{k},\rho_{j}\ge 0,k\in\mathcal{K}, j\in\mathcal{J}.
    \end{aligned}
\end{equation} 
We denote the optimal solution of LP~\eqref{equ:5} in \cref{subsec:framework} and LP~\eqref{equ:48} as $(x_{ij}^{*},\xi^{*})$ and $(\alpha_k^{*},\beta_{k}^{*},\rho_{k}^{*})$ respectively.

According to the KKT conditions\citep{boyd2004convex}, we have that
\begin{equation}\label{equ:kkt}
    \begin{aligned}
    &\sum_{k\in\mathcal{K}}(\alpha^{*}_{k}-\beta^{*}_{k})a_{ijk}x_{ij}^{*}+\rho_{j}^{*}x_{ij}^{*}=0\\
    &\sum_{k\in\mathcal{K}}T\bar{a}_{k}\beta^{*}_{k}=1\\
    &\rho_j^{*}(\sum_{i\in\mathcal{I}} x_{ij}^{*}-1)=0 \\
    &\alpha_{k}^{*}(\sum_{ij}T p_{j}a_{ijk}x_{ij}^{*}-U_k)=0\\
    &\beta_{k}^{*}(L_{k}+\xi^{*} T\bar{a}_{k}-\sum_{ij}T p_{j}a_{ijk}x_{ij}^{*})=0
     \end{aligned}
\end{equation}

Similarly, the dual of sampled LP~\eqref{equ:47} in \cref{alg:M} is

\begin{equation}\label{equ:50}
    \begin{aligned}
        \min_{\alpha,\beta,\rho}\ &\sum_{k\in\mathcal{K}} \alpha_{k}\frac{t_r}{T}U_{k}-\sum_{k\in\mathcal{K}}\beta_{k}\frac{t_r}{T}L_{k}+\sum_{j\in\mathcal{S}_r}\rho_{j} \\
		s.t.\ &\sum_{k\in\mathcal{K}}(\alpha_{k}-\beta_{k})a_{ijk}+\rho_{j}\ge0\ \forall i\in\mathcal{I},j\in\mathcal{S}_r\\
		&\sum_{k\in\mathcal{K}}t_r\bar{a}_{k}\beta_{k}=1\\
 		&\alpha_{k},\beta_{k},\rho_{j}\ge 0,k\in\mathcal{K}, j\in\mathcal{S}_r
    \end{aligned}
\end{equation}
where $\mathcal{S}_r$ denotes the request set in stage $r$.
Since $(\alpha_k^{*},\beta_{k}^{*},\rho_{k}^{*})$ is a feasible solution to the LP~\eqref{equ:48}, the solution $(\frac{T\alpha_k^{*}}{t_r},\frac{T\beta_{k}^{*}}{t_r},\frac{T\rho_{k}^{*}}{t_r})$ is feasible for the dual of sample LP~\eqref{equ:50}, we have that \begin{equation}\label{equ:51}
    \begin{aligned}
    \widehat{\xi}+2\epsilon_{x,r}=&\frac{T}{t_r}\left(\sum_{k\in\mathcal{K}} \alpha^{*}_{k}\frac{t_r}{T}U_{k}-\sum_{k\in\mathcal{K}}\beta^{*}_{k}\frac{t_r}{T}L_{k}+\sum_{j\in\mathcal{S}_r}\rho^{*}_{j}\right)\\
    &=\underbrace{\frac{T}{t_r}\sum_{k\in\mathcal{K}} \alpha_{k}^{*}\left(\frac{t_r}{T}U_{k}-\sum_{j\in \mathcal{S}_r,i\in\mathcal{I}}a_{ijk}x_{ij}^{*}\right)}_{\textcircled{1}}+\underbrace{\frac{T}{t_r}\sum_{k\in\mathcal{K}}\beta_{k}^{*}\left(\sum_{j\in \mathcal{S}_r,i\in\mathcal{I}}a_{ijk}x_{ij}^{*}-\frac{t_r}{T }L_{k}\right)}_{\textcircled{2}}\\&+\underbrace{\frac{T}{t_r}\sum_{j\in \mathcal{S}_r}\left(\rho^{*}_{j}+\sum_{i\in\mathcal{I},k\in\mathcal{K}}(\alpha^{*}_{k}-\beta^{*}_{k})a_{ijk}x_{ij}^{*}\right)}_{\textcircled{3}}\\
\end{aligned}
\end{equation}
\begin{align*}
    =\underbrace{\frac{T}{t_r}\sum_{k\in\mathcal{K}} \alpha_{k}^{*}\left(\frac{t_r}{T}U_{k}-\sum_{j\in \mathcal{S}_r,i\in\mathcal{I}}a_{ijk}x_{ij}^{*}\right)}_{\textcircled{1}}+\underbrace{\frac{T}{t_r}\sum_{k\in\mathcal{K}}\beta_{k}^{*}\left(\sum_{j\in \mathcal{S}_r,i\in\mathcal{I}}a_{ijk}x_{ij}^{*}-\frac{t_r}{T }L_{k}\right)}_{\textcircled{2}}
\end{align*} 
where the final equality follows from the KKT conditions \eqref{equ:kkt}, i.e. $\sum_{k\in\mathcal{K}}(\alpha^{*}_{k}-\beta^{*}_{k})a_{ijk}x_{ij}^{*}+\rho_{j}^{*}x_{ij}^{*}=0$ and $\rho_j^{*}(\sum_{i\in\mathcal{I}} x_{ij}^{*}-1)=0$, so that $\sum_{i\in\mathcal{I}, k\in\mathcal{K}}(\alpha^{*}_{k}-\beta^{*}_{k})a_{ijk}x_{ij}^{*}+\sum_{i\in\mathcal{I}}\rho_{j}^{*}x_{ij}^{*}=\sum_{i\in\mathcal{I}, k\in\mathcal{K}}(\alpha^{*}_{k}-\beta^{*}_{k})a_{ijk}x_{ij}^{*}+\rho_{j}^{*}=0$.

For those $k$ such that $L_k+\xi^{*} T\bar{a}_{k} <\sum_{i\in\mathcal{I},j\in\mathcal{J}}T p_{j}a_{ijk}x_{ij}^{*}<U_k $, we know that they have no effect to $\widehat{\xi}$ following the complementary slackness in \eqref{equ:kkt}.
For part $\textcircled{1}$, we only consider the resource $k$ making $\sum_{i\in\mathcal{I},j\in\mathcal{J}}T p_{j}a_{ijk}x_{ij}^{*}=U_k$. By \cref{lemma:2}
, it is easy to get that
\begin{equation}\label{equ:52}
    \begin{aligned}
        P(\sum_{j\in[t_{r}],i\in\mathcal{I}}a_{ijk}x_{ij}^{*}
        \le(1-\epsilon_{x,r})\frac{t_r}{T}U_{k})
        \le \exp\bigg(-\frac{\frac{t_r}{T}\epsilon_{x,r}^2}{2(1+\frac{\epsilon_{x,r}}{3})\frac{\bar{a}_{k}}{U_k}}\bigg)
    \end{aligned}
\end{equation} where $\mathbb{E}(\sum_{i\in\mathcal{I}}a_{ijk}x_{ij}^{*})=\frac{U_{k}}{T}$, $\forall j\in \mathcal{J}$.

Similarly, for part $\textcircled{2}$, we only consider the constraints $k$ making $\sum_{i\in\mathcal{I},j\in\mathcal{J}}T p_{j}a_{ijk}x_{ij}^{*}=L_k+\xi^{*} T\bar{a}_{k}$. Before that, we redefine the r.v. $Y_{jk}=(1+\xi^{*})\bar{a}_{k}-\sum_{i\in\mathcal{I}}a_{ijk}x_{ij}^{*}$. Since $\xi^{*}\in[0,1]$ from Assumption~3, we know that $|Y_{jk}|\le(1+\xi^{*})\bar{a}_{k}$ and $E(Y_{jk})=\frac{T\bar{a}_{k}-L_{k}}{T}$. Therefore, by \cref{lemma:2},
\begin{equation}\label{equ:53}
    \begin{aligned}
        P(\sum_{j\in\mathcal{S}_r}Y_{jk}\le(1-\epsilon_{x,r})\frac{t_{r}}{T}(T\bar{a}_{k}-L_{k}))\le \exp\left(-\frac{\frac{t_r}{T}\epsilon_{x,r}^2}{2(1+\frac{\epsilon_{x,r}}{3})\frac{(1+\xi^{*})\bar{a}_{k}}{T\bar{a}_{k}-L_{k}}}\right)
    \end{aligned}
\end{equation}
Since $\gamma_{2}=\max(\frac{\bar{a}_{k}}{U_{k}}, \frac{\bar{a}_{k}}{T\bar{a}_{k}-L_{k}})=O(\frac{\epsilon^{2}}{\ln (\frac{K}{\varepsilon})})$, and both lower and upper bound are achieved only if $U_k = T\bar{a}_k-L_k$, we have that
\begin{equation}\label{equ:54}
    \begin{aligned}
    &\sum_{j\in\mathcal{S}_r,i\in\mathcal{I}}a_{ijk}x_{ij}^{*}\ge(1-\epsilon_{x,r})\frac{t_r}{T}U_{k}\\
    &\sum_{j\in\mathcal{S}_r}((1+\xi^{*})\bar{a}_{k}-\sum_{i\in\mathcal{I}}a_{ijk}x_{ij}^{*})\ge(1-\epsilon_{x,r})\frac{t_r}{T}(T\bar{a}_{k}-L_{k})\\
    \end{aligned}
\end{equation} w.p. at least $1-\delta$. 

Therefore, with probability at least $1-\delta$ we have that
\begin{equation}\label{equ:55}
    \begin{aligned}
    &\textcircled{1}+\textcircled{2}\\
    &=\frac{T}{t_r}\left(\sum_{k\in\mathcal{K}} \alpha_{k}^{*}\bigg(\frac{t_r}{T}U_{k}-\sum_{j\in \mathcal{S}_r,i\in\mathcal{I}}a_{ijk}x_{ij}^{*}\bigg)+\sum_{k\in\mathcal{K}}\beta_{k}^{*}\bigg(\sum_{j\in \mathcal{S}_r,i\in\mathcal{I}}a_{ijk}x_{ij}^{*}-\frac{t_r}{T }L_{k}\bigg)\right)\\
  &=\frac{T}{t_r}\left(\sum_{k\in\mathcal{K}} \alpha_{k}^{*}\bigg(\frac{t_r}{T}U_{k}-\sum_{j\in \mathcal{S}_r,i\in\mathcal{I}}a_{ijk}x_{ij}^{*}\bigg)+\sum_{k\in\mathcal{K}}\beta_{k}^{*}\bigg(\frac{t_r}{T }(T\bar{a}_{k}-L_{k})-\sum_{j\in \mathcal{S}_r}((1+\xi^{*})\bar{a}_{k}-\sum_{i\in\mathcal{I}}a_{ijk}x_{ij}^{*})\bigg)\right.\\&\left.+\sum_{k\in\mathcal{K}}\beta_{k}^{*}\xi^{*}t_r \bar{a}_{k}\right)\\
  &\le\frac{T}{t_r}\bigg(\epsilon_{x,r}\sum_{k\in\mathcal{K}} \alpha_{k}^{*}\frac{t_r}{T}U_{k}+\epsilon_{x,r}\sum_{k\in\mathcal{K}}\beta_{k}^{*}\frac{t_r}{T }(T\bar{a}_{k}-L_{k})+\sum_{k\in\mathcal{K}}\beta_{k}^{*}\xi^{*}t_r\bar{a}_{k}\bigg)\\
  &=\frac{T}{t_r}\bigg(\epsilon_{x,r}(\sum_{k\in\mathcal{K}} \alpha_{k}^{*}\frac{t_r}{T}U_{k}-\sum_{k\in\mathcal{K}}\beta_{k}^{*}\frac{t_r}{T }L_{k})+(\epsilon_{x,r}+\xi^{*})\sum_{k\in\mathcal{K}}\beta_{k}^{*}t_r\bar{a}_{k}\bigg)\\
  &\le\frac{T}{t_r}\bigg(\epsilon_{x,r}\xi^{*}\frac{t_r}{T}+(\epsilon_{x,r}+\xi^{*})\frac{t_r}{T}\bigg)\\
  &=\xi^{*}+(\xi^{*}+1)\epsilon_{x,r}\\
  &\le\xi^{*}+2\epsilon_{x,r}
\end{aligned}
\end{equation} where the first inequality from $\sum_{j\in\mathcal{S}_r,i\in\mathcal{I}}a_{ijk}x_{ij}^{*}\ge(1-\epsilon_{x,r})\frac{t_r}{T}U_{k}$ and $\sum_{j\in\mathcal{S}_r}((1+\xi^{*})\bar{a}_{k}-\sum_{i\in\mathcal{I}}a_{ijk}x_{ij}^{*})\ge(1-\epsilon_{x,r})\frac{t_r}{T}(T\bar{a}_{k}-L_{k})$; the second inequality from $\sum_{k\in\mathcal{K}}\beta^{*}_{k}T\bar{a}_{k}=1$ and $\xi^{*}=\sum_{k\in\mathcal{K}} \alpha^{*}_{k}U_{k}-\sum_{k\in\mathcal{K}}\beta^{*}_{k}L_{k}+\sum_{j\in\mathcal{J}}T p_{j}\rho^{*}_j\ge\alpha^{*}_{k}U_{k}-\sum_{k\in\mathcal{K}}\beta^{*}_{k}L_{k}$; the last inequality follows from the fact $\xi^*\leq 1$.

\paragraph{LHS:}
We design an algorithm~$\widetilde{P}_{3}$ by allocating request $j$ to channel $i$ with probability $(1-\epsilon_{x,r})x_{ij}^{*}$, where $(x_{ij}^{*},\xi^{*})$ is the optimal solution for LP~\eqref{equ:47}. Following the very similar proofs in \cref{sec:known distribution} and letting $\gamma_2 =\max(\frac{\bar{a}_{k}}{U_{k}}, \frac{\bar{a}_{k}}{T\bar{a}_{k}-L_{k}})$, $\varepsilon_{x,r} = \sqrt{\frac{4\gamma_2T\ln(K/\delta)}{t_r}}$, we have that
\begin{align*}
    &P\bigg(\sum_{j=1}^{t_r}X^{\widetilde{P}_{3}}_{jk}\ge \frac{t_r}{T}U_k\bigg)\le \exp\bigg(-\frac{t_r\epsilon_{x,r}^2/T}{2(1-\frac{2}{3}\epsilon_{x,r})\frac{\bar{a}_{k}}{U_k}}\bigg) \le\frac{\delta}{2K}
\end{align*}
where the second inequality follows from the definition of $t_r$ and $\gamma_2=O\big(\frac{\epsilon^{2}}{\ln (K/\varepsilon)}\big)$, which result in $\varepsilon_{x,r}<1$.
Defining $Y_{jk}^{\widetilde{P}_{3}}=(1-\epsilon_{x,r})(1+\xi^{*})\bar{a}_{k}-X^{\widetilde{P}_{3}}_{jk}$, we have that $E(Y_{jk}^{\widetilde{P}_{3}})\le \frac{(1-\epsilon_{x,r})(T\bar{a}_{k}-L_{k})}{T}$ and $|Y_{jk}|\le(1-\epsilon_{x,r})(1+\xi^{*})\bar{a}_{k}$, since $|X^{\widetilde{P}_{3}}_{jk}|\le(1-\epsilon_{x,r})\bar{a}_{k}$. Therefore, we have
\begin{align*}
    &P\bigg(\sum_{j=1}^{t_r}Y^{\widetilde{P}_{3}}_{jk}\ge\frac{t_{r}}{T}(T\bar{a}_{k}-L_{k})\bigg)
		\le \exp\bigg(-\frac{t_r\epsilon_{x,r}^2/T}{2(1-\frac{2}{3}\epsilon_{x,r})\frac{(1-\epsilon_{x,r})(1+\xi^{*})\bar{a}_{k}}{T\bar{a}_{k}-L_{k}}}\bigg)
		\le\frac{\delta}{2K}
\end{align*}
Thus, with probability at least $1-\delta$, we could find  a solution whose consumed resource for each $k$ is in $\bigg[\frac{t_r}{T}(L_k+(\xi^{*}-2\epsilon_{x,r})T\bar{a}_{k}),\frac{t_r}{T}U_k\bigg]$ by Algorithm $\widetilde{P}_{3}$. According to the definition of $\widehat{\xi}$ in \cref{alg:M}, we have that $\widehat{\xi}\ge \xi^{*}-4\epsilon_{x,r}$.

In conclusion, we have that $\xi^{*}-4\epsilon_{x,r}\le\widehat{\xi}\le\xi^{*}$, w.p. $1-2\delta$.

\end{proof}

Now $\widehat{\xi}_{r}$ can be viewed as an good estimate for $\xi^{*}$ from 0, if we have enough data.

\section{Proof of \cref{thm:7}}\label{appendix:g}

\begin{proof}
 We mainly consider two events, namely, 
\begin{align*}
    &G_{1}=\left\{\sum_{j=1}^{T}Y^{A_{2}}_{j}\ge(1-O(\frac{\varepsilon}{\xi^{*}-4\sqrt{\varepsilon}-\varepsilon}))W_0, \sum_{j=1}^{T}X_{jk}^{A_{2}}\in[L_{k},U_{k}] \forall k\in\mathcal{K}\right\},\\
    &G_{2}=\left\{\xi^{*}-4\sqrt{\varepsilon}\le\xih_{0}\le\xi^{*}\right\}.
\end{align*}

\textbf{Step 1}.

When initializing Algorithm~6, we use the first $\varepsilon T$ incoming requests to estimate the optimal measure of feasibility $\xi^{*}$. From the \cref{thm:6}, we have that $\mathrm{P}\big(\{ \xi^{*}-4\sqrt{\varepsilon}\le\xih_{0}\le\xi^{*}\}\big)\ge1-2\delta$, choosing $\delta=\frac{\varepsilon}{3l+2}$.

\textbf{Step 2}.

We investigate the conditional event $G_{1}|G_{2}$. Under the assumption  $\frac{\sqrt{\varepsilon}}{1-\sqrt{\varepsilon}}+4\sqrt{\varepsilon}+\varepsilon\le\xi^{*}$, if $\xi^{*}-4\sqrt{\varepsilon}\le\xih_{0}$, we have $\xih_{0}\ge\xi^{*}-4\sqrt{\varepsilon}\ge\frac{\sqrt{\varepsilon}}{1-\sqrt{\varepsilon}}+\varepsilon$. Besides, we know that the domain 
\begin{align*}
     &L_{k}+\xih_{0} T\bar{a}_{k}\le\sum_{i\in\mathcal{I},j\in\mathcal{J}} Tp_{j}a_{ijk}x_{ij}\le U_{k}, \forall k\in\mathcal{K}\\
    &\sum_{i\in\mathcal{I}} x_{ij}\le 1, \forall j\in\mathcal{J}\\
    &x_{ij}\ge 0, \forall i\in\mathcal{I}, j\in\mathcal{J},
\end{align*}
is feasible under the assumption.

When $\gamma_{3}=O\big(\frac{\varepsilon^{2}}{\ln (K/\varepsilon)}\big)$, according to the \cref{thm:4}, we have
    \begin{align*}
        \mathrm{P}\left(\bigg\{\sum_{j=1}^{T}Y^{A_{2}}_{j}\ge(1-O(\frac{\varepsilon}{\xih_{0}-\varepsilon}))W_0\bigg\}\bigcap\bigg\{ \sum_{j=1}^{T}X_{jk}^{A_{2}}\in[L_{k},U_{k}]\bigg\}\right)\ge 1-3l\delta,
    \end{align*} where $l=\log_{2}\big(\frac{1}{\varepsilon}\big)$.
Due to $\xih_{0}\ge\xi^{*}-4\sqrt{\varepsilon}$, we also could derive that $\mathrm{P}(G_{1}|G_{2})\ge 1-3l\delta$.

\textbf{Step 3}.

Now we can verify that
\begin{align*}
    \mathrm{P}(G_{1}^{c})&=\mathrm{P}(G_{1}^{c}|G_{2})\mathrm{P}(G_{2})+\mathrm{P}(G_{1}^{c}|G_{2}^{c})\mathrm{P}(G_{2}^{c})\\
    &\le\mathrm{P}(G_{1}^{c}|G_{2})+\mathrm{P}(G_{2}^{c})\\
    &\le 3l\delta+2\delta=\epsilon.
\end{align*}

Therefore, $\mathrm{P}(G_{1})\ge 1-\varepsilon$, if $\varepsilon\geq 0$ and $\tau_1=\frac{\sqrt{\varepsilon}}{1-\sqrt{\varepsilon}}$ such that $\tau_{1}+4\sqrt{\varepsilon}+\varepsilon\le\xi^{*}$ and $\gamma_{3}=O(\frac{\varepsilon^{2}}{\ln (\frac{K}{\varepsilon})})\le \max(\frac{\bar{a}_{k}}{U_{k}}, \frac{\bar{a}_{k}}{(1-\epsilon)T\bar{a}_{k}-L_{k}}, \frac{\bar{w}}{W_{\epsilon+\tau_{1}}})$.

\end{proof}

\end{document}